\documentclass{article}

\newif\ifdraft
\draftfalse

\usepackage[utf8]{inputenc}
\usepackage{amsmath}
\usepackage{amssymb}
\usepackage{amsthm}
\usepackage{centernot}
\usepackage{todonotes}
\usepackage[numbers]{natbib}
\usepackage[pdftex,colorlinks,pagebackref=true]{hyperref}
\bibliographystyle{plainnat}

\usepackage[margin=1in]{geometry}

\newcommand{\ca}{{\cal A}}
\newcommand{\cb}{{\cal B}}
\newcommand{\cd}{{\cal D}}

\newcommand{\ch}{{\cal H}}

\newcommand{\cw}{{\cal W}}
\newcommand{\cl}{{\cal L}}
\newcommand{\cf}{{\cal F}}

\newcommand{\cx}{{\cal X}}

\newcommand{\cn}{{\cal N}}

\newcommand{\x}{\mathbf{x} }
\newcommand{\f}{\mathbf{f}}
\newcommand{\y}{\mathbf{y}}
\newcommand{\z}{\mathbf{z}}

\newcommand{\bw}{\mathbf{w}}
\newcommand{\bv}{\mathbf{v}}
\newcommand{\ba}{\mathbf{a}}

\newcommand{\be}{\mathbf{e}}
\newcommand{\bb}{\mathbf{b}}

\newcommand{\bg}{\mathbf{g}}
\newcommand{\bh}{\mathbf{h}}
\newcommand{\bp}{\mathbf{p}}
\newcommand{\bq}{\mathbf{q}}
\newcommand{\br}{\mathbf{r}}

\newcommand{\E}{\mathbb{E}}

\newcommand{\reals}{\mathbb{R}}
\newcommand{\ball}{\mathbb{B}}

\newcommand{\inner}[1]{{\left\langle #1 \right\rangle}}

\newcommand{\sign}{\mathrm{sign}}

\newcommand{\poly}{\mathrm{poly}}
\newcommand{\coef}{\mathrm{co}}

\newcommand{\rob}{\mathrm{rob}}
\newcommand{\opt}{\mathrm{opt}}
\newcommand{\Err}{\mathrm{Err}}
\newcommand{\vomega}{{\boldsymbol{\omega}}}

\newtheorem{theorem}{Theorem}[section]
\newtheorem{lemma}[theorem]{Lemma}

\newtheorem{fact}[theorem]{Fact}

\newtheorem{example}[theorem]{Example}

\newtheorem{counter-example}[theorem]{Counter example}

\newtheorem{open question}[theorem]{Open question}
\newtheorem{corollary}[theorem]{Corollary}

\newtheorem{algorithm}[theorem]{Algorithm}
\newtheorem{definition}[theorem]{Definition}

\newtheorem{claim}{Claim}

\title{Deep Networks Learn Deep Hierarchical Models}
\usepackage{authblk}
\author[1]{Amit Daniely}
\affil[1]{\small Hebrew University of Jerusalem and Google Research Tel Aviv}

\begin{document}

\maketitle

\begin{abstract}

We consider supervised learning with $n$ labels and show that layerwise SGD on residual networks can efficiently learn a class of hierarchical models. This model class assumes the existence of an (unknown) label hierarchy $L_1 \subseteq L_2 \subseteq \dots \subseteq L_r = [n]$, where labels in $L_1$ are simple functions of the input, while for $i > 1$, labels in $L_i$ are simple functions of simpler labels. 

Our class surpasses models that were previously shown to be learnable by deep learning algorithms, in the sense that it reaches the depth limit of efficient learnability. That is, there are models in this class that require polynomial depth to express, whereas previous models can be computed by log-depth circuits.

Furthermore, we suggest that learnability of such hierarchical models might eventually form a basis for understanding deep learning. 
Beyond their natural fit for domains where deep learning excels, we argue that the mere existence of human ``teachers" supports the hypothesis that hierarchical structures are inherently available. By providing granular labels, teachers effectively reveal ``hints'' or ``snippets'' of the internal algorithms used by the brain. We formalize this intuition, showing that in a simplified model where a teacher is partially aware of their internal logic, a hierarchical structure emerges that facilitates efficient learnability.

\end{abstract}

\newpage
\tableofcontents
\newpage


\section{Introduction}

A central objective in deep learning theory is to demonstrate that gradient-based algorithms can efficiently learn a class of models sufficiently rich to capture reality. This effort began over a decade ago, coincidental with the undeniable empirical success of deep learning. Initial theoretical results demonstrated that deep learning algorithms can learn linear models, followed later by proofs for simple non-linear models.

This progress is remarkable, especially considering that until recently, no models were known to be provably learnable by deep learning algorithms. Moreover, the field was previously dominated by hardness results indicating severe limitations on the capabilities of neural networks. However, despite this progress, learning linear or simple non-linear models is insufficient to explain the practical success of deep learning.

In this paper, we advance this research effort by showing that deep learning algorithms---specifically layerwise SGD on residual networks---provably learn \emph{hierarchical models}. We consider a supervised learning setting with $n$ possible labels, where each example is associated with a subset of these labels. Let $\f^* \colon \mathcal{X} \to \{\pm 1\}^n$ be the ground truth labeling function. We assume an {\em unknown} hierarchy of labels $L_1 \subseteq L_2 \subseteq \dots \subseteq L_r = [n]$ such that labels in $L_1$ are simple functions (specifically, polynomial thresholds) of the input, while for $i > 1$, any label in $L_i$ is a simple function of simpler labels (i.e., those in $L_{i-1}$).

We suggest that the learnability of hierarchical models offers a compelling basis for understanding deep learning. First, hierarchical models are natural in domains where neural networks excel. In computer vision, for instance, a first-level label might be ``this pixel is red'' (i.e.\ the input itself); a second-level label might be ``curved line'' or ``dark region''; and a third-level label might be ``leaf'' or ``rectangle'', and so on. Similar hierarchies exist in text and speech processing. Indeed, this hierarchical structure motivated the development of successful architectures such as convolutional and residual networks.

Second, one might even argue further that the mere existence of human ``teachers'' supports the hypothesis  that hierarchical labeling exists and can be supplied to the algorithm. Consider the classic problem of recognizing a car in an image. 
Early AI approaches (circa 1970s--80s) failed because they attempted to manually codify the cognitive algorithms used by the human brain. 
This was superseded by machine learning, which approximates functions based on input-output pairs. While this data-driven approach has surpassed human performance, the standard narrative of its success might be somewhat misleading.

We suggest that recent breakthroughs are not solely due to ``learning from scratch'', but also because models are trained on datasets containing a vast number of granular labels. These labels represent a middle ground between explicit programming and pure input-output learning; they serve as ``hints'' or intermediate steps for learning complex concepts. Although we lack full access to the brain's internal algorithm, we can provide ``snippets'' of its logic. By identifying lower-level features---such as windows, wheels, or geometric shapes---we effectively decompose the task into a hierarchy. 

At a larger scale, we can consider the following perspective for the creation of LLMs. From the 1990s to the present, humanity created the internet (websites, forums, images, videos, etc.). As a byproduct, humanity implicitly provided an extensive number of labels and examples. Because these labels are so numerous—ranging from the very simple to the very complex—they are likely to possess a hierarchical structure. Following the creation of the internet, huge models were trained on these examples, succeeding largely as a result of this structure (alongside, of course, the extensive data volume and compute power). In a sense, the evolution of the internet and modern LLMs can be viewed as an enormous collective effort to create a circuit that mimics the human brain, in the sense that all labels of interest are effectively a composition of this circuit and a simple function.

We present a simplified formalization of this intuition. We model the human brain as a computational circuit, where each label (representing a ``brain snippet'') corresponds to a majority vote over a subset of the brain's neurons. To formalize the postulate that these labels are both granular and diverse, we assume that the specific collections of neurons defining each label are chosen at random prior to the learning process. We demonstrate that this setting yields a hierarchical structure that facilitates efficient learnability by residual networks. Crucially, neither the residual network architecture nor the training algorithm relies on knowledge of this underlying label hierarchy.

Finally, we note that hierarchical models surpass previous classes of models shown to be learnable by SGD. To the best of our knowledge, prior results were limited to models that can be realized by log-depth circuits. In contrast, hierarchical models  \emph{reach the depth limit of efficient learnability}. For any polynomial-sized circuit, we can construct a corresponding hierarchical model learnable by SGD on a ResNet, effectively computing the circuit as one of its labels.

\paragraph{Related Work}
Linear, or fixed representation models are defined by a fixed (usually non-linear) feature mapping followed by a learned linear mapping. This includes kernel methods, random features \citep{RahimiRe07}, and others.
Several papers in the last decade have shown that neural networks can provably learn various linear models, e.g. \cite{andoni2014learning, daniely2016toward, du2017gradient, daniely2017sgd, li2018learning, cao2019generalization, allen2019learning, daniely2019neural, daniely2020memorizing}.
Several works consider model-classes which go beyond fixed representations, but still can be efficiently  learned by gradient based methods on neural networks. 
One line of work shows learnability of parities under non-uniform distributions, or other models directly expressible by neural networks of depth two, e.g. \cite{livni2014computational, ge2017learning,tian2017analytical, frei2020agnostic, daniely2020learning, yehudai2020learning,vardi2021learning,bietti2022learning,bruna2023single, cornacchia2023mathematical}. 
Closer to our approach are \cite{abbe2021staircase,allen2020backward, daniely2024redex, wang2025learning} that consider certain hierarchical models. 
As mentioned above, we believe that our work is another step towards models that can capture reality. From a more formal perspective, we improves over previous work in the sense that the models we consider can be \emph{arbitrarily deep}. In contrast, all the mentioned papers consider models that can be realized by networks of logarithmic depth. In fact, with the exception of \cite{wang2025learning} which considers composition of permutations, depth two suffices to express all the above mentioned models.

Another line of related work is \cite{li2025noise,huang2025optimal,mossel2016deep,koehler2022reconstruction} which argue that deep learning is successful due to hierarchical structure. This series of papers give an example to a hierarchical model that is efficiently learnable, but it is conjectured that it requires deep architecture to express. Additional attempts to argue that hierarchy is essential for deep learning includes \cite{patel2016probabilistic,mhaskar2017when,bruna2013invariant}

\section{Notation and Preliminaries}

We denote vectors using bold letters (e.g., $\x,\y,\z,\bw,\bv$) and their coordinates using standard letters. For instance, $x_i$ denotes the $i$-th coordinate of $\x$. Likewise, we denote vector-valued functions and polynomials (i.e., those whose range is $\reals^d$) using bold letters (e.g., $\f,\bg,\bh,\bp,\bq,\br$), and their $i$-th coordinate using standard letters. We will freely use broadcasting operations. For instance, if $\vec\x = (\x_1,\ldots,\x_n)$ is a sequence $n$ of vectors in $\reals^d$ and $g$ is a function from $\reals^d$ to some set $Y$, then $g(\vec\x)$ denotes the sequence $(g(\x_1),\ldots,g(\x_n))$. Similarly, for a matrix $A\in M_{q,d}$, we denote $A\vec\x = (A\x_1,\ldots,A\x_n)$.

For a polynomial $p:\reals^n\to\reals$, we denote by $\|p\|_{\coef}$ the Euclidean norm of the coefficient vector of $p$. We call $\|p\|_{\coef}$ the \emph{coefficient norm} of $p$. For $\sigma:\reals\to\reals$, we denote by $\|\sigma\| = \sqrt{\E_{X\sim\cn(0,1)}[\sigma^2(X)]}$ the $\ell^2$ norm with respect to the standard Gaussian measure.
We denote the Frobenius norm of a matrix $A\in M_{n,m}$ by $\|A\|_F = \sqrt{\sum_{i,j}A^2_{ij}}$, and the spectral norm by $\|A\| = \max_{\|\x\|=1}\|A\x\|$.

We denote by $\reals^{d,n}$ the space of sequences of $n$ vectors in $\reals^d$. More generally, for a set $G$, we let $\reals^{d,G} = \{\vec\x= (\x_g)_{g\in G} : \forall g\in G,\; \x_g\in\reals^d\}$. We denote the Euclidean unit ball by $\ball^d = \{\x\in\reals^d : \|\x\|\le 1\}$.
We denote the point-wise (Hadamard) multiplication of vectors and matrices by $\odot$ and the concatenation of vectors by $(\x|\y)$.
For $\x\in\reals^n$, $A\subseteq [n]$, and $\sigma\in \mathbb{Z}^n$, we use the multi-index notation $\x^{A} = \prod_{i\in A} x_i$ and $\x^{\sigma} = \prod_{i=1}^n x_i^{\sigma_i}$.
For $\f:\cx\to \reals^n$ and $L\subseteq[n]$, we denote by $\f_L:\cx\to \reals^{|I|}$ the restriction $\f_L=(f_{i_1},\ldots,f_{i_k})$, where $L=\{i_1,\ldots,i_k\}$ with $i_1<\ldots<i_k$. More generally, for $\f=(\f_i)_{i\in[n]}:\cx\to \reals^{n,G}$, we denote by $\f_L:\cx\to \reals^{|I|,G}$ the restriction $\f_L=(\f_{i_1},\ldots,\f_{i_k})$

\subsection{Polynomial Threshold Functions}
Fix a set $\cx\subseteq[-1,1]^d$, a function $f:\cx\to \{\pm 1\}$,
a positive integer $K$, and $M>0$. We say that $f$ is a $(K,M)$-PTF if there is a degree $\le K$ polynomial $p:\reals^d\to\reals$ such that  $\|p\|_\coef\le M$ and $\forall \x\in \cx,\;\;p(\x)f(\x)\ge 1$. 
More generally, we say that $f$ a $(K,M)$-PTF of $\bh :\cx\to\reals^s$ if there is a degree $\le K$ polynomial $p:\reals^s\to\reals$ such that  $\|p\|_\coef\le M$ and $\forall \x\in \cx,\;\;p(\bh(\x))f(\x)\ge 1$.
An example of a $(K,1)$-PTF that we will use frequently is a function $f:\{\pm 1\}^d\to\{\pm 1\}$
that depends on $K$ variables. Indeed, Fourier analysis on $\{\pm 1\}^d$ tell us that
$f$ is a restriction of a degree $\le K$ polynomial $p$ with $\|p\|_\coef=1$. For this polynomial we have $\forall \x\in \cx,\;\;p(\x)f(\x)= 1$.

We will also need a more refined definitions of PTFs, which allows to require two sided inequity $B\ge p(\x)f(\x)\ge 1$, as well as some robustness to perturbation of $\x$. 
To this end, for $\x\in [-1,1]^d$ and $r>0$ we define
\begin{equation}\label{eq:truncated_ball}
    \cb_r(\x) = \left\{\tilde \x\in [-1,1]^d : \|\x-\tilde\x\|_\infty\le r \right\}
\end{equation}
Fix $B\ge 1$ and $1\ge \xi>0$. We say that $f$ is a $(K,M,B,\xi)$-PTF if there is a degree $\le K$ polynomial $p:\reals^d\to\reals$ such that  $\|p\|_\coef\le M$ and
\[
\forall \x\in \cx\;\forall \tilde \x\in \cb_{\xi}(\x),\;\;B\ge p(\tilde\x)f(\x)\ge 1 
\]
Likewise, we say that $f$ is a $(K,M,B,\xi)$-PTF of $\bh=(h_1,\ldots,h_s):\cx\to[-1,1]$ if there is a degree $\le K$ polynomial $p:\reals^s\to\reals$ such that  $\|p\|_\coef\le M$ and
\[
\forall \x\in \cx\;\forall \y\in \cb_\xi(\bh(\x)),\;\;B\ge p(\y)f(\x)\ge 1
\]
Finally, We say that $f$ is a $(K,M,B)$-PTF (resp.\ $(K,M,B)$-PTF of $\bh$) if it is a $(K,M,B,1)$-PTF (resp.\ $(K,M,B,1)$-PTF of $\bh$).
\subsection{Strong Convexity}
Let $W\subseteq\reals^d$ be convex. We say that a differentiable $f:W\to \reals$ is $\lambda$-strongly-convex if for any $\x,\y\in W$ we have
\[
f(\y) \ge f(\x) + \inner{\y-\x,\nabla f(\x)} + \frac{\lambda}{2}\|\y-\x\|^2
\]
We note that if $f$ is strongly convex and $\|\nabla f(\x)\|\le \epsilon$ for $\x\in W$ when $\x$ minimizes $f$ up to an additive error of $\frac{\epsilon^2}{2\lambda}$. Indeed, for any $\y\in W$ we have
\begin{eqnarray}\label{eq:strongly_conv_guarantee}
f(\x) &\le & f(\y) - \frac{\lambda}{2}\|\y-\x\|^2 + \|\y-\x\| \cdot \|\nabla f(\x)\|\nonumber
\\
&=& f(\y)+\frac{\|\nabla f(\x)\|^2}{2\lambda} - \frac{1}{2\lambda}\left(\|\nabla f(\x)\|-\lambda\|\y-\x\|\right)^2
\\
&\le & f(\y)+ \frac{\|\nabla f(\x)\|^2}{2\lambda}\nonumber
\\
&\le & f(\y)+ \frac{\epsilon^2}{2\lambda}\nonumber
\end{eqnarray}

\subsection{Hermite Polynomials}
The results we state next can be found in \cite{andrews1999special}.
The Hermite polynomials $h_0,h_1,h_2,\ldots$ are the sequence of orthonormal polynomials corresponding to the standard Gaussian measure $\mu$ on $\reals$. That is, they are the sequence of orthonormal polynomials obtained by the Gram-Schmidt process of $1,x,x^2,x^3,\ldots \in L^2(\mu)$. The Hermite polynomials satisfy the following recurrence relation

\begin{equation}\label{eq:hermite_rec}
    xh_{n}(x) = \sqrt{n+1}h_{n+1}(x) + \sqrt{n}h_{n-1}(x)\;\;,\;\;\;\;\;h_0(x)=1,\;h_1(x)=x
\end{equation}
or equivalently
\[
    h_{n+1}(x) = \frac{x}{\sqrt{n+1}}h_{n}(x) - \sqrt{\frac{n}{n+1}}h_{n-1}(x)
\]
The generating function of the Hermite polynomials is
\begin{equation}\label{eq:hermite_gen_fun}
    e^{xt - \frac{t^2}{2}} = \sum_{n=0}^\infty \frac{h_n(x)t^n}{\sqrt{n!}}
\end{equation}
We also have
\begin{equation}\label{eq:hermite_derivative}
    h_n' = \sqrt{n}h_{n-1}
\end{equation}
Likewise, if $X,Y\sim\cn\left(0,\begin{pmatrix}1&\rho\\\rho&1\end{pmatrix}\right) $
\begin{equation}\label{eq:hermite_prod_exp}
    \E h_i(X)h_j(Y) = \delta_{ij}\rho^{i}
\end{equation}

\section{The Hierarchical Model}
Let $\cx\subseteq [-1,1]^d$ be our instance space. 
We consider the multi-label setting, in which each instance can have anything between $0$ to $n$ positive labels, and each training example comes with a list of all\footnote{We note that in practice it is often the case that an example posses several positive labels (for instance, ``dog" and  ``animal").  However, each training example usually comes with just one of its positive labels. We hope that future work will be able to handle this more realistic type of supervision.} its positive labels. Hence, our goal is to learn the labeling function $\f^*:\cx\to \{\pm 1\}^n$ based on a sample 
\[
S= \{(\x^1,\f^*(\x^1),\ldots,(\x^m,\f^*(\x^m))\} \in \left(\cx\times \{\pm 1\}^{n}\right)^m
\]
of i.i.d.\ labeled examples that comes from a distribution $\cd$ on $\cx$. Specifically, our goal is to find a predictor $\hat \f:\cx\to\reals^n$ whose error, $\Err_\cd(\hat{\f})=\Pr_{\x\sim\cd}\left(\sign(\hat{\f}(\x))\ne \f^*(\x) \right)$, is small.
We assume that there is a hierarchy  of labels (unknown to the algorithm), with the convention that
\begin{itemize}
    \item The first level of the hierarchy consists of labels which are simple ($=$ easy to learn) functions of the input. Specifically, each such label is a  polynomial threshold function (PTF) of the input.
    \item Any label in the $i$'th level of the hierarchy is a simple function (again, a  PTF) of labels from lower levels of the hierarchy. 
\end{itemize}
We next give the formal definition of hierarchy.

\begin{definition}[hierarchy]\label{def:hir_simp}
Let $\cl= \{L_1,\ldots,L_r\}$ be a collection of sets such that $L_1\subseteq L_2\subset\ldots\subseteq L_r = [n]$. We say that $\cl$ is a hierarchy for $\f^*:\cx\to \{\pm 1\}^n$ of complexity $(r,K,M)$ (or $(r,K,M)$-hierarchy for short) if for any $j\in L_1$ the function $f^*_j$ is a $(K,M)$-PTF and for $i\ge 2$, and $j\in L_i$ we have that $\f^*_j = \tilde f_j\circ \f^*_{L_{i-1}}$ for a $(K,M)$-PTF $\tilde f_j:\{\pm 1\}^{|L_{i-1}|}\to\{\pm 1\}$.
\end{definition}

\begin{example}\label{example:o(1)_supp}
Fix $\cl= \{L_1,\ldots,L_r\}$ as in Definition \ref{def:hir_simp}, and recall that a boolean function that depends on $K$ coordinates is a $(K,1)$-PTF.
Hence, if for any $i\ge 2$, any label $j\in L_i$ depends on at most $K$ labels from $L_{i-1}$, and any label $j\in L_1$ is a $(K,1)$-PTF of the input, then $\cl$ is an $(r,K,1)$-hierarchy.    
\end{example}

Assuming that $K$ is constant, our main result will show that given $\poly(n,d,M,1/\epsilon)$ samples, a poly-time SGD algorithm on a residual network of size $\poly(n,d,M,1/\epsilon)$ can learn any function $\f^*:\cx\to\{\pm 1\}^n$ with error of $\epsilon$, provided that $\f^*$ has a hierarchy of complexity $(r,K,M)$ (the algorithm and the network do not depend on the hierarchy, but just on $r,K,M$).

One of the steps in the proof of this result is to show that any $(K,M)$-PTF on a subset of $[-1,1]^n$ is necessarily a $(K,2M,B,\xi)$-PTF for $\xi = \frac{1}{2(n+1)^{\frac{K+1}{2}}KM}$ and $B=2(\max(n,d)+1)^{K/2}M$ (see Lemma \ref{lem:ptf_imp_ref_ptf}).
This is enough for establishing our main result as informally described above. Yet, in some cases of interest, we can have much larger $\xi$ and smaller $B$. In this case, we can guarantee  learnability with smaller network, and less samples and runtime. Hence, we next refine the definition of hierarchy by adding $B$ and $\xi$ as parameters.
\begin{definition}[hierarchy]\label{def:ref_hir_simp}
Let $\cl= \{L_1,\ldots,L_r\}$ be a collection of sets such that $L_1\subseteq L_2\subset\ldots\subseteq L_r = [n]$. We say that $\cl$ is a hierarchy for $\f^*:\cx\to \{\pm 1\}^n$ of complexity $(r,K,M,B,\xi)$ (or $(r,K,M,B,\xi)$-hierarchy for short) if for any $j\in L_1$ the function $f^*_j$ is a $(K,M,B)$-PTF and for $i\ge 2$, and $j\in L_i$ we have that $\f^*_j = \tilde f_j\circ \f^*_{L_{i-1}}$ for a $(K,M,B,\xi)$-PTF $\tilde f_j:\{\pm 1\}^{|L_{i-1}|}\to\{\pm 1\}$.
\end{definition}


\subsection{The ``Brain Dump" Hierarchy}\label{sec:brain_dump_intor}
Fix a domain $\cx\subseteq\{\pm 1\}^d$ and a sequence of functions $G^i:\{\pm 1\}^d\to\{\pm 1\}^d$ for $1\le i\le r$. We assume that $G^0(\x) = \x$, and for any depth $i\in [r]$ and coordinate $j\in [d]$, we have
\begin{equation*}
    \forall \x\in\cx, \quad G^i_j(\x) = h^i_j(G^{i-1}(\x)),
\end{equation*}
where $h^i_j:\{\pm 1\}^d\to\{\pm 1\}$ is a function that depends on $K$ coordinates. We view the sequence $G^1, \ldots, G^r$ as a computation circuit, or a model of a ``brain.''

Suppose we wish to learn a function of the form $f^* = h\circ G^r$, where $h:\{\pm 1\}^d\to\{\pm 1\}$ also depends only on $K$ inputs, given access to labeled samples $(\x,f^*(\x))$. The function $f^*$ can be extremely complex. For instance, $G$ could compute a cryptographic function. In such cases, learning $f^*$ solely from labeled examples $(\x,f^*(\x))$ is likely intractable; if our access to $f^*$ is restricted to the black-box scenario described above, the task appears impossible.
On the other extreme, if we had complete white-box access to $f^*$—meaning a full description of the circuit $G$—the learning problem would become trivial. However, if $G$ truly models a human brain, such transparent access is unrealistic.

Consider a middle ground between these black-box and white-box scenarios. Assume we can query the labeler (the human whose brain is modeled by $G$) for additional information. For instance, if $f^*$ is a function that recognizes cars in an image, we can ask the labeler not only whether the image contains a car, but also to identify specific features: wheels, windows, dark areas, curves, and whatever {\em he thinks} is relevant. Each of these additional labels represents another simple function computed over the circuit $G$. We model these auxiliary labels as random majorities of randomly chosen $G^i_j$'s. We show that with enough such labels, the resulting problem admits a low-complexity hierarchy and is therefore efficiently learnable.

Formally, fix an integer $q$. We assume that for every depth $i\in [r]$, there are $q$ auxiliary labels $f^*_{i,j}$ for $1\le j\le q$, each of which is a signed Majority of an odd number of components of $G^i$. Moreover, we assume these functions are random. Specifically, prior to learning, the labeler independently samples $qr$ functions such that for any $i\in [r]$ and $j\in [q]$,
\begin{equation*}
    f^*_{i,j}(\x) = \sign\left(\sum_{l=1}^d w_l^{i,j}G^i_l(\x)\right),
\end{equation*}
where the weight vectors $\bw^{i,j}\in \reals^{d}$ are independent uniform vectors chosen from
\[
    \cw_{d,k} := \left\{\bw\in \{-1,0,1\}^d : \sum_{l=1}^d |w_l|= k \right\}
\]
for some odd integer $k$.

\begin{theorem}\label{thm:brain_dump_intro}
If $q=\tilde\omega\left(k^2d\log(|\cx|)\right)$ then $\f^*$ has $\left(r,K,O\left(kd^{K}\right),2k+1\right)$-hierarchy w.p.\ $1-o(1)$
\end{theorem}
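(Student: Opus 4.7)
I place the auxiliary labels into nested levels by setting $L_i := \{(i',j) : i'\le i,\; j\in [q]\}$, so level $i$ contains every aux label at depth $\le i$. The base case and the inductive step then amount to exhibiting an appropriate PTF.

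For $i=1$, each $f^*_{1,j}(\x) = \sign\bigl(\sum_l w_l^{1,j} h^1_l(\x)\bigr)$. Since each $h^1_l$ is a $K$-junta on $\{\pm 1\}^d$, Fourier analysis on the hypercube realises it as a multilinear degree-$K$ polynomial of coefficient norm $1$. Summing with the $\pm 1$ weights $w_l^{1,j}$ yields a degree-$K$ polynomial of coefficient norm at most $k$ whose value at any $\x\in\cx$ is an odd integer of absolute value in $\{1,3,\ldots,k\}$ with the correct sign. This realises a $(K,k,k)$-PTF, well inside $(K,O(kd^K),2k+1)$.

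For $i\ge 2$, the crux is to decode each coordinate $G^{i-1}_{l'}$ from the vector $\bh(\x) := (f^*_{i-1,j}(\x))_{j=1}^q$ by a single-layer linear estimator. I would set
\[
R_{l'}(\y) := \frac{d}{kcq}\sum_{j=1}^q w_{l'}^{i-1,j}\, y_j, \qquad c := \binom{k-1}{(k-1)/2}2^{-(k-1)} = \Theta(1/\sqrt{k}),
\]
where $c$ is the variable-influence of a $k$-majority. A direct Fourier calculation on $\{\pm 1\}^d$ gives $\E_{\bw}[w_{l'}^{i-1,j} f^*_{i-1,j}(\x)] = (kc/d)\, G^{i-1}_{l'}(\x)$, so $R_{l'}(\bh(\x))$ is an unbiased estimator of $G^{i-1}_{l'}(\x)$. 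Bernstein's inequality, applied to the $q$ independent summands (each of variance $O(1/q)$ after incorporating the sparsity $k/d$ and the $1/c^2 = \Theta(k)$ rescaling), together with a union bound over the $|\cx|\cdot d \cdot r$ relevant triples, gives
\[
\max_{\x,l',i}\bigl|R_{l'}(\bh(\x))-G^{i-1}_{l'}(\x)\bigr| \le \epsilon = O(1/k)
\]
with probability $1-o(1)$ as soon as $q = \tilde\omega(k^2 d \log|\cx|)$---this is precisely where the stated bound on $q$ is used. I would then lift $h^i_l$ through its Fourier expansion on $\{\pm 1\}^K$ by substituting $R_{l_m}(\y)$ for each Boolean input, and define
\[
p(\y) := c_0\sum_l w_l^{i,j'}\sum_{S\subseteq S_l}\hat h^i_l(S)\prod_{l_m\in S} R_{l_m}(\y),
\]
a degree-$K$ polynomial in $\y$. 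The bounds $\|R_{l'}\|_\coef = O(\sqrt{d/q}) = O(1/k)$ and $\sum_S|\hat h^i_l(S)|\le 2^{K/2}$ (by Parseval and Cauchy--Schwarz) together with the triangle inequality yield $\|p\|_\coef = O(kd^K)$ after choosing $c_0$.

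\textbf{Main obstacle.} Verifying the robust PTF inequality $1 \le p(\tilde\y)f^*_{i,j'}(\x)\le 2k+1$ for every $\tilde\y\in\cb_1(\bh(\x))$---not merely at $\tilde\y = \bh(\x)$---is the delicate step, since $\tilde y_j$ is only constrained to lie in the interval $[0, h_j(\x)]$. An adversary could zero out the summands of $R_{l'}$ that ``point correctly'' while preserving those that point incorrectly, threatening to flip the decoder's sign. Overcoming this requires exploiting the multilinearity of $p$ (whose extrema over the box are attained at vertices), strengthening the Bernstein/union-bound argument so it holds simultaneously at extremal sign patterns, and choosing $c_0$ so that the main signal $\sum_l w_l^{i,j'}G^i_l(\x)$ (an odd integer of absolute value $\ge 1$) dominates the propagated decoding error of order $k\epsilon = O(1)$ by a margin of at least $1$, while keeping $|p(\tilde\y)|$ bounded by $2k+1$.
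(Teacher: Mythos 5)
Your construction is essentially the paper's: the same nested levels $L_i$, the same linear decoder (your $R_{l'}(\y)=\frac{d}{kcq}\sum_j w_{l'}^{i-1,j}y_j$ is exactly the paper's $\frac{1}{q\alpha_{d,k}}(W\Psi(\x))_{l'}$ with $\alpha_{d,k}=\frac{k}{d}\binom{k-1}{(k-1)/2}2^{-(k-1)}$), the same concentration-plus-union-bound over $\cx\times[d]\times[r]$ yielding the $q=\tilde\omega(k^2d\log|\cx|)$ threshold (the paper uses a signed Chernoff variant, Lemma \ref{lem:extended_chernoff}, where you invoke Bernstein --- equivalent here), and the same composition of the lifted junta with the decoder, with the coefficient norm controlled as in Lemma \ref{lem:pol_and_lin_composition}. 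The base case and the $k\epsilon\lesssim 1$ margin bookkeeping also match.

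The one divergence is your ``main obstacle,'' and there your instincts are correct but your proposed repair would not work. The decoder is genuinely not robust over $\cb_1(\bh(\x))$: your own adversarial $\tilde\y$ (zero the correctly-signed summands, keep the rest) sends $R_{l'}(\tilde\y)$ to roughly $-\frac{1-c}{2c}G^{i-1}_{l'}(\x)$, which for $k\ge 3$ has magnitude $\Theta(\sqrt{k})$ and the \emph{wrong} sign, so no strengthening of the union bound or appeal to extremal vertices can salvage $p(\tilde\y)f(\x)\ge 1$ uniformly over $\cb_1(\bh(\x))$ --- that statement is simply false for this $p$ (which, note, is not even multilinear in $\y$, since distinct $R_{l_m}$'s share coordinates). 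The resolution is that you do not need it: the paper's proof of Lemma \ref{lem:PTF_is_PTF_of_random_snapshot} verifies the two-sided inequality $B\ge p(\Psi(\x))f(\x)\ge 1$ only at the point $\y=\Psi(\x)$ itself; a positive robustness radius, where needed downstream, is then recovered generically with polynomial loss from the Lipschitz bound of Lemma \ref{lem:PTF_from_lip_and_bounded} (giving some $\xi=1/\poly$, not $\xi=1$). Robustness enters the argument one level down, where $f^*_{i,j}$ must be a $(K,k,2k+1,\Theta(1/(kL)))$-PTF of $G^{i-1}$ so as to tolerate the $O(1/k)$ decoding error of $G^{i-1}$ --- which is precisely the margin computation you already carried out. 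With that reinterpretation your proof closes and coincides with the paper's.
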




\subsection{Extension to Sequential and Ensemble Models}
We next extend the notion of hierarchy for the common setting in which the input and the output of the learned function is an ensemble of vectors. Let $G$ be some set. We will refer to elements in $G$ as {\em locations}.
In the context of images a natural choice would be $G=[T_1]\times [T_2]$, where $T_1\times T_2$ is the maximal size of an input image. In the context of language a natural choice would be $G=[T]$, where $T$ is the maximal number of tokens in the input. We denote by $\vec\x=(\x_g)_{g\in G}$ ensemble of vectors and let
$\reals^{d,G} = \{\vec\x= (\x_g)_{g\in G} : \forall g\in G,\; \x_g\in\reals^d\}$.

Fix $\cx\subseteq [-1,1]^d$ and let $\cx^{G}$ be our instance space. Assume that there are $n$ labels.
We consider the setting in which each instance at each location can have anything between $0$ to $n$ positive labels. In light of that, our goal is to learn the labeling function $\f^*:\cx^G\to \{\pm 1\}^{n,G}$ based on a sample 
\[
S= \{(\vec\x^1,\f^*(\vec\x^1)),\ldots,(\vec\x^m,\f^*(\vec\x^m))\} \in \left(\cx^G\times \{\pm 1\}^{n,G}\right)^m
\]
of i.i.d.\ labeled examples coming from a distribution $\cd$ on $\cx^G$.
We assume that there is a hierarchy  of labels (unknown to the algorithm), with the convention that
\begin{itemize}
    \item The first level of the hierarchy consists of labels which are simple ($=$ easy to learn) functions of the input. Specifically, each such label at location $g$ is a  PTF of the input {\em near} $g$.
    \item Any label in the $i$'th level of the hierarchy is a simple function  of labels from lower levels. Specifically, each such label at location $g$ is a  PTF of lower level labels, at locations near $g$.
\end{itemize}

We will capture the notion of proximity of locations in $G$ via a {\em proximity mapping}, which designates $w$ nearby locations to any element $g\in G$. We will always consider $g$ itself as a point near $g$.
This is captured in the following definition
\begin{definition}[proximity mapping]
    A {\em proximity mapping} of {\em width} $w$ is a mapping $\be=(e_1,\ldots,e_w):G\to G^{w}$ such that $e_1(g)=g$ for any $g$. 
\end{definition}
For instance, if $G=[T]$, it is natural to choose $\be:G\to G^{2w+1}$ such that $\{e_1(g),\ldots,e_{2w+1}(g)\} = \{g'\in T : |g'-g|\le w\}$. Likewise, if $G=[T]\times [T]$, it is natural to choose $\be:G\to G^{(2w+1)^2}$ such that $\{e_1(g_1,g_2),\ldots,e_{(2w+1)^2}(g_1,g_2)\} = \{(g_1',g_2')\in T\times T : |g_1'-g_1|\le w\text{ and }|g_2'-g_2|\le w\}$. Given a proximity mapping $\be$ and $\vec\x\in\reals^{d,G}$ we define $E_g(\vec\x)$ as the concatenation of all vectors $\x_{g'}$ where $g'$ is close to $g$ according to $\be$. Formally,

\begin{definition}
    Given a proximity mapping $\be:G\to G^w$, $g\in G$ and $\vec\x\in \reals^{d,G}$ we define $E_g(\vec{\x}) = (\x_{e_1(g)}|\ldots|\x_{e_w(g)})\in\reals^{dw}$. Likewise, we let $E(\vec\x)\in \reals^{dw,G}$ be $E(\vec\x)= (E_g(\vec{\x}))_{g\in G}$.
\end{definition}

We next extend the definition of PTF to accommodate the ensemble setting. 

\begin{definition}[hierarchy]\label{def:hir_simp_seq}
Let $\cl= \{L_1,\ldots,L_r\}$ be a collection of sets such that $L_1\subseteq L_2\subset\ldots\subseteq L_r = [n]$. Let $\be:G\to G^w$ be a proximity function.
We say that $(\cl,\be)$ is a hierarchy for $\f^*:\cx^G\to \{\pm 1\}^{n,G}$ of complexity $(r,K,M,B,\xi)$ (or $(r,K,M,B,\xi)$-hierarchy for short) if 
\begin{itemize}
    \item For any $j\in L_1$ there is a $(K,M,B,\xi)$-PTF $\tilde f_j:\cx^w\to\{\pm 1\}$ such that $f_{j,g}(\x)=\tilde f(E_g(\x))$ for any $\x\in \cx^G$ and $g\in G$
    \item For $i\ge 2$, and $j\in L_i$ there is a $(K,M,B,\xi)$-PTF $\tilde f_j:\{\pm 1\}^{|L_1|w}\to\{\pm 1\}$ such that $f_{j,g}(\x)=\tilde f(E_g(\f^*_{L_{i-1}}(\x)))$ for any $\x\in \cx^G$ and $g\in G$
\end{itemize}
\end{definition}
We note that the previous definition of hierarchy (i.e.\ definitions \ref{def:hir_simp} and \ref{def:ref_hir_simp}) is the special case $w=|G|=1$.


\section{Algorithm and Main Result}
Fix $\cx\subseteq [-1,1]^d$, a location set $G$, a proximity mapping $e:G\times N\to G$ of width $w$, some constant integer $K\ge1$, and an activation function $\sigma:\reals\to\reals$ that is Lipschitz, bounded and is not a constant function. We will view $\sigma$ and $K$ as fixed, and will allow big-$O$ notation to hide  constants that depend on $\sigma$ and $K$.

We start by describing the residual network architecture that we will consider. Let $\cx^G$ be our instance space. 
The first  layer (actually, it is two layers, but it will be easier to consider it as one layer) of the network will compute the function
\[
\Psi_1(\vec\x) =  W^1_2\sigma(W^1_1 E(\vec\x)+\bb^1)
\]
We assume that $W^1_2\in \reals^{n\times q}$ is initialized to $0$, while  $(W^1_1,\bb^1)\in \reals^{q\times wd}\times \reals^{q}$ is initialized using {\em $\beta$-Xavier initialization} as defined next.
\begin{definition}[Xavier Initialization]\label{def:Xavier_init}
    Fix $1\ge \beta \ge 0$. A random pair $(W,\bb)\in \reals^{q\times d}\times \reals^{q}$ has {\em $\beta$-Xavier distribution} if the entries of $W$ are i.i.d.\ centered Gaussians of variance $\frac{1-\beta^2}{d}$, and $\bb$ is independent from $W$ and its entries are  i.i.d.\ centered Gaussians of variance $\beta^2$
\end{definition}
The remaining layers are of the form
\[
\Psi_k(\vec\x) =  \vec\x+ W^k_2\sigma(W^k_1 E(\vec\x)+\bb^k)
\]
where $(W^k_1,\bb^k)\in \reals^{q\times (wn)}\times\reals^{q}$ is initialized using $\beta$-Xavier initialization and $W^k_2\in \reals^{n\times q}$ is initialized to $0$. Finally, the last layer computes 
\[
\Psi_D(\vec\x) = W^D\vec\x
\]
for an orthogonal matrix $W^D\in \reals^{n\times n}$. We will denote the collection of weight matrices by $\vec W$, and the function computed by the network by $\hat \f_{\vec W}$.
Fix a convex loss  function $\ell:\reals\to [0,\infty)$ we extend it to a loss $\ell: \reals^{G}\times \{\pm 1\}^G\to [0,\infty)$ by averaging:
\[
\ell(\hat\y,\y) = \frac{1}{|G|}\sum_{g\in G}\ell(\hat y_g\cdot y_g)
\]
Likewise, for a function $\hat \f:\cx^G\to \reals^{n,G}$ and  $j\in [n]$ we define
\[
\ell_{S,j}(\hat\f) =\ell_{S,j}(\hat\f_j) = \frac{1}{m}\sum_{t=1}^m\ell\left(\hat \f_{j}(\vec\x^t),\y^t_{j}\right)
\]
Finally, let
\[
\ell_{S}(\hat\f) = \sum_{j=1}^n\ell_{S,j}(\hat\f)\;\;\;\;\;\text{ and }\;\;\;\;\; \ell_{S}(\vec W)=\ell_{S}\left(\hat\f_{\vec W}\right)
\]
We will consider the following algorithm
\begin{algorithm}\label{alg:main}
    At each step $k=1,\ldots,D-1$ optimize the $\ell_S(\vec W)+ \frac{\epsilon_\opt}{2}\|W^k_2\|^2$ over $W^k_2$, until a gradient of size $\le\epsilon_\opt$ is reached. (as the $k$'th step objective is $\epsilon_\opt$-strongly convex the algorithm finds an $\frac{\epsilon_\opt}{2}$-minimizer of it.)
\end{algorithm}

We will consider the following loss function. 
\begin{equation}\label{eq:loss}
\ell = \ell_{1/(2B)}+\frac{1}{4m|G|}\ell_{1-\xi/2}\;\;\;\;\text{ for }\;\;\;\;\ell_{\eta}(z) = \begin{cases}1-\frac{z}{\eta} & 0\le z\le \eta\\
0 & \eta\le z\le 1
\\
\infty&\text{otherwise}\end{cases}
\end{equation}

We are now ready to state our main result.

\begin{theorem}[Main]\label{thm:main_intro}
    Assume that $\f^*$ has $(r,K,M,B,\xi)$-hierarchy and let $\gamma = \frac{1}{32}\min\left(\frac{1}{B},\xi\right)$. Assume that
    \begin{itemize}
    \item $D>r\cdot\left(\left\lceil\frac{\ln(8m|G|/\xi)}{\gamma} \right\rceil+1\right)$
    \item $\epsilon_\opt\le \frac{(1-e^{-\gamma})\xi}{16m^2|G|^2}$
    \end{itemize}
    Then, there is a choice of $\beta$ and $q=\tilde O\left(\frac{(M+1)^4(wn)^{2K}}{\gamma^{4+2K}}\right)$ such that algorithm \ref{alg:main} will learn a classifier with expected error at most $\tilde O\left(\frac{D^2(M+1)^4(wn)^{2K+1}}{\gamma^{4+2K}m}\right)$. 
\end{theorem}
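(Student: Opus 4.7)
The proof is a block-by-block induction over the $r$ hierarchy levels. I partition the $D-1$ residual layers into $r$ blocks of $L = \lceil\ln(8m|G|/\xi)/\gamma\rceil + 1$ layers each (so $D > rL$ by hypothesis). The inductive hypothesis I aim to establish is: after block $i$, the hidden representation $\vec x^{(iL)} \in \reals^{n,G}$ satisfies $|x^{(iL)}_{j,g}(\vec\x^t) - f^*_{j,g}(\vec\x^t)| \le \xi$ for every $j \in L_i$, every $g \in G$, and every training example $\vec\x^t$. The base case $i=0$ is handled by the first (non-residual) layer $\Psi_1$, which uses $E(\vec\x)$ directly as input.

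\paragraph{Random-feature witness.} Because $(W^k_1,\bb^k)$ is $\beta$-Xavier initialized and $W^k_2$ starts at zero, the map $\bv \mapsto \bv^T\sigma(W^k_1 E(\cdot)+\bb^k)$ is a Gaussian random-feature expansion whose kernel is determined by $\sigma$ and $\beta$. Using the Hermite machinery of Section~2.3 (notably \eqref{eq:hermite_gen_fun}--\eqref{eq:hermite_prod_exp}), for every degree-$\le K$ polynomial $p$ with $\|p\|_\coef \le M$ there exists a witness $\bv$ with $\|\bv\|^2 = \tilde O((M+1)^2(wn)^K/q)$ whose expansion uniformly $\xi/(16m|G|)$-approximates $p$ on the training inputs to layer $k$ with high probability over the initialization. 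At block $i+1$, for each $j \in L_{i+1}$ we take $p$ to be the PTF polynomial of $f^*_j$ and apply it to the current representation; the inductive hypothesis together with the $\xi$-robustness of the PTF (i.e.\ Lemma~\ref{lem:ptf_imp_ref_ptf} when invoked on the first block) guarantees that this witness row of $W^k_2$ produces the correct sign with margin $\ge 1$ on every example.

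\paragraph{Contraction and generalization.} The two-piece loss $\ell = \ell_{1/(2B)} + \frac{1}{4m|G|}\ell_{1-\xi/2}$ is engineered so that $\ell_{1/(2B)}$ enforces a $1/(2B)$ margin, and the $1/(4m|G|)$ weight on $\ell_{1-\xi/2}$ is calibrated so that a loss value of order $1/(m|G|)$ certifies a sup-norm margin bound of $1-\xi/2$ across all example-location pairs. Combined with $\epsilon_\opt$-strong convexity of the per-layer subproblem (Section~2.2, \eqref{eq:strongly_conv_guarantee}) and the assumed $\epsilon_\opt \le (1-e^{-\gamma})\xi/(16m^2|G|^2)$, Algorithm~\ref{alg:main} selects a $W^k_2$ whose objective is within $\epsilon_\opt/2$ of the witness above; reading this back through the residual update, the sup-norm gap $\max_{j,g,t}|x^{(k)}_{j,g}(\vec\x^t)-f^*_{j,g}(\vec\x^t)|$ contracts by a factor $1-\Omega(\gamma)$ per layer. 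After $L$ layers this gap drops below $\xi/(8m|G|)$, closing the induction; after block $r$, all training examples are classified correctly with margin $\ge 1-\xi$ following the final orthogonal $W^D$. A standard Rademacher/covering-number bound on depth-$D$ residual networks with the $\|W^k_2\|$ ceilings supplied by the witnesses then delivers the population error $\tilde O(D^2(M+1)^4(wn)^{2K+1}/(\gamma^{4+2K}m))$.

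\paragraph{Main obstacle.} The technical heart of the argument is the per-layer contraction: one must translate the \emph{average} loss reduction from strong convexity into a \emph{uniform} sup-norm reduction across all $m|G|$ example-location pairs, while simultaneously keeping $\|W^k_2\|$ small enough that at the \emph{next} layer the random-feature expansion still approximates degree-$K$ polynomials of the now-nonlinear representation (which is what allows the inductive step to proceed with the same $q$). The choice $\gamma = \frac{1}{32}\min(1/B,\xi)$, the weight $1/(4m|G|)$ on $\ell_{1-\xi/2}$, and the width $q = \tilde O((M+1)^4(wn)^{2K}/\gamma^{4+2K})$ are calibrated precisely to make these two constraints simultaneously satisfiable.
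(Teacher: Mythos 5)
Your proposal follows the same overall architecture as the paper: layerwise training is reinterpreted as fitting a linear classifier on a random-feature expansion of the previous representation (the paper's Lemma \ref{lem:each_layer_is_convex} plus Lemma \ref{lem:rf_main_simp}), the $D-1$ layers are split into $r$ blocks of roughly $\lceil\ln(8m|G|/\xi)/\gamma\rceil+1$ layers, the two-piece loss is used to certify uniform margin bounds from small loss values, and generalization follows from a parameter-counting/covering argument. Up to that level of description you have reconstructed the proof's skeleton correctly.

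However, there is a genuine gap at what you yourself identify as the "technical heart": the per-layer contraction within a block. You assert that strong convexity lets one "translate the average loss reduction \ldots into a uniform sup-norm reduction" that contracts by $1-\Omega(\gamma)$ per layer, but strong convexity only guarantees that the algorithm finds a near-minimizer of the layer's subproblem; since $W^k_2=0$ is feasible, this gives $\ell_{S,j}(\hat\f^{k+1})\le\ell_{S,j}(\hat\f^k)+\epsilon_\opt$ and nothing more. To get geometric decay one must \emph{exhibit a feasible witness whose loss is strictly smaller by a factor $e^{-\gamma}$}, and this is the step your proposal omits. The paper's mechanism (Lemma \ref{lem:loss_improvments}, second item) is to take the explicit odd cubic $\tilde q(x)=1.5x-0.5x^3$, note that for $\frac{1}{4B}\le x\le 1$ it pushes the margin up by at least $2\gamma$ (Equation \eqref{eq:proof_pushing_pol}), and then use convexity of $\ell$ together with $\ell(1)=0$ to show $\ell^{\rob,\gamma}(\tilde q(x))\le e^{-\gamma}\ell(x)$; since $\tilde q\circ\hat f^k_j$ is a bounded-coefficient degree-$3$ polynomial of the current representation, Lemma \ref{lem:pol_imp_small_loss} makes it realizable by the next random-feature layer. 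Without this (or some substitute) witness, there is no reason the loss decreases at all between the layer that first learns a level-$i$ label and the layer that must treat it as a $\xi$-robust input for level $i+1$, and your induction does not close. Relatedly, the reason the contraction is needed is not sup-norm control per se but that a PTF of $\sign(\hat\f^k)$ is not expressible from random features of $\hat\f^k$ unless the predictions are within $\xi$ of $\pm1$, which is exactly what driving the loss below $\xi/(8m^2|G|^2)$ buys via the second piece of the loss.
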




\section{Proof of Theorem \ref{thm:main_intro}: Hierarchical Learning by Resnets}
In order to prove Theorem \ref{thm:main_intro} it is enough to prove Theorem \ref{thm:main} below, which shows that there is a choice of $\beta$ and $q=\tilde O\left(\frac{(M+1)^4(wn)^{2K}}{\gamma^{4+2K}}\right)$ such that algorithm \ref{alg:main} will learn a classifier with empirical large margin error of $0$ w.p. $\frac{1}{m}$. That is, we define 
\begin{equation}\label{eq:large_margin_sample_err_def}
\Err_{S,\gamma}(\hat{\f})=\frac{1}{m}\sum_{t=1}^m 1\left[\exists (i,g)\in [n]\times G\text{ s.t. }\hat f_{i,g}(\vec\x^t)\cdot f^*_{i,g}(\vec\x^t) < \gamma \right] 
\end{equation}
And show that algorithm \ref{alg:main} will learn a classifier $\hat \f$ with $\Err_{S,1/2}(\hat\f)=0$ w.p. $\frac{1}{m}$.
Let's call such an algorithm $(1/m)$-consistent.
Given this guarantee, Theorem \ref{thm:main_intro} will follow from a standard parameter counting argument:
The number of trained parameters is $p=Dqn$, and their magnitude is bounded by $\frac{2n}{\epsilon_\opt}+1$ due to the $\ell^2$ regularization term. Likewise, excluding the small probability event that one of the initial weights has magnitude $\ge \ln(Dq(n+d)wm)$ (which happens w.p.\ $\ll \frac{1}{m}$, since all $Dq(n+d)w$ initial weights are centered Guassians with variance $\le 1$), it is not hard to verify that as a composition of $2D$ layers, the network's output is $L$-Lipchitz w.r.t.\ the trained parameters for $L=2^{\tilde O(D)}$. Thus, the expected error of any $(1/m)$-consistent algorithm is $\tilde O\left(\frac{p\log(L)}{m}\right) =\tilde O\left(\frac{Dp}{m}\right) = \tilde O\left(\frac{D^2qn}{m}\right)$. (See Lemma \ref{lem:generalization_for_p_params} for a precise statement).

\begin{theorem}[Main - Restated]\label{thm:main}
    Let $\gamma = \frac{1}{32}\min\left(\frac{1}{B},\xi\right)$. Assume that 
    \begin{itemize}
    \item  $\f^*$ has $(r,K,M,B,\xi)$-hierarchy $(\cl,e)$
    \item $D>r\cdot\left(\left\lceil\frac{\ln(8m|G|/\xi)}{\gamma} \right\rceil+1\right)$
    \item $\epsilon_\opt\le \frac{(1-e^{-\gamma})\xi}{16m^2|G|^2}$
     \end{itemize}
    There is a choice of $\beta$ such that w.p.\ $1-2n mD |G|\exp\left(-\Omega\left(q\cdot\frac{ \gamma^{2K+4}}{(wn)^{2K}(M+1)^4}\right)\right)  $ over the initial choice of the weights, Algorithm \ref{alg:main} will learn a classifier $\hat\f:\cx^G\to\reals^{n,G}$ with $\Err_{S,1/2}(\hat\f)=0$.
\end{theorem}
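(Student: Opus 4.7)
The plan is to prove Theorem~\ref{thm:main} by induction on the hierarchy level $i\in\{0,1,\ldots,r\}$, establishing the following invariant after training the first $iT$ residual layers (with $T=\lceil \ln(8m|G|/\xi)/\gamma \rceil + 1$): for every sample $\vec\x^t$, location $g$, and label $j\in L_i$, the output satisfies $\hat f_{j,g}(\vec\x^t)\cdot f^*_{j,g}(\vec\x^t)\ge 1-\xi/2$, while for every $j\in[n]$ one has $|\hat f_{j,g}(\vec\x^t)|\le 1$. Boundedness is enforced by the $\ell_{1-\xi/2}$ component of the loss, which equals $+\infty$ outside $[0,1]$, so the strongly-convex regularized minimizer is forced to stay in the valid region. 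Once the invariant holds at $i=r$ we obtain $\Err_{S,1/2}(\hat\f)=0$ because $1-\xi/2\ge 1/2$ for $\xi\le 1$, and Theorem~\ref{thm:main_intro} then follows from the $(1/m)$-consistent parameter-counting argument already sketched in the text.

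The central technical ingredient I would establish first is a random-features approximation lemma: with $(W_1,\bb)$ drawn from $\beta$-Xavier initialization and $q=\tilde\Omega\!\left((M+1)^4 s^{2K}/\gamma^{4+2K}\right)$ features (for $s\le wn$), with failure probability matching the one in the theorem, every $(K,M,B,\xi)$-PTF on $[-1,1]^s$ can be realized to sup-norm accuracy $O(\gamma)$ by $\bw_2^\top \sigma(W_1\bu+\bb)$ with $\|\bw_2\|_2=\mathrm{poly}(M,B,1/\gamma)$. The argument proceeds through the Hermite expansion of $\sigma$: since $\sigma$ is non-constant and bounded, it has a non-trivial Hermite coefficient at some degree $\le K$, so suitably normalized random features reproduce every monomial of degree $\le K$ at the correct scale; a covering argument over the PTF class then upgrades point-wise to uniform approximation. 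Combined with the structural observation that $\ell_S=\sum_j \ell_{S,j}$ with $\ell_{S,j}$ depending only on the $j$-th row of each $W^k_2$, the layerwise training decouples into independent per-label optimization problems: for labels already meeting the margin, the gradient vanishes and the ridge term keeps the row small, so training new labels never corrupts labels that have already reached the invariant.

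For the base case $i=1$, each $f^*_j$ with $j\in L_1$ is a $(K,M,B)$-PTF of $E(\vec\x)\in[-1,1]^{wd}$, hence a $(K,2M,B',\xi')$-PTF by Lemma~\ref{lem:ptf_imp_ref_ptf}. Training layer 1 produces some initial margin $\Omega(\gamma)$ on every sample. The next $T-1$ residual layers then boost the margin: at each layer one trains, per label, an $\Omega(\gamma)$-sized correction in the direction of the still-negative margin, which shrinks the set of under-margin triples by a factor $1-\Omega(\gamma)$ (equivalently, the worst-case margin grows multiplicatively by $\approx 1+\gamma$). After $T\gtrsim \ln(8m|G|/\xi)/\gamma$ layers, every sample point has margin $\ge 1-\xi/2$. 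The inductive step $i-1\Rightarrow i$ reruns the same argument: by the invariant, the network output is within $\xi/2$ coordinate-wise of $\f^*_{L_{i-1}}$ on every sample, and the $\xi$-robustness of the $(K,M,B,\xi)$-PTF definition exactly absorbs this slack, so each polynomial $p_j$ defining $f^*_j\in L_i$ still certifies $p_j(\hat\f_{L_{i-1}}(\vec\x^t))\cdot f^*_j(\vec\x^t)\ge 1$ on all samples, and the random-features plus boosting argument replays on the $n$-dimensional feature space through the proximity map $E$.

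The main obstacle I expect is the delicate bookkeeping of the boosting step: the random features approximate target corrections only up to $O(\gamma)$ error, the residual target function changes with each layer, and the $\ell_{1-\xi/2}$ boundedness constraint must never be violated even momentarily. Controlling the interplay between approximation error (set by $q$), margin amplification (controlled by $\gamma$), and boundedness (enforced by $\ell_{1-\xi/2}$), and doing so uniformly over the $nmD|G|$ (label, sample, location, layer) events via a single union bound, is where the work lies and what pins down the precise $q$-dependence and failure probability in the theorem.
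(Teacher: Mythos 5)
Your high-level plan matches the paper's: decouple the layers into per-label strongly-convex subproblems, use random features to realize low-degree polynomial approximations of PTFs, and iterate in rounds of $T \approx \ln(8m|G|/\xi)/\gamma$ layers per hierarchy level, with each round lifting all labels of the next level $L_i$ to margin $\ge 1-\xi/2$; the $\xi$-robustness of the PTF definition absorbing the $\approx\xi/2$ slack from the previous level is also the right observation. However, there is a genuine gap at the heart of the ``boosting'' step.

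You assert that each residual layer trains ``an $\Omega(\gamma)$-sized correction in the direction of the still-negative margin'' and that this ``shrinks the set of under-margin triples by a factor $1-\Omega(\gamma)$.'' Neither of these claims is what the algorithm actually does nor what the proof needs. After the first layer of a round all examples already have \emph{positive} margin $\ge 1/(4B)$, so there is no shrinking set of negative-margin triples; what must decrease is the aggregate loss $\ell_{S,j}$. And the layer does not apply a prescribed $\Omega(\gamma)$ additive correction: it solves a regularized ERM. To bound the ERM value from above you must exhibit a feasible \emph{certificate} whose robust loss is provably $e^{-\gamma}$ times the current loss, and then invoke the random-features lemma to say that this certificate is realizable (up to $\gamma$) by the new layer. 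The paper's certificate is the specific odd polynomial $q(x)=1.5x-0.5x^3$ applied coordinatewise to the previous layer's output, together with the inequality $\ell(q(x)-\gamma)\le e^{-\gamma}\ell(x)$ for $x\in[1/(4B),1]$, which in turn relies on the exact two-piece structure of the loss $\ell=\ell_{1/(2B)}+\frac{1}{4m|G|}\ell_{1-\xi/2}$ (and its convexity and $\ell(1)=0$). Your proposal never names such a certificate, never explains why the strongly-convex minimizer's loss must be at most that of the certificate, and treats the composite loss as a mere bounding device rather than as the object whose geometric decay carries the induction. Without this the argument does not close: ``the optimizer does something good because the gradient points the right way'' is not sufficient to establish the quantitative per-layer improvement needed to get from margin $\Omega(\gamma)$ to $1-\xi/2$ in $O(\log(m|G|/\xi)/\gamma)$ layers.

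Two smaller points. First, your invariant tracks per-example margins, while the paper tracks $\ell_{S,j}$ and translates back and forth via a simple fact (margins $\ge 1/(4B)$ once $\ell_{S,j}\le 1/(2m|G|)$, margins $\ge 1-\xi$ once $\ell_{S,j}\le \xi/(8m^2|G|^2)$); working directly with margins would force you to re-derive these translations anyway, and the multiplicative decay is naturally a statement about the loss, not about each margin individually. Second, the ``covering argument over the PTF class'' you invoke is unnecessary: the failure probability only needs to be union-bounded over the $m|G|$ concrete evaluation points $\{E_g(\hat\f^{k}(\vec\x^t))\}$, applied to the single certificate polynomial, which is exactly where the $m|G|$ and $D$ factors in the theorem's failure probability come from.
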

For $1\le k\le D$, let $\hat \f^k:\cx^G\to \reals^{n,G}$ be the function computed by the network after the $k$'th layer is trained. Also, let $\Gamma^k:\cx^G\to \reals^{n,G}$ be the function computed by the layers $1$ to $k$ after the $k$'th layer is trained. 
For $k=0$ we denote by  $\hat \f^0=\Gamma^0$ the identity mapping from $:\cx^G$ to $\reals^{d,G}$.
We note that when algorithm \ref{alg:main} trains the $k$'th layer we have $W^{k'}_2=0$ for any $k'>k$. Hence,
\[
\Psi_{k'}(\vec\x) =  \vec\x+ W^{k'}_2\sigma(W^{k'}_1 E(\vec\x)+\bb^{k'}) = \vec\x
\]
so when the $k$'th layer is trained the $k'$'th layer is simply the identity function for any $k'>k$. As a result, we have $\hat \f^k(\x)=W^D \Gamma^k(\x)$.

Our first observation in the proof of Theorem \ref{thm:main} is that the $k$'th step of algorithm \ref{alg:main} (i.e., obtaining $\hat\f^k$ from $\hat \f^{k-1}$) is essentially equivalent to learning a linear classifier on top of random features extension of that data representation $\vec\x\mapsto \hat \f^{k-1}(\vec\x)$. Specifically, define an input space embedding $\Phi^{k-1}:\cx^G\to \reals^{q,G}$ by
\[
\Phi^{k-1}(\vec\x) = \sigma(W^k_1E(\Gamma^{k-1}(\vec\x))+\bb^k) = \sigma(W^k_1E( (W^D)^{-1}\hat\f^k(\x))+\bb^k) = 
\]
For $\bw\in\reals^{q}$ we define 
\[
\hat\f^k_{j,\bw}(\vec\x) = \hat\f^{k-1}_{j}(\vec\x) + \bw^\top  \Phi^{k-1}(\vec\x)
\]
We have that
\begin{lemma}\label{lem:each_layer_is_convex}
For any $D-1\ge k\ge 1$ $\hat \f_j^k = \hat\f^k_{j,\bw}$ where $\bw$ is an $\frac{\epsilon_\opt}{2}$-minimizer of the convex objective
\begin{eqnarray*}
\ell^{k}_{S,j}(\bw)&=&\ell_{S,j}\left(\hat\f^k_{j,\bw}\right) + \frac{\epsilon_\opt}{2}\|\bw\|^2
\end{eqnarray*}
over $\bw\in \reals^{q}$. Furthermore,
\[
\ell_{S,j}(\hat \f^k) \le \ell^k_{S,j}(\bw^*) + \frac{\epsilon_\opt}{2}\|\bw^*\|^2 + \frac{\epsilon_\opt}{2} 
\]
\end{lemma}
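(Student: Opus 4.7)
The plan is to show that optimizing $W^k_2$ at step $k$ of Algorithm \ref{alg:main} is, via the orthogonal change of variables induced by $W^D$, equivalent to $n$ decoupled $\epsilon_\opt$-strongly convex problems, each of which is exactly $\ell^k_{S,j}$.

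First I would unfold the parameterization. Since $W^{k'}_2=0$ for every $k'>k$, each such layer $\Psi_{k'}$ is the identity, and the layers $k'<k$ are frozen. Thus the only trainable parameters at step $k$ are the entries of $W^k_2$, and using $\Gamma^{k-1}=(W^D)^{-1}\hat\f^{k-1}$ we obtain, at each output $j$ and location $g$,
\[
\hat\f^k_{j,g}(\vec\x) \;=\; (W^D)_j^\top\Gamma^{k-1}_g(\vec\x)+(W^D)_j^\top W^k_2\,\Phi^{k-1}_g(\vec\x) \;=\; \hat\f^{k-1}_{j,g}(\vec\x)+\bw_j^\top\Phi^{k-1}_g(\vec\x),
\]
where $\bw_j:=(W^k_2)^\top (W^D)_j^\top\in\reals^q$. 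Hence $\hat\f^k_j=\hat\f^k_{j,\bw_j}$ in the lemma's notation. Because $W^D$ is orthogonal, the map $W^k_2\mapsto (\bw_1,\ldots,\bw_n)$ is an isometric bijection between $\reals^{n\times q}$ and $(\reals^q)^n$; in particular $\|W^k_2\|_F^2=\sum_j\|\bw_j\|^2$.

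Plugging this into the step-$k$ objective gives
\[
\ell_S(\vec W)+\tfrac{\epsilon_\opt}{2}\|W^k_2\|^2 \;=\; \sum_{j=1}^n\Bigl(\ell_{S,j}(\hat\f^k_{j,\bw_j})+\tfrac{\epsilon_\opt}{2}\|\bw_j\|^2\Bigr) \;=\; \sum_{j=1}^n \ell^k_{S,j}(\bw_j),
\]
so the joint problem splits into $n$ independent ones, one per output. Each $\ell^k_{S,j}$ is convex since $\bw\mapsto\hat\f^k_{j,\bw}(\vec\x)$ is affine and $\ell$ is convex in its first argument, and the $\tfrac{\epsilon_\opt}{2}\|\bw\|^2$ term makes it $\epsilon_\opt$-strongly convex. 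A point at which the gradient of the joint objective has norm $\le\epsilon_\opt$ is in particular a point at which each per-$j$ gradient has norm $\le\epsilon_\opt$, so by (\ref{eq:strongly_conv_guarantee}) each returned $\bw_j$ is a $\tfrac{\epsilon_\opt}{2}$-minimizer of $\ell^k_{S,j}$, proving the first assertion.

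The \emph{furthermore} then follows directly from $\tfrac{\epsilon_\opt}{2}$-minimality: for any comparator $\bw^*$, $\ell^k_{S,j}(\bw_j)\le \ell^k_{S,j}(\bw^*)+\tfrac{\epsilon_\opt}{2}$, and since $\ell_{S,j}(\hat\f^k_j)\le \ell^k_{S,j}(\bw_j)$ (the regularizer is nonnegative), rearranging gives the stated bound. There is no serious obstacle here; the one structural point worth flagging is that orthogonality of $W^D$ is essential---it is what makes the $\ell^2$ regularizer on $W^k_2$ split additively over $j$ and what lets the $\bw_j$'s range freely over $(\reals^q)^n$, so that the joint optimization truly decouples. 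The spatial index $g$ is handled transparently because $W^k_2$ and $\bw_j$ act only in the output coordinate, while $g$ is only averaged over inside $\ell_{S,j}$.
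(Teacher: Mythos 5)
Your proof is correct and follows essentially the same route as the paper's: unfold the step-$k$ parameterization using the fact that deeper layers are the identity, apply the orthogonal change of variables $W^k_2\mapsto W^DW^k_2$ so that the Frobenius regularizer and the loss both split additively over the $n$ output rows, and conclude via $\epsilon_\opt$-strong convexity of each decoupled objective together with Equation \eqref{eq:strongly_conv_guarantee}. The only cosmetic difference is that you pass from ``small joint gradient'' to ``small per-$j$ gradient,'' whereas the paper passes from ``approximate minimizer of the sum'' to ``approximate minimizer of each summand''; both steps are valid for the same decoupling reason.
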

\begin{proof}
When the $k$'th layer is trained, since all deeper layers during this training phase are the identity function, the output of the network as a function of $W^k_2$ (the parameters that are trained in the $k$'th step) is
\[
G(W^k_2,\vec\x) = W^D\left(\Gamma_{k-1}(\vec\x) + W^k_2\Phi^{k-1}(\vec\x)\right) = \hat{\f}^{k-1}(\vec\x) + W^DW^k_2\Phi^{k-1}(\vec\x)
\]
In particular, if we denote by $\hat W^k_2$ the value of $W^k_2$ after the $k$'th layer is trained, then we have $\hat \f_j^k = \hat\f^k_{j,\bw}$ where $\bw$ is the $j$'th row of the matrix $W = W^D\hat W^k_2$. It remains therefore to show that $\bw$ minimizes $\ell^k_{S,j}$. To this end, we note that at the $k$'th step algorithm \ref{alg:main} finds an $\frac{\epsilon_\opt}{2}$-minimizer of
\[
L(W^k_2)=\frac{\epsilon_\opt}{2}\|W^k_2\|^2+\frac{1}{m}\sum_{t=1}^m\sum_{j=1}^n\ell(\hat{\f}^{k-1}(\vec\x) + W^DW^k_2\Phi^{k-1}(\vec\x),\y^t_j)
\]
As a result, $\hat W:=W^D\hat W^k_2$ is an $\frac{\epsilon_\opt}{2}$-minimizer of
\begin{eqnarray*}
L'(W)=L((W^D)^{-1}W)&=&\frac{\epsilon_\opt}{2}\|(W^D)^{-1}W\|^2+\frac{1}{m}\sum_{t=1}^m\sum_{j=1}^n\ell(\hat{\f}_j^{k-1}(\vec\x) + W^d(W^D)^{-1}W\Phi^{k-1}(\vec\x),\y^t_j)
\\
&\stackrel{W^D\text{ is orthogonal}}{=}&\frac{\epsilon_\opt}{2}\|W\|^2+\frac{1}{m}\sum_{t=1}^m\sum_{j=1}^n\ell(\hat{\f}_j^{k-1}(\vec\x) + W\Phi^{k-1}(\vec\x),\y^t_j)
\\
&=&\sum_{j=1}^n\left(\frac{\epsilon_\opt}{2}\|W_{j\cdot}\|^2+\frac{1}{m}\sum_{t=1}^m\ell(\hat{\f}_j^{k-1}(\vec\x) + W_{j\cdot}\Phi^{k-1}(\vec\x),\y^t_j)\right)
\\
&=&\sum_{j=1}^n\ell^k_{S,j}(W_{j\cdot})
\end{eqnarray*}
In particular, $\bw=\hat W_{j\cdot}$ must be $\frac{\epsilon_\opt}{2}$-minimizer of $\ell^k_{S,j}$
Finally, since $\ell^{k}_{S,j}$ is $\epsilon_\opt$-strongly convex, Equation \eqref{eq:strongly_conv_guarantee} implies that for any $\bw^*\in \reals^{q}$, 
\[
\ell_{S,j}(\hat \f^k) \le \ell^k_{S,j}(\bw^*) + \frac{\epsilon_\opt}{2}\|\bw^*\|^2 + \frac{\epsilon_\opt}{2} 
\]
\end{proof}

With lemma \ref{lem:each_layer_is_convex} at hand, we can present the strategy of the proof. Since the labels in $L_1$ are  PTF of the input, we will learn them when the first layer is trained. That is, $\hat\f^1$ will predict the labels in $L_1$ correctly.
The reason for that is that, roughly speaking,  PTFs are efficiently learnable by training a linear classifier on top of random features embedding. 

Since, $\hat\f^1$ predicts the labels in $L_1$ correctly, the labels in $L_2$ become a simple function of $\hat\f^1$. Concretely,  PTF of $\sign(\hat\f^1)$. 
It is therefore tempting to try using the same reasoning as above in order to prove that after training the next layer, we will learn the labels in $L_2$, and more generally, that after $r$ layers are trained, the network will predict all labels correctly.
This however won't work that smoothly:  PTF of $\sign(\hat\f^1)$ is not necessarily learnable by training a linear classifier on top of random-features embedding on $\hat\f^1$.   
To circumvent this, we show that after the network predicts correctly a label $j$, the loss of this label keeps improving when training additional layers, so after training additional $O(B+1/\xi)$ layers, the loss will be small enough to guarantee that the labels in $L_2$ are  PTFs of $\hat\f^1$ (and not just of $\sign(\hat\f^1)$). Thus, after $O(B+1/\xi)$ layers are trained, the network will predict the labels in $L_2$ correctly, and more generally, after $O(rB+r/\xi)$ layers are trained, the network will predict  all the labels correctly. 

The course of the proof will be as follows
\begin{enumerate}
    \item We start with Lemma \ref{lem:pol_imp_small_loss} which shows that if a label $j$ is a large PTF of $\hat\f^k$ then $\hat \f^{k+1}$ will predict it correctly. To be more accurate, we show that if a robust version of $\ell_{S,j}(p\circ E\circ\hat\f^k)$ is small for a polynomial $p$, then $\ell_{S,j}(\hat\f^{k+1})$ is small. 
    \item We then continue with Lemma \ref{lem:loss_improvments} which uses Lemma \ref{lem:pol_imp_small_loss} to show that (i) $\ell_{S,j}(\hat\f^1)$ is small for any $j\in L_1$, (ii) for any $j\in [n]$, if $\ell_{S,j}(\hat\f^k)$ is small, then it will shrink exponentially as we train deeper layers and (iii)  if $\ell_{S,j}(\hat\f^k)$ is very small for any $j\in L_{i-1}$, then $\ell_{S,j}(\hat\f^{k+1})$ is small for any $j\in L_{i}$.
    \item Based Lemma \ref{lem:loss_improvments}, we will prove Theorem \ref{thm:main}. 
\end{enumerate}
The carry out the first step, we will need some notation. First, we define the $\epsilon$ robust version of $\ell$ as 
\begin{equation}\label{eq:rob_loss}
\ell^{\rob,\epsilon}(z) = \max(\ell(z),\ell(z-\epsilon)) = \max_{0\le t\le \epsilon}\ell(z-t)    
\end{equation}
Note that for $z\le 1$ we have $\ell^{\rob,\epsilon}(z) =\ell(z-\epsilon)$ while for $z< 0$ we have $\ell^{\rob,\epsilon}(z) =\ell(z)=\infty$. Denote the Hermite expansion of $\sigma$ by 
\begin{equation}
    \sigma = \sum_{s=0}^\infty a_sh_s
\end{equation}
Let $K'$ be the minimal integer $K'\ge K$ such that $a_{K'}\ne0$ (such $K'$ exists as otherwise $\sigma$ is a polynomial, which contradicts the assumption that it is bounded and non-constant).
For $\epsilon>0$ define $\beta(\epsilon) = \beta_{\sigma,K',K}(\epsilon)<1$ as the minimal positive number greater that $\frac{3}{4}$ such that if $\beta_{\sigma,K',K}(\epsilon)\le \beta<1$ then
\[
\frac{\|\sigma\|}{a_{K'}}2^{(K'+2)/2}\frac{1-\beta^2}{\sqrt{1-2(1-\beta^2)^2}}\le \frac{\epsilon}{2}
\]
Note that $\beta(\epsilon)$ is well defined as
$h(\beta):=\frac{1-\beta^2}{\sqrt{1-2(1-\beta^2)^2}}$ is continuous near $\beta=1$ and equals to $0$ at $\beta=1$. In fact, since $h$ is differentiable near $\beta=1$ we have that $1- \beta(\epsilon) = \Omega\left(\epsilon2^{-K'}\frac{a_{K'}}{\|\sigma\|}\right)$. In particular, for fixed $\sigma,K',K$ we have that $1- \beta(\epsilon) = \Omega(\epsilon)$.
Define also 
{\small
\[
\delta(\epsilon,\beta,q,M,n) = \delta_{\sigma,K',K}(\epsilon,\beta,q,M,n) = \begin{cases}
1 & \frac{4\|\sigma\|_\infty}{\epsilon\sqrt{{q}}}\cdot\frac{1}{a^2_{K'}\beta^{2K'-2K}}\left(\frac{n}{1-\beta^2}\right)^{K}M^2 > 1
\\
2\exp\left(-{q}\cdot\frac{a^4_{K'}\beta^{4K'-4K}(1-\beta^2)^{2K}\epsilon^4}{512 n^{2K}M^4\|\sigma\|_\infty^4}\right) & \text{otherwise}
\end{cases}
\]}
Note that for fixed $\sigma,K',K$ and $1-\beta = \Omega(\epsilon)$ we have
\begin{equation}\label{eq:beta_est}
\delta(\epsilon,\beta,q,M,n)  = \exp\left(-\Omega\left(q\cdot\frac{ \epsilon^{2K+4}}{n^{2K}M^4}\right)\right)    
\end{equation}
We will need the following Lemma that is proved at the end of section \ref{sec:random_neurons}, and shows that it is possible to approximate a polynomial by composing a random layer, and a linear function. 
\begin{lemma}\label{lem:rf_main_simp}
    Fix $\cx\subset [-1,1]^n$, a degree $K$ polynomial $p:\cx\to [-1,1]$, $K'\ge K$ and $\epsilon>0$. Let $(W,\bb)\in\reals^{q\times n}\times \reals^{q}$ be $\beta$-Xavier pair for $1>\beta\ge \beta_{\sigma,K',K}(\epsilon)$. Then there is a vector $\bw=\bw(W,\bb)\in\ball^{q}$ such that 
   \[
   \forall\x\in \cx,\;\Pr\left(|\inner{\bw,\sigma(W\x+\bb)}-p(\x)|\ge \epsilon \right) \le  \delta_{\sigma,K',K}(\epsilon,\beta,q,\|p\|_\coef,n)
   \]
\end{lemma}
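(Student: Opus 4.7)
}

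My plan is to construct $\bw$ explicitly using the Hermite expansion of $\sigma$, and then apply a standard concentration argument over the $q$ independent neurons. Write $p(\x)=\sum_{A} c_A \x^A$ where $|A|\le K$ and $\sum_A c_A^2 = \|p\|_{\coef}^2 \le M^2$; by linearity it suffices to build, for each monomial $\x^A$, a single-neuron weight function $\psi_A(W_i,\bb_i)$ with $\mathbb{E}[\psi_A(W_i,b_i)\sigma(W_i\cdot\x+b_i)]\approx \x^A$, and then set $\bw_i = \frac{1}{q}\sum_{A}c_A\,\psi_A(W_i,b_i)$. The key tool is the multivariate Hermite generating function: writing $W_{ij}=\sqrt{(1-\beta^2)/n}\,\tilde W_{ij}$ and $b_i = \beta\tilde b_i$ with all $\tilde W_{ij},\tilde b_i$ i.i.d.\ standard Gaussians, the identity \eqref{eq:hermite_gen_fun} factorizes $e^{(W_i\cdot\x+b_i)t-t^2/2}$ as a product of Hermite generating functions in each coordinate, times the correction factor $e^{t^2(\var_{\x}-1)/2}$ where $\var_{\x}=(1-\beta^2)\|\x\|^2/n+\beta^2$. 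Equating coefficients of $t^n$ yields a clean expansion of $h_n(W_i\cdot\x+b_i)$ as a sum over multi-indices $(\alpha,s,m)$ with $|\alpha|+s+2m=n$ of $\bigl(\prod_j h_{\alpha_j}(\tilde W_{ij})/\sqrt{\alpha_j!}\bigr)\bigl(h_s(\tilde b_i)/\sqrt{s!}\bigr)((1-\beta^2)/n)^{|\alpha|/2}\beta^s\x^\alpha(\var_{\x}-1)^m/(2^m m!)$.

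With this expansion in hand, the construction is forced. For a monomial $\x^A$ with $|A|=k\le K$, define
\[
\psi_A(W_i,b_i) = \frac{1}{a_{K'}\sqrt{K'!/(K'-k)!}\,((1-\beta^2)/n)^{k/2}\beta^{K'-k}}\;\Bigl(\prod_{j\in A}\tilde W_{ij}\Bigr)\,h_{K'-k}(\tilde b_i).
\]
By Hermite orthogonality (\eqref{eq:hermite_prod_exp} with $\rho=0$ and independence across the coordinates of $\tilde W_i,\tilde b_i$), only the term $\alpha=\mathbb{1}_A,s=K'-k$ survives when paired with $a_{K'}h_{K'}(W_i\cdot\x+b_i)$, together with the $m\ge 1$ correction terms of $h_{K'+2m}$. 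The leading ($m=0$) term yields exactly $\x^A$, while the tail $\sum_{m\ge 1}$ is bounded via $|a_{K'+2m}|\le\|\sigma\|$ and $\sqrt{(K'+2m)!/K'!}\le 2^m\sqrt{(K')^{2m}}$ by a geometric series in $(1-\beta^2)|\var_\x-1|\le(1-\beta^2)^2$; summing across monomials, the definition of $\beta_{\sigma,K',K}(\epsilon)$ is precisely what makes this bias at most $\epsilon/2$. Orthogonality across distinct $A$'s makes $\mathbb{E}[\bw_i^2]=\tfrac{1}{q^2}\sum_A c_A^2\|\psi_A\|_{L^2}^2\le \tfrac{M^2}{q^2}\cdot\tfrac{(n/(1-\beta^2))^K}{a_{K'}^2\beta^{2(K'-K)}}(1+o(1))$, so the total $\|\bw\|^2$ is concentrated at $O(M^2/q)\cdot (\text{constants in } \beta,n,a_{K'})$, which fits inside $\ball^q$ for the range of $q$ in the lemma.

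For the deviation bound, write $\inner{\bw,\sigma(W\x+\bb)}=\sum_{i=1}^q X_i$ where the $X_i = \bw_i\sigma(W_i\cdot\x+b_i)$ are independent with $\mathbb{E}\sum X_i$ within $\epsilon/2$ of $p(\x)$ by the bias bound above. Since $|\sigma|\le\|\sigma\|_\infty$, we have $\mathbb{E}[X_i^2]\le \|\sigma\|_\infty^2\mathbb{E}[\bw_i^2]$ and the sum of variances is $O(M^2\|\sigma\|_\infty^2(n/(1-\beta^2))^K/(q\,a_{K'}^2\beta^{2(K'-K)}))$; applying a Hoeffding/Bernstein bound after truncating each $\bw_i$ at a quantile level matching the required deviation yields the claimed exponential tail $\delta_{\sigma,K',K}$. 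The main obstacle I expect is the careful bookkeeping in the bias step: tracking Hermite normalization factors across all multi-indices $(\alpha,s,m)$ and summing the $(\var_\x-1)^m$ corrections so that the resulting threshold on $1-\beta^2$ is the exact expression defining $\beta_{\sigma,K',K}(\epsilon)$, while simultaneously matching the scaling of $\|\sigma\|_\infty/a_{K'}$ and the power $2^{(K'+2)/2}$ appearing there.
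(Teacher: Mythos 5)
Your proposal is correct and follows essentially the same route as the paper: your explicit per-neuron dual weights $\sum_A c_A\psi_A$ are exactly the $L^2(\Omega)$ representer $\check{g}$ of the kernel-space approximant that the paper builds via the Hermite/tensor embedding (Lemmas \ref{lem:ker_formula} and \ref{lem:apx_in_neural_ker}), with the same bias term controlled by the definition of $\beta_{\sigma,K',K}(\epsilon)$, the same norm bound $\frac{1}{a_{K'}\beta^{K'-K}}(n/(1-\beta^2))^{K/2}\|p\|_\coef$, and the same truncation-plus-Hoeffding concentration (Lemma \ref{lem:rf_eps_far}). The one detail to tighten is that $\bw\in\ball^q$ is not guaranteed ``for the range of $q$ in the lemma'' (the lemma imposes none); the paper instead projects onto $\ball^q$ and notes that whenever the norm bound exceeds one, $\delta_{\sigma,K',K}=1$ by definition and the claim is vacuous.
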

We are now ready to show that if there a polynomial $p:\reals^{wn}\to\reals$ such that $\ell^{\rob,\epsilon_1}_{S,j}(p \circ E_g\circ  \hat\f^k)$ is small, then w.h.p.\ $\ell_{S,j}(\hat\f^{k+1})$ will be small as well.
\begin{lemma}\label{lem:pol_imp_small_loss}
    Fix $\epsilon_1>0$, $1>\beta>\beta(\epsilon_1/2)$ and a polynomial $p:\reals^{wn}\to\reals$. Given that $\ell^{\rob,\epsilon_1}_{S,j}(p \circ E_g\circ  \hat\f^k)\le \epsilon$, we have that $\ell_{S,j}( \hat\f^{k+1})\le \epsilon + \epsilon_\opt$ w.p. $1-m|G|\delta(\epsilon_1/2,\beta,q,\|p\|_\coef+1,wn)$
\end{lemma}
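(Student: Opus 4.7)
The plan is to produce a vector $\bw^* \in \ball^q$ depending only on the random first-layer weights $W^{k+1}_1, \bb^{k+1}$ such that $\hat\f^{k+1}_{j,\bw^*,g}(\vec\x^t)$ approximates $p(E_g(\hat\f^k(\vec\x^t)))$ within $\epsilon_1/2$ uniformly over all $m|G|$ pairs $(t,g)$, and then to invoke the approximate-minimizer bound of Lemma \ref{lem:each_layer_is_convex} with this $\bw^*$ as comparator. Since $\hat\f^{k+1}_{j,\bw,g}(\vec\x) = \hat\f^k_{j,g}(\vec\x) + \bw^\top \Phi^k_g(\vec\x)$, what $\bw^*$ has to approximate via the random features $\Phi^k_g$ is the residual $p(E_g(\hat\f^k(\vec\x))) - \hat\f^k_{j,g}(\vec\x)$.

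The first subtlety is that $\Phi^k_g(\vec\x) = \sigma(W^{k+1}_1 E_g(\Gamma^k(\vec\x)) + \bb^{k+1})$ takes $E_g(\Gamma^k) = E_g((W^D)^{-1}\hat\f^k)$ as input, not $E_g(\hat\f^k)$. I handle this via rotational invariance: the law of the isotropic Gaussian $W^{k+1}_1$ is invariant under right-multiplication by the block-diagonal orthogonal extension of $(W^D)^{-1}$, so I may equivalently view the random features as $\sigma(W^{k+1}_1 E_g(\hat\f^k) + \bb^{k+1})$. Since $\hat\f^k_{j,g}(\vec\x)$ is exactly the $j$-th coordinate of the first block of $E_g(\hat\f^k(\vec\x))$, the polynomial that random features must approximate is
\[
q(\y) \;:=\; p(\y) - y^{(1)}_j,
\]
of degree at most $K$ and coefficient norm at most $\|p\|_\coef + 1$.

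Lemma \ref{lem:rf_main_simp} applied to $q$ with parameter $\epsilon_1/2$ and $\beta > \beta(\epsilon_1/2)$ furnishes a unit vector $\bw^* = \bw^*(W^{k+1}_1, \bb^{k+1})$ such that, at each fixed input, $|\bw^{*\top}\Phi^k_g(\vec\x^t) - q(E_g(\hat\f^k(\vec\x^t)))| \le \epsilon_1/2$ with failure probability at most $\delta(\epsilon_1/2, \beta, q, \|p\|_\coef + 1, wn)$ over the random weights. A union bound over the $m|G|$ pairs $(t,g)$ yields the simultaneous approximation with the claimed failure probability. Pointwise closeness of $\hat\f^{k+1}_{j,\bw^*,g}(\vec\x^t) \cdot y^t_{j,g}$ to $p(E_g(\hat\f^k(\vec\x^t))) \cdot y^t_{j,g}$ within $\epsilon_1/2$, combined with the monotonicity of the piecewise-linear $\ell$ on its finite region and the definition $\ell^{\rob,\epsilon_1}(z) = \max_{0 \le s \le \epsilon_1}\ell(z-s)$, upgrades to $\ell(\hat\f^{k+1}_{j,\bw^*,g}(\vec\x^t) \cdot y^t_{j,g}) \le \ell^{\rob,\epsilon_1}(p(E_g(\hat\f^k(\vec\x^t))) \cdot y^t_{j,g})$. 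Averaging over $(t,g)$ gives $\ell_{S,j}(\hat\f^{k+1}_{j,\bw^*}) \le \ell^{\rob,\epsilon_1}_{S,j}(p \circ E_g \circ \hat\f^k) \le \epsilon$, and plugging $\bw^*$ with $\|\bw^*\| \le 1$ into Lemma \ref{lem:each_layer_is_convex} yields the claimed $\ell_{S,j}(\hat\f^{k+1}) \le \epsilon + \epsilon_\opt$.

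The main anticipated difficulty is the change of basis in the second paragraph: without exploiting rotational invariance, composing $p$ with the block-diagonal orthogonal extension of $(W^D)^{-1}$ could inflate the coefficient norm by a factor exponential in $K$, ruining the clean $\|p\|_\coef + 1$ dependence needed to match the stated failure probability. A secondary concern is the boundary behavior of $\ell$ in the monotonicity step, but this is ultimately controlled since $\ell^{\rob,\epsilon_1}_{S,j}(p \circ E_g \circ \hat\f^k) \le \epsilon < \infty$ forces the relevant evaluations to lie in the interior of $\ell$'s finite region up to the $\epsilon_1/2$ slack.
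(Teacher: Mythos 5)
Your overall strategy mirrors the paper's: use Lemma \ref{lem:rf_main_simp} to find $\bw^*\in\ball^q$ uniformly approximating the residual polynomial $\tilde p(\y)=p(\y)-y^{(1)}_j$ (coefficient norm $\le\|p\|_\coef+1$) over the $m|G|$ points, union-bound, then plug $\bw^*$ into Lemma \ref{lem:each_layer_is_convex}. Your explicit use of rotational invariance of the isotropic Gaussian $W^{k+1}_1$ to replace $E_g(\Gamma^k)=E_g((W^D)^{-1}\hat\f^k)$ by $E_g(\hat\f^k)$ is correct and fills a step the paper glosses over (it writes $\hat\f^k$ directly in the feature map without comment).

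There is, however, a genuine gap in your final step. Since $\ell(z)=\infty$ for $z>1$, the hypothesis $\ell^{\rob,\epsilon_1}_{S,j}(p\circ E_g\circ\hat\f^k)\le\epsilon<\infty$ forces $y^t_{j,g}\cdot p(E_g(\hat\f^k(\vec\x^t)))\in[\epsilon_1,1]$ but this value can equal or approach $1$. A symmetric $\pm\epsilon_1/2$ approximation therefore permits $y^t_{j,g}\cdot\hat f^{k+1}_{j,g,\bw^*}(\vec\x^t)$ to exceed $1$, whereupon $\ell\bigl(y^t_{j,g}\cdot\hat f^{k+1}_{j,g,\bw^*}(\vec\x^t)\bigr)=\infty$ and the claimed inequality $\ell_{S,j}(\hat\f^{k+1}_{j,\bw^*})\le\ell^{\rob,\epsilon_1}_{S,j}(p\circ E_g\circ\hat\f^k)$ collapses. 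Note $\ell^{\rob,\epsilon_1}(z)=\max_{0\le t\le\epsilon_1}\ell(z-t)$ only majorizes $\ell$ at points to the \emph{left} of $z$, not to the right, so ``monotonicity of $\ell$ on its finite region'' is not enough; the overshoot direction is exactly the one the robust loss does not cover. Your own ``secondary concern'' paragraph names this issue and then dismisses it incorrectly: being in the finite region ``up to the $\epsilon_1/2$ slack'' is precisely the regime where $\ell$ is already $\infty$. The paper avoids this by demanding the one-sided inequality of Eq.\ \eqref{eq:lem:pol_imp_small_loss}, i.e.\ $y^t_{j,g}\cdot\hat f^{k+1}_{j,g,\bw^*}(\vec\x^t)\le y^t_{j,g}\cdot p(E_g(\hat\f^k(\vec\x^t)))$, and rescales the comparator to $\bw^*=\tilde\bw^*/(1+\epsilon_1/2)$ to push the prediction down; your proposal needs some such deflation (or a one-sided approximation target) to close the argument.
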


\begin{proof}
    By lemma \ref{lem:each_layer_is_convex} we have $\ell_{S,j}( \hat\f^{k+1})\le \ell_{S,j}\left(\hat\f^{k+1}_{j,\bw^*}\right)+\frac{\epsilon_\opt}{2}\|\bw^*\|^2+\frac{\epsilon_\opt}{2}$ for any $\bw^*\in \reals^q$. Thus, it is enough to show that w.p. $1-m|G|\delta(\epsilon_1/2,\beta,q,\|p\|_\coef+1,wn)=:1-\delta$ over the choice of $W^k_1$ there is $\bw^*\in\ball^d$ such that $\ell_{S,j}\left(\hat\f^{k+1}_{j,\bw^*}\right)\le \epsilon$. By the definition of $\ell^{\rob,\epsilon_1}_{S,j}$ it is enough to show that w.p.\ $1-\delta$ there is $\bw^*\in\ball^d$ such that    \begin{equation}\label{eq:lem:pol_imp_small_loss}
    y^t_{j,g}\cdot p \circ E_g\circ  \hat\f^k(\vec\x^t)-\epsilon_1\le y^t_{j,g}\cdot\hat f^{k+1}_{j,g,\bw^*}(\vec\x^t)\le y^t_{j,g}\cdot p \circ E_g\circ  \hat\f^k(\vec\x^t)    
    \end{equation}
    for any $t$ and $g$. 
    Since $y^t_{j,g}\cdot p \circ E_g\circ  \hat\f^k(\vec\x^t)\ge\epsilon_1$ (as otherwise we will have $\ell^{\rob,\epsilon_1}_{S,j}(p \circ E_g\circ  \hat\f^k) = \infty$), it is enough to show that w.p.\ $1-\delta$ there is $\tilde\bw^*\in\ball^d$ such that
    \[
    \left| p \circ E_g\circ  \hat\f^k(\vec\x^t) - \hat f^{k+1}_{j,g,\tilde\bw^*}(\vec\x^t)\right|\le \frac{\epsilon_1}{2}
    \]
    for any $t$ and $g$. Indeed, in this case Equation \eqref{eq:lem:pol_imp_small_loss} holds true for $\bw^* = \frac{\tilde\bw^*}{1+\epsilon_1/2}$.
    Finally, since 
    \[
    \hat f^{k+1}_{j,g,\bw^*}(\vec\x) = \hat f^{k}_{j,g}(\vec\x) + \inner{\bw^*,\sigma(W^{k+1}_1E_g\circ\hat{\f}^{k}(\vec\x)+\bb^{k+1})}
    \]
    it is enough to show that w.p.\ $1-\delta$ there is $\tilde\bw^*\in\ball^d$ such that
    \[
    \left| \tilde p \circ E_g\circ  \hat\f^k(\vec\x^t) - \inner{\bw^*,\sigma(W^{k+1}_1E_g\circ\hat{\f}^{k}(\vec\x^t)+\bb^{k+1})}\right|\le \frac{\epsilon_1}{2}
    \]
    for the polynomial $\tilde p(\x^1|\ldots|\x^w)=p(\x^1|\ldots|\x^w)-x^1_j$ (note that $\tilde p(E(\hat\f^k(\vec\x)))=p(E(\hat\f^k(\vec\x))) - \hat \f^{k}_{j}(\vec\x)$ and that $\|\tilde p\|_\coef\le \|p\|_\coef+1$), and for  any $t$ and $g$. The existence of such $\bw^*$ w.p.\ $1-\delta$ follows from Lemma \ref{lem:rf_main_simp} and a union bound over $X = \{E_g\circ\hat{\f}^{k}(\vec\x^t) : g\in G,t\in [m]\}$
\end{proof}
We continue with the following Lemma which quantitatively describes how the loss of the different labels improves when training deeper and deeper layers. 

\begin{lemma}\label{lem:loss_improvments}
Let $\gamma = \frac{1}{32}\min\left(\frac{1}{B},\xi\right)$
Assume that $1>\beta\ge\beta(\gamma/2)$ and let $\delta=m|G|\delta(\gamma/2,\beta,q,\|p\|_\coef+5,wn)$
Then,
\begin{itemize}
\item For any $j\in L_{1}$, w.p. $1-\delta$, $\ell_{S,j}(\f^1)\le \frac{1}{4m|G|} + \epsilon_\opt$ 
    \item Given that $\ell_{S,j}(\f^{k})\le \frac{1}{2m|G|}$ we have that $\ell_{S,j}(\f^{k+1})\le e^{-\gamma}\ell_{S,j}(\f^{k})+\epsilon_\opt$ w.p. $1-\delta$. Furthermore, if $\epsilon_\opt \le \frac{1-e^{-\gamma}}{2m|G|}$ then w.p. $1-t\delta$ we have $\ell_{S,j}(\f^{k+t})\le e^{-\gamma t}\ell_{S,j}(\f^{k})+\frac{1-e^{-\gamma t}}{1-e^{-\gamma}}\epsilon_\opt$.
    \item Given that  $\ell_{S,j'}(\f^{k})\le \frac{\xi}{8m^2|G|^2}$ for any $j'\in L_{i-1}$ we have that $\ell_{S,j}(\f^{k+1})\le \frac{1}{4m|G|}+\epsilon_\opt$ for any $j\in L_{i}$ w.p. $1-|L_i|\delta$
\end{itemize}
\end{lemma}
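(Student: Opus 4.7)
The plan is to apply Lemma~\ref{lem:pol_imp_small_loss} with $\epsilon_1=\gamma/2$ separately in each bullet, exhibiting a polynomial $p$ of coefficient norm $O(M)$ whose composition with $E_g\circ\hat\f^k$ has small $(\gamma/2)$-robust empirical loss. The three chosen polynomials mirror the three guarantees: (i)~the PTF polynomial for $f^*_j$ itself (divided by $B$) when $j\in L_1$; (ii)~a margin-boosting cubic in the $j$-th coordinate of $\hat\f^k$ for the exponential-decay bullet; (iii)~the PTF polynomial expressing $f^*_j$ via labels in $L_{i-1}$ (divided by $B$) when $j\in L_i$. In all three cases $\|p\|_\coef\le M+O(1)\le M+5$, matching the probability bound stated.

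For the first bullet, $f^*_{j,g}$ is a $(K,M,B,\xi)$-PTF, so there is $\tilde p_j$ of degree $\le K$ and coefficient norm $\le M$ with $B\ge \tilde p_j(\y)f^*_{j,g}(\vec\x)\ge 1$ for every $\y\in\cb_\xi(E_g(\vec\x))$. Taking $p=\tilde p_j/B$, one has $y^t_{j,g}\cdot p(E_g(\vec\x^t))\in [1/B,1]$; since $\gamma/2\le 1/(64B)$, the perturbed argument remains in $[1/(2B),1]$, so $\ell_{1/(2B)}$ vanishes robustly while $\tfrac{1}{4m|G|}\ell_{1-\xi/2}\le \tfrac{1}{4m|G|}$ on each sample. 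Averaging gives $\ell^{\rob,\gamma/2}_{S,j}\le \tfrac{1}{4m|G|}$, and Lemma~\ref{lem:pol_imp_small_loss} yields the claim.

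For the third bullet, the hypothesis $\ell_{S,j'}(\hat\f^k)\le \xi/(8m^2|G|^2)$ for all $j'\in L_{i-1}$ implies, via the $\tfrac{1}{4m|G|}\ell_{1-\xi/2}$ component, that $y^t_{j',g}\hat f^k_{j',g}(\vec\x^t)\ge 1-\xi$ on every sample; hence $\hat\f^k_{L_{i-1}}(\vec\x^t)$ is $\xi$-close in $\ell_\infty$ to $\f^*_{L_{i-1}}(\vec\x^t)\in\{\pm 1\}^{L_{i-1}}$ and so $E_g(\hat\f^k_{L_{i-1}}(\vec\x^t))\in\cb_\xi(E_g(\f^*_{L_{i-1}}(\vec\x^t)))$. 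The $(K,M,B,\xi)$-PTF polynomial $q$ for $f^*_j$ as a function of $\f^*_{L_{i-1}}$, pulled back (via the $L_{i-1}$ coordinates) to $\reals^{wn}$ and divided by $B$, reduces this case to the first bullet's estimate; a union bound over $j\in L_i$ gives the $1-|L_i|\delta$ probability.

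The heart of the argument is the middle bullet, where I propose the cubic $p(\y)=\tfrac{3}{2}y_{1,j}-\tfrac{1}{2}y_{1,j}^3$. Using $y^3=y$ for $y\in\{\pm 1\}$, one computes $y^t_{j,g}\cdot p(\hat\f^k(\vec\x^t))=g(z)$ with $z=y^t_{j,g}\hat f^k_{j,g}(\vec\x^t)$ and $g(z)=\tfrac{3}{2}z-\tfrac{1}{2}z^3=z+\tfrac{z}{2}(1-z^2)$. Because $g$ is monotone on $[0,1]$, satisfies $g(z)\ge z$, and maps $[0,1]$ onto $[0,1]$, the boosted value stays in $[0,1]$ and the robust loss is finite. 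The crux is the single-sample inequality $\ell(g(z)-t')\le e^{-\gamma}\ell(z)$ for $t'\in[0,\gamma/2]$ and $z\in[1/(4B),1]$ (the lower bound $z\ge 1/(4B)$ following from $\ell(z)\le 1/2$, which holds because $\ell_{S,j}(\hat\f^k)\le 1/(2m|G|)$). This splits into three ranges of $z$: on $[1/(4B),1/(2B)]$ the cubic gain $g(z)-z\ge \tfrac{3z}{8}$ supplies the $(1-e^{-\gamma})$ multiplicative factor in $\ell_{1/(2B)}$ while absorbing the perturbation $\gamma/2$, using $\gamma\le 1/(32B)$; on $[1/(2B),1-\xi/2]$ only $\tfrac{1}{4m|G|}\ell_{1-\xi/2}$ is active and the same cubic gain together with $\gamma\le \xi/32$ suffices; on $[1-\xi/2,1]$ one has $\ell(z)=0$ and it suffices to check $g(z)-\gamma/2\ge 1-\xi/2$, which follows from $g(1-\xi/2)-(1-\xi/2)\ge \tfrac{3\xi}{16}\ge \gamma/2$. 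Lemma~\ref{lem:pol_imp_small_loss} then gives the single-step $\ell_{S,j}(\hat\f^{k+1})\le e^{-\gamma}\ell_{S,j}(\hat\f^k)+\epsilon_\opt$. The iterated version is a straightforward induction on $t$: the condition $\epsilon_\opt\le (1-e^{-\gamma})/(2m|G|)$ preserves the hypothesis $\ell_{S,j}(\hat\f^{k+s})\le 1/(2m|G|)$ throughout $s\le t$, so the single-step guarantee reapplies, and the geometric sum $\sum_{i=0}^{t-1}e^{-\gamma i}\epsilon_\opt=\tfrac{1-e^{-\gamma t}}{1-e^{-\gamma}}\epsilon_\opt$ accumulates the additive errors as claimed. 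The principal obstacle is the piecewise verification of this single-sample inequality; everything else is bookkeeping.
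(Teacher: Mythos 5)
Your proposal is correct and follows essentially the same route as the paper: the same three polynomial witnesses (the hierarchy's PTF polynomial divided by $B$ for the first and third bullets, and the cubic $\tfrac{3}{2}y_j-\tfrac{1}{2}y_j^3$ for the middle one), the same use of Fact~\ref{fact:loss}-type observations to convert loss bounds into margin bounds and back, and the same reduction to Lemma~\ref{lem:pol_imp_small_loss}. The one place where the arguments diverge is the single-sample inequality $\ell(\tilde q(x)-\gamma)\le e^{-\gamma}\ell(x)$: you verify it by a three-range case analysis over the pieces of $\ell$, whereas the paper handles all pieces at once by writing $x'+\gamma$ (with $x'=\min(x,1-\xi/2)$) as a convex combination of $x'$ and $1$ and invoking convexity of $\ell$ together with $\ell(1)=0$, which yields the factor $1-\tfrac{\gamma}{1-x'}\le e^{-\gamma}$ in one line; your piecewise bounds do go through (the gain $\tfrac{x}{2}(1-x)(1+x)$ is concave in the relevant interval, so its minimum sits at an endpoint and exceeds $2\gamma$), but the convexity argument is cleaner and less error-prone. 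One small bookkeeping correction: you should invoke Lemma~\ref{lem:pol_imp_small_loss} with $\epsilon_1=\gamma$ (not $\gamma/2$), since the lemma's failure probability is $m|G|\delta(\epsilon_1/2,\cdot)$ and the statement to be proved uses $\delta(\gamma/2,\cdot)$ and $\beta\ge\beta(\gamma/2)$; your margin estimates have enough slack to absorb a perturbation of size $\gamma$ rather than $\gamma/2$, so this changes nothing of substance.
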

Before proving  Lemma \ref{lem:loss_improvments} implies, we show that it implies Theorem \ref{thm:main}. 
\begin{proof} (of Theorem \ref{thm:main})
Choose $ \beta = \beta(\gamma/2)$ (more generally, $1>\beta\ge \beta(\gamma/2)$ such that $1-\beta = \Omega(\gamma)$). Denote $\delta=m|G|\delta(\gamma/2,\beta,q,M+5,wn)$ and note that by Equation \eqref{eq:beta_est} we have
\[
\delta  = m|G|\exp\left(-\Omega\left(q\cdot\frac{ \gamma^{2K+4}}{(wn)^{2K}(M+1)^4}\right)\right)    
\]
Since $\epsilon_\opt\le \frac{(1-e^{-\gamma})\xi}{16m^2|G|^2}$, we have that if $\ell_{S,j}(\f^{k})\le \frac{1}{2m|G|}$ then w.p. $1-t\delta$
\[
\ell_{S,j}(\f^{k+t})\le e^{-\gamma t}\ell_{S,j}(\f^{k})+\frac{1}{1-e^{-\gamma}}\epsilon_\opt \le \frac{e^{-\gamma t}}{2m|G|} + \frac{\xi}{16m^2|G|^2}
\]
Choosing $t_0 = \left\lceil\frac{\ln(8m|G|/\xi)}{\gamma} \right\rceil$ we get
\[
\ell_{S,j}(\f^{k+t_0})\le \frac{\xi}{8m^2|G|^2}
\]
w.p. $1-t_0\delta$. Hence, it is not hard to verify by induction on $1\le  i\le r$ that for any $j\in L_i$, if $k\ge i (t_0+1)$ then 
\[
\ell_{S,j}(\f^{k})\le  \frac{\xi}{8m^2|G|^2}
\]
w.p. $1-nk\delta$
\end{proof}

To prove lemma \ref{lem:loss_improvments} we will use the following fact which is an immediate consequence of the definition of the loss. 

\begin{fact}\label{fact:loss}
\begin{itemize}
    \item If $\ell_{S,j}(\hat\f)\le \frac{\epsilon}{m|G|}$ then for any $t\in [m]$ and $g\in G$ we have $1 \ge \hat f_{j,g}(\vec\x^t)\cdot  f^*_{j,g}(\vec\x^t)\ge \frac{(1-\epsilon)}{2B}$
    \item If $\ell_{S,j}(\hat\f)\le \frac{\epsilon}{4m^2|G|^2}$ then for any $t\in [m]$ and $g\in G$ we have $1 \ge \hat f_{j,g}(\vec\x^t)\cdot  f^*_{j,g}(\vec\x^t)\ge (1-\epsilon)(1-\xi/2)$
    \item If  for any $t\in [m]$ and $g\in G$ we have $1 \ge \hat f_{j,g}(\vec\x^t)\cdot  f^*_{j,g}(\vec\x^t)\ge \frac{1}{B}$ then $\ell^{\rob,1/2B}_{S,j}(\hat\f)\le \frac{1}{4m|G|}$
\end{itemize}
\end{fact}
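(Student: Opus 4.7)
The plan is to unpack the definition of $\ell_{S,j}$ as an average over $(t,g)\in [m]\times G$ of the scalar loss $\ell$ applied to the margin $z_{t,g} := \hat f_{j,g}(\vec\x^t)\cdot f^*_{j,g}(\vec\x^t)$. Since every summand in such an average is non-negative, each of the three bullets reduces to a per-example inequality in terms of the piecewise linear hinges $\ell_{1/(2B)}$ and $\ell_{1-\xi/2}$ from Equation~\eqref{eq:loss}. The upper bound $z_{t,g}\le 1$ in the first two bullets is then automatic, since a finite value of $\ell(z_{t,g})$ forces $z_{t,g}\in[0,1]$.

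For the first bullet, the hypothesis $\ell_{S,j}(\hat\f)\le \epsilon/(m|G|)$ rearranges to $\sum_{t,g}\ell(z_{t,g})\le \epsilon$, so each individual $\ell(z_{t,g})\le \epsilon$. Dropping the nonnegative second summand in $\ell$ gives $\ell_{1/(2B)}(z_{t,g})\le\epsilon$, and inverting the piecewise formula $\ell_{1/(2B)}(z)=1-2Bz$ on $[0,1/(2B)]$ yields $z_{t,g}\ge (1-\epsilon)/(2B)$. The second bullet is parallel: the normalization $4m^2|G|^2$ is chosen so that the hypothesis gives $\ell(z_{t,g})\le \epsilon/(4m|G|)$ pointwise, and dropping the \emph{other} summand and multiplying through by $4m|G|$ gives $\ell_{1-\xi/2}(z_{t,g})\le \epsilon$, which solves to $z_{t,g}\ge(1-\epsilon)(1-\xi/2)$.

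For the third bullet, I need to bound $\ell^{\rob,1/(2B)}(z_{t,g})=\max\bigl(\ell(z_{t,g}),\ell(z_{t,g}-1/(2B))\bigr)$ at each $(t,g)$. The hypothesis $z_{t,g}\in[1/B,1]$ places both $z_{t,g}$ and $z_{t,g}-1/(2B)$ into $[1/(2B),1]$, so the $\ell_{1/(2B)}$ contribution vanishes at both arguments. The remaining term $\frac{1}{4m|G|}\ell_{1-\xi/2}$ is bounded by $\frac{1}{4m|G|}$ on the admissible interval $[0,1]$ (its peak value $1$ is attained at $0$). Therefore $\ell^{\rob,1/(2B)}(z_{t,g})\le 1/(4m|G|)$ pointwise, and averaging over $(t,g)\in[m]\times G$ preserves this bound.

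There is no real obstacle here; this is pure bookkeeping against the piecewise definition, which is why the authors labelled it a \emph{fact} rather than a lemma. The only detail worth double-checking is the normalization: the factors $m|G|$ and $4m^2|G|^2$ in the two hypotheses are calibrated precisely so that, after cancelling the $1/(m|G|)$ from the average and extracting the coefficient of the corresponding hinge inside $\ell$, one recovers clean per-example bounds of the form $\ell_{1/(2B)}(z_{t,g})\le \epsilon$ and $\ell_{1-\xi/2}(z_{t,g})\le \epsilon$ respectively.
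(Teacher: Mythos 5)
Your proof is correct: the paper states this Fact without proof (calling it an immediate consequence of the definition of the loss), and your per-example bookkeeping — reducing the average to pointwise bounds, using finiteness of $\ell$ to get $z_{t,g}\in[0,1]$, inverting the hinges $\ell_{1/(2B)}$ and $\ell_{1-\xi/2}$, and noting that on $[1/(2B),1]$ the robust loss is at most $\tfrac{1}{4m|G|}$ — is exactly the intended verification.
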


We next prove lemma \ref{lem:loss_improvments}. 

\begin{proof} (of lemma \ref{lem:loss_improvments}) 
Let $p_1,\ldots p_n$ be polynomials that witness that $(\cl,e)$ is an $(r,K,M,B,\xi)$-hierarchy for $\f^*$.
We start with the first item. By the definition of hierarchy, we have that for any $t\in [m]$ and $g\in G$, $B\ge p_j(E_p(\f^0(\vec\x^t)))f_{j,g}(\vec\x^t)\ge 1$. Fact \ref{fact:loss} implies that for $\tilde p_j = \frac{1}{B}p_j$ we have  $\ell^{\rob,\gamma}_{S,j}(\tilde p_j\circ \hat\f^0)\le\ell^{\rob,1/2B}_{S,j}(\tilde p_j\circ \hat\f^0)\le \frac{1}{4m |G|}$. The first item therefore follows from Lemma \ref{lem:pol_imp_small_loss}.

The third item is proved similarly. If $\ell_{S,j'}(\f^{k})\le \frac{\xi}{8m^2|G|^2}$ for any $j'\in L_{i-1}$ then Fact \ref{fact:loss} implies that for any $j'\in L_{i-1},\;t\in [m]$ and $g\in G$ we have
\[
1\ge y^t_{j',g}\hat f^k_{j',g}(\vec \x^t)\ge (1-\xi/2)(1-\xi/2)\ge1-\xi
\]
Hence, by the definition of hierarchy, we have that for any $t\in [m]$ and $g\in G$, $B\ge p_j(E_p(\hat\f^k(\vec\x^t)))f_{j,g}(\vec\x^t)\ge 1$. Fact \ref{fact:loss} now implies that for $\tilde p_j = \frac{1}{B}p_j$ we have  $\ell^{\rob,\gamma}_{S,j}(\tilde p_j\circ \hat\f^k)\le\ell^{\rob,1/2B}_{S,j}(\tilde p_j\circ \hat\f^k)\le \frac{1}{4m |G|}$. The third item therefore follows from Lemma \ref{lem:pol_imp_small_loss}.

It remains to prove the second item. Define $q:\reals^{n}\to\reals$ by $q(\x) = 1.5 x_j-0.5x_j^3$. 
By lemma  \ref{lem:pol_imp_small_loss} it is enough to show that
\begin{equation}\label{eq:loss_improvments_proof}
\ell^{\rob,\gamma}_{S,j}(q\circ \hat\f^k)\le e^{-\gamma}\ell_{S,j}(\hat\f^k)
\end{equation}
To do so, we note that since $\ell_{S,j}(\hat\f^k)\le \frac{1}{2m|G|}$ then  Fact \ref{fact:loss} implies that $\forall t,g,\;y^t_{j,g}\hat f^k_{j,g}(\vec \x^t)\ge 1/(4B)$. Now, since $q$ is odd we have
\[
\ell\left(y^t_{j,g} q\left( \hat f^k_{j,g}(\vec \x^t)\right)\right) = \ell\left( q\left( y^t_{j,g}\cdot\hat f^k_{j,g}(\vec \x^t)\right)\right)
\]
Equation \eqref{eq:loss_improvments_proof} therefore follows from the following claim

\begin{claim}
    Let $\tilde q(x) = 1.5x-0.5x^3$.
    Then, for any $\frac{1}{4B}\le x\le 1$ we have $\ell^{\rob,\gamma}(\tilde q(x))=\ell(\tilde q(x)-\gamma)\le e^{-\gamma}\ell(x)$.
\end{claim}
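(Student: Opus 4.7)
The plan is to split on where $x$ sits relative to $S := 1 - \xi/2$, exploiting the identity $1 - \tilde q(x) = \tfrac{1}{2}(1-x)^2(2+x)$ and the convexity of the piecewise-linear loss $\ell$.

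I would first record two structural facts. (a) $\tilde q$ is increasing on $[-1,1]$, and on $[0,1]$ satisfies $\tilde q(x) \ge x$ with $\tilde q(x) - x = \tfrac{1}{2}x(1-x^2)$ and $1 - \tilde q(x) = \tfrac{1}{2}(1-x)^2(2+x)$. (b) $\ell$ is convex, non-increasing on $[0,1]$, vanishes on $[S,1]$, and is $+\infty$ off $[0,1]$. For $x \in [1/(4B), 1]$ one has $\tilde q(x) \in [x, 1]$ and $\tilde q(x) - \gamma \ge 1/(4B) - 1/(32B) > 0$, so $\ell(\tilde q(x) - \gamma)$ is finite, and in fact equals $\ell^{\rob,\gamma}(\tilde q(x))$ by monotonicity.

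In the first case $x \in [S, 1]$, $\ell(x) = 0$ and I must produce $\tilde q(x) - \gamma \ge S$; by monotonicity this reduces to $\tilde q(S) \ge S + \gamma$. From the factorization, $1 - \tilde q(S) = \tfrac{1}{2}(\xi/2)^2(3 - \xi/2) \le 3\xi^2/8$, so $\tilde q(S) - S \ge \xi/2 - 3\xi^2/8 \ge \xi/8$ for $\xi \le 1$, and since $\gamma \le \xi/32 \le \xi/8$ this case is done. In the remaining case $x \in [1/(4B), S)$, the quantitative heart is the bound $\tilde q(x) - x \ge 2\gamma$. I would split at $x = 1/2$: on $[1/(4B), 1/2]$, $\tilde q(x) - x = \tfrac{1}{2}x(1-x^2) \ge 3/(32B) \ge 3\gamma$ using $\gamma \le 1/(32B)$; on $[1/2, S)$, $\tilde q(x) - x \ge (1-x)/4 > \xi/8 \ge 4\gamma$ using $\gamma \le \xi/32$. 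Hence $\tilde q(x) - \gamma > x$. If $\tilde q(x) - \gamma \ge S$ the LHS is $0$ and nothing else is needed; otherwise $\tilde q(x) - \gamma \in (x, S)$, and I write it as a convex combination $\lambda S + (1-\lambda) x$ with $\lambda = (\tilde q(x) - \gamma - x)/(S - x) \in (0, 1)$. Convexity of $\ell$ together with $\ell(S) = 0$ yields $\ell(\tilde q(x) - \gamma) \le (1 - \lambda)\ell(x)$, so it suffices to check $1 - \lambda \le e^{-\gamma}$. From $\tilde q(x) - x \ge 2\gamma$ and $S - x \le 1$, $\lambda \ge \gamma/(S-x) \ge \gamma \ge 1 - e^{-\gamma}$, as required.

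The only delicate point is the near-saturation regime where $x$ approaches $S$: there $\tilde q(x) - x$ shrinks to order $1-x$, so no absolute lower bound on the amplification is available. The fix is the split at $x = 1/2$ above, where the hypothesis $\gamma \le \xi/32$ converts $1 - x > \xi/2$ into $\tilde q(x) - x > 4\gamma$. Everything else is routine manipulation that never leaves the piecewise-linear structure of $\ell$ and the cubic identity for $\tilde q$.
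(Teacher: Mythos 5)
Your proof is correct and follows essentially the same strategy as the paper's: both establish the key quantitative bound $\tilde q(x) - x \ge 2\gamma$ on the region where $\ell(x) > 0$, and both then convert this advance into a multiplicative $e^{-\gamma}$ gain by writing the shifted point as a convex combination of $x$ with a zero-loss anchor ($S = 1 - \xi/2$ in your case, $1$ in the paper's) and invoking convexity of $\ell$. The only differences are bookkeeping: you case-split at $x = 1/2$ to prove the $2\gamma$ bound and handle saturation by an explicit case, while the paper absorbs both into the single substitution $x' = \min(x, 1-\xi/2)$ and the one-line estimate $\tfrac{1}{2}x'(1-x') \ge \tfrac{1}{4}\min(1/(4B), \xi/2)$.
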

\begin{proof}
Denote $x'=\min(x,1-\xi/2)$ and note that $\ell(x)=\ell(x')$ and that
\begin{equation}\label{eq:proof_pushing_pol}
\tilde q(x')-x'=\frac{1}{2}x'(1-x'^2)=\frac{1}{2}x'(1-x')(1+x')\ge \frac{1}{2}x'(1-x') \ge \frac{1}{4}\min\left(1/4B,1/2\xi\right) \ge 2\gamma
\end{equation}
Now, we have
\begin{eqnarray*}
\ell(\tilde q(x)-\gamma) &\stackrel{x'\le x}{\le}& \ell(\tilde q(x')-\gamma)
\\
&\stackrel{\text{Eq. \eqref{eq:proof_pushing_pol}}}{\le}&\ell(x'+\gamma) 
\\
&=& \ell\left(\frac{1-x'-\gamma}{1-x'}x' + \frac{\gamma}{1-x'}\right)
\\
&\stackrel{\text{Convexity and }\frac{\gamma}{1-x'}\le 1}{\le}& \frac{1-x'-\gamma}{1-x'}\ell(x') + \frac{\gamma}{1-x'}\ell(1)
\\
&\stackrel{\ell(1)=0\text{ and }\ell(x')=\ell(x)}{=}& \frac{1-x'-\gamma}{1-x'}\ell(x) 
\\
&\le& e^{-\gamma}\ell(x)
\end{eqnarray*}

\end{proof}


\end{proof}

\section{Conclusion and Future Work}

In this work, we argued that the availability of extensive and granular labeling suggests that the target functions in modern deep learning are inherently hierarchical, and we showed that deep learning---specifically, SGD on residual networks---can exploit such hierarchical structure. Our proof builds on a layerwise mechanism of the learning process, where each layer acts simultaneously as a representation learner and a predictor, iteratively refining the output of the previous layer. Our results give rise to several perspectives, which we outline below:

\begin{itemize}    
    \item \textbf{Supervised Learning is inherently tractable.} Contrary to worst-case hardness results, the existence of a teacher (and thus a hierarchy) implies that the problem is learnable in polynomial time, given the right supervision.
    \item \textbf{Very deep models are provably learnable.} Unlike previous theoretical works, we prove that ResNets can learn models that are realizable only by very deep circuits.
    \item \textbf{A middle ground between Software Engineering and Learning.} Modern deep learning can be viewed as a \emph{relaxation of software engineering} and a \emph{strengthening of classical learning}. Instead of manually ``codifying the brain's algorithm'' (traditional AI) or learning blindly from input-output pairs (classical ML), we provide snippets of the brain's logic via related labels. This approach renders the learning task feasible without requiring full knowledge of the underlying circuit.
    \item \textbf{A modified narrative for learning theory.} Historically, the narrative governing learning theory, particularly from a computational perspective, has been the following: (i)~Learning all functions is impossible. (ii)~Upon closer inspection, we are interested only in functions that are efficiently computable. (iii)~This function class is learnable using polynomial samples. (iv)~Unfortunately, learning it requires exponential time. (v)~Nevertheless, some simple function classes are learnable.
    
    The aforementioned narrative, however, is at odds with practice. Our work suggests that it might be possible to replace item (v) with the following: ``(v)~Re-evaluating our scope, we are primarily interested in functions that are efficiently computable \emph{by humans}. (vi)~We have good reasons to believe that these functions are hierarchical. (vii)~As a result, they are learnable using polynomial time and samples.''
\end{itemize}

Our work suggests using hierarchical models as a basis for understanding neural networks. Significant future work is required to advance this direction.
First, theoretically, it would be useful to extend the scope of hierarchical models. To this end, one might:
\begin{itemize}
    \item Analyze attention mechanisms through the lens of hierarchical models.
    \item Extend hierarchical models to capture a ``single-function hierarchy.'' This refers to a scenario where a function $f$ has ``simple versions'' that are easy to learn, the mastery of which renders $f$ itself easy to learn. This aligns with previous work on the learnability of non-linear models via gradient-based algorithms (e.g., \cite{abbe2021staircase}), as many of these studies assumed (often implicitly) such a hierarchical structure on the target model.
    \item Extend the inherent justification of hierarchical models by generalizing Theorem \ref{thm:brain_dump_intro}. 
    That is, define formal models of teachers that are ``partially aware'' to their internal logic, and show that hierarchical labeling which facilitates efficient learnability can be provided by such teachers. Put differently, show that ``generic non-linear projection" of a hierarchical function is  hierarchical itself.
    \item Identify low-complexity hierarchies for known algorithms. This could lead to new hierarchical architectures, and might even shed some light on how humans discovered these algorithms, and facilitate teaching them.
\end{itemize}

\noindent Second, on the empirical side, it would be valuable to:
\begin{itemize}
    \item Build practical learning algorithms with principled optimization procedures based more directly on the hierarchical learning perspective.
    \item Empirically test the hypothesis that, given enough labels, real-world data exhibits a hierarchical structure. In this respect, finding this explicit hierarchical structure can be viewed as an interpretation of the learned model.
\end{itemize}

Finally, we address specific limitations of our results, which rely on several assumptions. We outline the most prominent ones here, hoping that future work will be able to relax these constraints.

We begin with the technical assumptions. A clear direction for future work is to improve our quantitative bounds; while polynomial, they are likely far from optimal. Other technical constraints include the assumption that the output matrix is orthogonal and that the number of labels equals the dimension of the hidden layers. It would be more natural to consider an arbitrary number of labels and an output matrix initialized as a Xavier matrix (we note, however, that Xavier matrices are ``almost orthogonal''). Finally, the loss function used in our analysis is non-standard.

Next, we address more inherent limitations. First, we assumed extremely strong supervision: that each example comes with all positive labels it possesses. In practice, one usually obtains only a single positive label per example. We note that while it is straightforward to show that hierarchical models are efficiently learnable with this standard supervision, proving that gradient-based algorithms on neural networks succeed in this setting remains an open problem.

Another limitation is our assumption of layer-wise training, whereas in reality, all layers are typically trained jointly. While this makes the mathematical analysis more intricate, joint training is likely superior for several reasons. First, empirically, it is the standard method. Second, if the goal of training lower layers is merely to learn representations, there is little utility in exhausting data to achieve marginal improvements in the loss. Indeed, to ensure data efficiency, it is preferable to utilize features as soon as they are sufficiently good (i.e., once the gradient w.r.t.\ these features is large).

\section{More Preliminaries}
In the sequel we denote by $(\reals^n)^{\otimes t}$ the space of order $t$ real tensors whose all axes has dimension $n$. We equip it with the inner product $\inner{A,B} = \sum_{1\le i_1,\ldots,i_t\le n}A_{i_1,\ldots,i_t}B_{i_1,\ldots,i_t}$. For $\x\in\reals^d$ we denote by $\x^{\otimes t}\in (\reals^n)^{\otimes t}$ the tensor whose $(i_1,\ldots,i_t)$ entry is $\prod_{j=1}^t x_{i_j}$. We note that $\inner{\x^{\otimes t},\y^{\otimes t}} = \inner{\x,\y}^t$.
\subsection{Concentration of Measure}
We will use the Chernoff and Hoeffding's inequalities:

\begin{lemma}[Hoeffding]\label{lem:Hoeffding}
    Let $X_1,\ldots,X_q\in [-B,B]$ be i.i.d.\ with mean $\mu$. Then, for any $\epsilon > 0$ we have
    \[
    \Pr\left(\left|\frac{1}{q}\sum_{i=1}^q  X_i - \mu\right|\ge \epsilon\right)\le 2e^{-\frac{q\epsilon^2}{2B^2}}
    \]
\end{lemma}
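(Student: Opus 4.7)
The plan is to prove this via the standard Chernoff--Cram\'er method combined with Hoeffding's lemma on the moment generating function of bounded centered variables. The union-bound factor of $2$ in the conclusion suggests handling the two tails (upper and lower deviation) symmetrically.

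First I would reduce to the one-sided statement: it suffices to bound $\Pr\left(\frac{1}{q}\sum_i X_i - \mu \geq \epsilon\right)$ by $e^{-q\epsilon^2/(2B^2)}$, since applying the same bound to $-X_i$ (which also lies in $[-B,B]$ with mean $-\mu$) handles the lower tail, and a union bound gives the factor $2$. For the one-sided bound, the centered variables $Y_i = X_i - \mu$ are i.i.d., mean zero, and satisfy $Y_i \in [-B-\mu, B-\mu]$, an interval of length at most $2B$. By the Chernoff trick, for any $\lambda > 0$,
\[
\Pr\left(\sum_{i=1}^q Y_i \geq q\epsilon\right) \leq e^{-\lambda q \epsilon}\,\E\!\left[e^{\lambda \sum_i Y_i}\right] = e^{-\lambda q \epsilon} \prod_{i=1}^q \E\!\left[e^{\lambda Y_i}\right],
\]
using independence.

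The main technical step, and the principal obstacle, is Hoeffding's lemma: any mean-zero random variable $Y$ supported in an interval of length $2B$ satisfies $\E[e^{\lambda Y}] \leq e^{\lambda^2 B^2/2}$. I would prove this by convexity: since $x \mapsto e^{\lambda x}$ is convex, on $[a,b]$ it lies below the chord through $(a, e^{\lambda a})$ and $(b, e^{\lambda b})$. Taking expectations and using $\E Y = 0$ yields an explicit expression in $\lambda$, $a$, $b$; setting $\phi(u) = -u p + \ln(1 - p + p e^u)$ with $p = -a/(b-a)$ and $u = \lambda(b-a)$, one shows $\phi(0) = \phi'(0) = 0$ and $\phi''(u) \leq 1/4$, hence $\phi(u) \leq u^2/8$ by Taylor's theorem, giving $\E[e^{\lambda Y}] \leq e^{\lambda^2(b-a)^2/8} \leq e^{\lambda^2(2B)^2/8} = e^{\lambda^2 B^2/2}$.

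Plugging back in gives
\[
\Pr\left(\tfrac{1}{q}\sum_i X_i - \mu \geq \epsilon\right) \leq \exp\!\left(-\lambda q \epsilon + q\lambda^2 B^2/2\right).
\]
Optimizing over $\lambda > 0$, the minimum is attained at $\lambda = \epsilon/B^2$, producing the bound $e^{-q\epsilon^2/(2B^2)}$. Applying the same argument to $-X_i$ and union-bounding yields the stated two-sided inequality. The only nontrivial ingredient is Hoeffding's lemma; everything else is routine Markov/independence manipulation and a one-dimensional optimization.
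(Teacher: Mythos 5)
Your proof is correct: it is the standard Chernoff--Cram\'er argument combined with Hoeffding's lemma on the moment generating function, and the optimization at $\lambda=\epsilon/B^2$ does yield exactly the stated exponent $-q\epsilon^2/(2B^2)$ with the factor $2$ from the two tails. The paper states this lemma as a known concentration result without proof, so there is nothing to compare against; your argument is the canonical one.
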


\begin{lemma}[Chernoff]\label{lem:chernoff}
    Let $X_1,\ldots,X_q\in \{0,1\}$ be i.i.d.\ with mean $\mu$. Then, for any $0\le\epsilon\le \mu$ we have
    \[
    \Pr\left(\left|\frac{1}{q}\sum_{i=1}^q  X_i - \mu\right|\ge \epsilon\right)\le 2e^{-\frac{q\epsilon^2}{3\mu}}
    \]
\end{lemma}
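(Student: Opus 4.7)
My plan is to apply the classical Chernoff--Cram\'er method, treating the two tails separately and combining via a union bound to absorb the prefactor $2$. I will work out the upper tail in detail; the lower tail follows by the analogous choice of negative Cram\'er parameter and is in fact governed by a slightly better constant, so the same bound applies on both sides.

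For any $t > 0$, Markov's inequality applied to $\exp\bigl(t\sum_i(X_i - \mu)\bigr)$ gives
\[
\Pr\Bigl(\tfrac{1}{q}\textstyle\sum_i X_i - \mu \ge \epsilon\Bigr) \le \exp(-tq\epsilon)\cdot \bigl(\E[e^{t(X_1 - \mu)}]\bigr)^q.
\]
For $X_1 \sim \mathrm{Bernoulli}(\mu)$ the centered moment-generating function satisfies $\E[e^{t(X_1 - \mu)}] = e^{-t\mu}(1 - \mu + \mu e^t) \le \exp\bigl(\mu(e^t - 1) - t\mu\bigr)$ by the elementary inequality $1 + y \le e^y$. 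Substituting and optimizing via the choice $t = \log(1 + \epsilon/\mu)$, which is legitimate for any $\epsilon \ge 0$, collapses the estimate to the Bernoulli relative-entropy bound $\exp\bigl(-q\, D((\mu+\epsilon)\,\|\,\mu)\bigr)$.

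To extract the stated closed form $\exp(-q\epsilon^2/(3\mu))$, I would invoke the elementary inequality $D\bigl((1+\delta)\mu \,\|\, \mu\bigr) \ge \mu\delta^2/3$ valid for $\delta \in [0,1]$, and apply it with $\delta = \epsilon/\mu$, which lies in $[0,1]$ precisely by the hypothesis $\epsilon \le \mu$. This last step is the only one requiring a genuine calculation: it follows by Taylor-expanding the smooth function $g(\delta) = (1+\delta)\log(1+\delta) - \delta$ about $\delta = 0$ and verifying the resulting polynomial inequality on $[0,1]$. I foresee no real obstacle; the hypothesis $\epsilon \le \mu$ is precisely what keeps us in the multiplicative (small-deviation) regime where the $3\mu$ in the denominator is the correct constant, rather than the weaker Hoeffding-type denominator that a crude sub-Gaussian bound would produce.
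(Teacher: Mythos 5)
Your proof is correct: the exponential-moment bound, the relaxation $1-\mu+\mu e^{t}\le \exp(\mu(e^{t}-1))$, the optimal choice $t=\log(1+\epsilon/\mu)$ yielding the exponent $q\mu\bigl((1+\delta)\log(1+\delta)-\delta\bigr)$, the inequality $(1+\delta)\log(1+\delta)-\delta\ge\delta^{2}/3$ on $[0,1]$ (where the hypothesis $\epsilon\le\mu$ is exactly what keeps $\delta=\epsilon/\mu\le 1$), and the observation that the lower tail carries the stronger exponent $q\mu\delta^{2}/2$, so the union bound gives the factor $2$. The paper itself states this lemma without proof, treating it as a standard concentration fact, so there is no in-paper argument to compare against; yours is the standard Chernoff--Cram\'er derivation and is sound.
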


We will also need to following version of Chernoff's bound.

\begin{lemma}\label{lem:extended_chernoff}
    Let $X_1,\ldots,X_q\in \{-1,1,0\}$ be i.i.d.\ random variables with mean $\mu$. Then for $\epsilon\le \frac{\min\left(\Pr(X_i= 1),\Pr(X_i= -1)\right)}{2|\mu|}$,  $\Pr\left(\left|\frac{1}{q |\mu|}\sum_{i=1}^n X_i-\frac{\mu}{|\mu|}\right|\ge \epsilon\right)\le 4e^{-\frac{q\epsilon^2|\mu|^2}{12\Pr(X_i\ne 0)}}$
\end{lemma}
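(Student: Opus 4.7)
The plan is to reduce to the standard $\{0,1\}$-valued Chernoff bound (Lemma \ref{lem:chernoff}) applied separately to the positive and negative parts.

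Without loss of generality, assume $\mu>0$, so $\mu/|\mu|=1$; the case $\mu<0$ is symmetric (just replace $X_i$ by $-X_i$, which swaps $\Pr(X_i=1)$ and $\Pr(X_i=-1)$ but leaves everything else invariant). Let $p_+=\Pr(X_i=1)$ and $p_-=\Pr(X_i=-1)$, so $\mu=p_+-p_-$ and $\Pr(X_i\ne 0)=p_++p_-$. Write $N_+=\sum_{i=1}^q \mathbf{1}[X_i=1]$ and $N_-=\sum_{i=1}^q \mathbf{1}[X_i=-1]$, so that $\sum_i X_i = N_+-N_-$ and
\[
\frac{1}{q\mu}\sum_i X_i - 1 = \frac{1}{q\mu}\bigl((N_+-qp_+)-(N_--qp_-)\bigr).
\]

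The key step is a union bound: if the event $\{|\sum_i X_i - q\mu|\ge q\epsilon\mu\}$ holds, then at least one of $\{|N_+-qp_+|\ge q\epsilon\mu/2\}$ or $\{|N_--qp_-|\ge q\epsilon\mu/2\}$ must hold. Each of $N_+$ and $N_-$ is a sum of $q$ i.i.d.\ $\{0,1\}$ variables with means $p_+$ and $p_-$ respectively, so Lemma \ref{lem:chernoff} applies to each, with deviation parameter $\epsilon'=\epsilon\mu/2$. The hypothesis $\epsilon'\le p_\pm$ of Chernoff's bound is where the assumption $\epsilon \le \min(p_+,p_-)/(2\mu)$ enters: it gives $\epsilon\mu/2 \le \min(p_+,p_-)/4$, which is clearly $\le p_+$ and $\le p_-$. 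Chernoff then yields
\[
\Pr\!\left(|N_\pm - qp_\pm|\ge q\epsilon\mu/2\right) \le 2\exp\!\left(-\frac{q\epsilon^2\mu^2}{12\,p_\pm}\right).
\]

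The final step is to replace the denominators $p_\pm$ by $\Pr(X_i\ne 0)=p_++p_-$. Since $p_\pm \le p_++p_-$, we have $1/p_\pm \ge 1/(p_++p_-)$, so each of the above probabilities is bounded by $2\exp\!\left(-q\epsilon^2\mu^2/(12\Pr(X_i\ne 0))\right)$. Summing the two via the union bound gives the claimed factor of $4$.

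No step looks like a real obstacle here: the lemma is essentially a two-sided Chernoff bound packaged to survive the signed-mean normalization, and the constants $2$ (for symmetric deviation) and $2$ (for positive vs.\ negative part) combine to give the factor $4$. The only mildly delicate point is ensuring the Chernoff hypothesis $\epsilon'\le p_\pm$ is met, which is exactly what the stated restriction $\epsilon \le \min(p_+,p_-)/(2|\mu|)$ guarantees.
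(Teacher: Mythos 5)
Your proposal is correct and follows essentially the same route as the paper: decompose $X_i$ into its positive and negative indicator parts, apply the $\{0,1\}$ Chernoff bound (Lemma \ref{lem:chernoff}) to each with deviation $\epsilon|\mu|/2$, and combine by a union bound, with the hypothesis on $\epsilon$ ensuring Chernoff's applicability and $p_\pm\le\Pr(X_i\ne 0)$ unifying the denominators. The only cosmetic difference is that the paper phrases the Chernoff step in terms of a relative deviation $\delta$ and then rescales, whereas you work directly with absolute deviations.
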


\begin{proof} (of Lemma \ref{lem:extended_chernoff})
    Let $X^+_i = \max(X_i,0)$ and $\mu_+=\E X^+_i= \Pr(X_i=1)$. Similarly, let $X^-_i = \max(-X_i,0)$ and $\mu_-=\E X^-_i= \Pr(X_i=-1)$.
    By Chernoff bound (Lemma \ref{lem:chernoff}) we have for $0\le \delta\le 1$
    \[
    \Pr\left(\left|\frac{1}{q}\sum_{i=1}^n X^+_i-\mu_+\right|\ge \delta\mu_+\right)\le 2e^{-\frac{q\delta^2\mu_+}{3}}
    \]
    Hence,
    \[
    \Pr\left(\left|\frac{1}{q |\mu|}\sum_{i=1}^n X^+_i-\frac{\mu_+}{|\mu|}\right|\ge \delta\frac{\mu_+}{|\mu|}\right)\le 2e^{-\frac{q\delta^2\mu_+}{3}}
    \]
    Defining $\epsilon = \delta\frac{\mu_+}{|\mu|}$ we get for $\epsilon\le \frac{\mu_+}{|\mu|}$
    \[
    \Pr\left(\left|\frac{1}{q |\mu|}\sum_{i=1}^n X^+_i-\frac{\mu_+}{|\mu|}\right|\ge \epsilon\right)\le 2e^{-\frac{q\epsilon^2|\mu|^2}{3\mu_+}} \le 2e^{-\frac{q\epsilon^2|\mu|^2}{3\Pr(X_i \ne 0)}}
    \]
    A similar argument implies that for $\epsilon\le \frac{\mu_-}{|\mu|}$ we have
    \[
    \Pr\left(\left|\frac{1}{q |\mu|}\sum_{i=1}^n X^-_i-\frac{\mu_-}{|\mu|}\right|\ge \epsilon\right) \le 2e^{-\frac{q\epsilon^2|\mu|^2}{3\Pr(X_i \ne 0)}}
    \]
    As a result for $\epsilon\le \frac{\min\left(\mu_+,\mu_-\right)}{2|\mu|}$ we have
    \begin{eqnarray*}
        \Pr\left(\left|\frac{1}{q |\mu|}\sum_{i=1}^n X_i-\frac{\mu}{|\mu|}\right|\ge \epsilon\right) &\le& \Pr\left(\left|\frac{1}{q |\mu|}\sum_{i=1}^n X^+_i-\frac{\mu_+}{|\mu|}\right|\ge \frac{\epsilon}{2}\right) + \Pr\left(\left|\frac{1}{q |\mu|}\sum_{i=1}^n X^-_i-\frac{\mu_-}{|\mu|}\right|\ge \frac{\epsilon}{2}\right)
        \\
        &\le & 4e^{-\frac{q\epsilon^2|\mu|^2}{12\Pr(X_i\ne 0)}}
    \end{eqnarray*}
\end{proof}

\subsection{Misc Lemmas}

We will use the following asymptotics of binomials Coefficients, which follows from Stirling's approximation
\begin{lemma}\label{lem:binom_assimptotics}
    We have $\frac{\binom{2k}{k}}{2^{2k}}\sim \frac{1}{\sqrt{\pi k}}$
\end{lemma}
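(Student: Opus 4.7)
The plan is to derive the asymptotic directly from Stirling's approximation, which is the classical one-line calculation. Recall that Stirling gives $n! \sim \sqrt{2\pi n}\,(n/e)^n$, with the ratio tending to $1$ as $n\to\infty$. Since asymptotic equivalence is preserved under products, quotients, and fixed-rate renormalization, I would apply Stirling separately to the numerator $(2k)!$ and to each factor $k!$ in the denominator of the central binomial coefficient $\binom{2k}{k} = (2k)!/(k!)^2$.

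Concretely, the first step would be to write
\[
\binom{2k}{k} \;=\; \frac{(2k)!}{(k!)^2} \;\sim\; \frac{\sqrt{2\pi(2k)}\,(2k/e)^{2k}}{\bigl(\sqrt{2\pi k}\,(k/e)^{k}\bigr)^2}.
\]
The second step is to simplify: the denominator becomes $2\pi k \cdot (k/e)^{2k}$, and the factor $(2k/e)^{2k} = 2^{2k}(k/e)^{2k}$ in the numerator cancels the $(k/e)^{2k}$ in the denominator. What remains is $\sqrt{4\pi k}\cdot 2^{2k}/(2\pi k) = 2^{2k}/\sqrt{\pi k}$. Dividing both sides by $2^{2k}$ yields the claimed $\binom{2k}{k}/2^{2k} \sim 1/\sqrt{\pi k}$.

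There is essentially no obstacle here beyond bookkeeping: the only thing to be mindful of is that Stirling's approximation must be applied in its asymptotic form (ratio tending to $1$) rather than as an equality, and that one must track the constants $\sqrt{2\pi n}$ carefully so that the $\sqrt{4\pi k}$ in the numerator and the $2\pi k$ in the denominator combine into the correct $\sqrt{\pi k}$. If one wanted a fully self-contained proof rather than a citation of Stirling, the standard alternative would be Wallis's product, $\prod_{n=1}^{\infty} \frac{(2n)^2}{(2n-1)(2n+1)} = \pi/2$, which can be rearranged to give exactly the same asymptotic; but invoking Stirling is cleaner and is the route I would take here.
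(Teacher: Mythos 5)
Your proposal is correct and matches the paper's approach: the paper simply states that the lemma follows from Stirling's approximation, and your calculation carries out exactly that standard derivation, with the constants tracked correctly ($\sqrt{4\pi k}/(2\pi k)=1/\sqrt{\pi k}$). Nothing further is needed.
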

We will also need the following approximation of the sign function using polynomials.

\begin{lemma}\label{lem:poly_approx_of_sign}
Let $0<\xi<1$ and $\epsilon>0$. There is a polynomial $p:\reals\to\reals$ such that 
\begin{itemize}
    \item $p([-1,1])\subseteq [-1,1]$
    \item For any $x\in [-1,1]\setminus [-\xi,\xi]$ we have $|p(\x)-\sign(\x)|\le \epsilon$.
    \item $\deg(p) = O\left(\frac{\log(1/\epsilon)}{\xi}\right)$
    \item $p$'s coefficients are all bounded by $2^{O\left(\frac{\log(1/\epsilon)}{\xi}\right)}$
\end{itemize}
\end{lemma}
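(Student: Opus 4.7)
The approach is Chebyshev polynomial approximation, exploiting the identity $\sign(x) = x/\sqrt{x^2}$ to reduce the problem to approximating $y \mapsto 1/\sqrt{y}$ on $[\xi^2, 1]$. The function $1/\sqrt{y}$ is analytic on $\mathbb{C} \setminus (-\infty, 0]$; under the affine map sending $[\xi^2, 1]$ to $[-1, 1]$, the singularity at $y = 0$ moves to $u_0 = -(1+\xi^2)/(1-\xi^2)$, so the largest singularity-free Bernstein ellipse has parameter $\rho = |u_0| + \sqrt{u_0^2 - 1} = 1 + \Theta(\xi)$. The classical Chebyshev convergence rate then yields a polynomial $q$ of degree $d = O(\log(1/\epsilon)/\xi)$ with $\sup_{y \in [\xi^2, 1]} |q(y) - 1/\sqrt{y}| \le \epsilon/4$.

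Given $q$, set $\tilde p(x) = x \cdot q(x^2)$, an odd polynomial of degree $2d+1 = O(\log(1/\epsilon)/\xi)$. For $x \in [-1,-\xi] \cup [\xi, 1]$ we have $x^2 \in [\xi^2, 1]$, so
\[
|\tilde p(x) - \sign(x)| = |x| \cdot \bigl|q(x^2) - 1/|x|\bigr| \le \epsilon/4,
\]
meaning $\tilde p$ is within $\epsilon/4$ of $\sign$ on the ``good'' set, and in particular $|\tilde p(x)| \le 1 + \epsilon/4$ there.

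The main obstacle is enforcing $p([-1,1]) \subseteq [-1, 1]$. On the ``bad'' interval $[-\xi, \xi]$, the shifted Chebyshev polynomials comprising $q$ may grow as $(1 + \Theta(\xi))^d = \poly(1/\epsilon)$ at $y = 0$, so $|\tilde p(x)|$ there could be as large as $M = \xi \cdot \poly(1/\epsilon)$. I will resolve this by multiplying $\tilde p$ by a cutoff: construct $\chi$ of degree $O(\log(1/\epsilon)/\xi)$ with $0 \le \chi(y) \le 1$ on $[0,1]$, $\chi(y) \ge 1 - \epsilon/4$ on $[\xi^2, 1]$, and $\chi(y) \le 1/M$ on $[0, \xi^2/4]$. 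Such a $\chi$ is itself a polynomial approximation of a step-like weight, built with the same Chebyshev machinery (the interval $[\xi^2/4, \xi^2]$ serves as the transition region, which contributes the same $1/\xi$ scaling after accounting for the square). Then $p(x) = \tilde p(x) \cdot \chi(x^2)$ has degree $O(\log(1/\epsilon)/\xi)$, stays in $[-1,1]$ on $[-1, 1]$, and agrees with $\sign$ up to $\epsilon$ on the good set.

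Finally, the coefficient bound is routine: any degree-$n$ polynomial bounded by $1$ on $[-1, 1]$ has Chebyshev coefficients bounded by $2$ (by orthogonality and the fact that $|T_k| \le 1$ on $[-1,1]$), and conversion from the Chebyshev basis to the monomial basis costs at most a factor $2^{O(n)}$; with $n = O(\log(1/\epsilon)/\xi)$ this yields monomial coefficients bounded by $2^{O(\log(1/\epsilon)/\xi)}$, as claimed.
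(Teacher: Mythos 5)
The paper does not actually prove this lemma from first principles: it obtains the first three items by citing \citet{diakonikolas2010bounded} and the coefficient bound by citing Lemma~2.8 of \citet{sherstov2018algorithmic}. Your Chebyshev construction---reducing $\sign(x)=x/\sqrt{x^2}$ to approximating $y\mapsto 1/\sqrt{y}$ on $[\xi^2,1]$, with a Bernstein-ellipse rate $\rho=\frac{1+\xi}{1-\xi}=1+\Theta(\xi)$---is therefore a genuinely self-contained alternative, and the skeleton (odd polynomial $x\,q(x^2)$, multiplicative cutoff to restore boundedness, Chebyshev-to-monomial conversion for the coefficient bound) is the right one. Your parenthetical that the transition interval $[\xi^2/4,\xi^2]$ only costs degree $O(1/\xi)$ ``after accounting for the square'' is also correct, since this is exactly the square-root resolution of Chebyshev polynomials near an endpoint of the interval.

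There is, however, one concrete hole in the boundedness step. Your cutoff guarantees $\chi(x^2)\le 1/M$ only for $|x|\le\xi/2$ and $\chi(x^2)\le 1$ elsewhere, while your bound $|\tilde p(x)|\le 1+\epsilon/4$ holds only for $|x|\ge\xi$ (where $x^2$ lies in the approximation interval of $q$). On the annulus $\xi/2<|x|<\xi$ neither control applies: there $x^2<\xi^2$ lies outside $[\xi^2,1]$, so $|q(x^2)|$ can be as large as the extremal growth $(1+\Theta(\xi))^{\deg q}=\poly(1/\epsilon)$, and $\chi(x^2)$ is only bounded by $1$, so the product $p=\tilde p\cdot\chi$ can exceed $1$ by a polynomial factor in $1/\epsilon$. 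The fix is easy and does not change the asymptotics: make $q$ approximate $1/\sqrt{y}$ on the slightly larger interval $[\xi^2/4,1]$ (which only rescales the constant in $\rho-1=\Theta(\xi)$), so that $|\tilde p(x)|\le 1+\epsilon/4$ already holds for all $|x|\ge\xi/2$, i.e.\ everywhere that $\chi$ is not already damping by $1/M$; a final division by $1+\epsilon/4$ then puts $p([-1,1])$ inside $[-1,1]$ exactly. A second, minor point: bounding $1/\sqrt{y}$ on the relevant Bernstein ellipse gives $M_{\rho'}=\Theta(1/\xi)$, so the degree you actually obtain is $O\bigl(\xi^{-1}\log(1/(\epsilon\xi))\bigr)$; this matches the stated $O(\xi^{-1}\log(1/\epsilon))$ only once $\epsilon\le\xi^{O(1)}$, which is harmless for the paper's application but worth stating.
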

The existence of a polynomial that satisfies the first three properties is shown in \cite{diakonikolas2010bounded}. The bound on the coefficients (the last item) follows from Lemma 2.8.\ in \cite{sherstov2018algorithmic} (see also \href{https://mathoverflow.net/questions/97769/approximation-theory-reference-for-a-bounded-polynomial-having-bounded-coefficie}{here}).
Finally, we will use the following bound on the coefficient norm of a composition of a polynomial with a linear function.

\begin{lemma}\label{lem:pol_and_lin_composition}
    Fix a degree $K$ polynomial $p:\reals^n\to \reals$ and $A\in M_{n,m}$ whose rows has Euclidean norm at most $R$. Define $q(\x)= p(A\x)$. Then, $\|q\|_\coef\le \|p\|_\coef R^K(n+1)^{K/2}$
\end{lemma}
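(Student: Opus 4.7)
The natural strategy is to reduce to a single homogeneous component of $p$ and then handle the homogeneous case by a Cauchy--Schwarz step followed by a Gaussian $L^2$ trick. Because monomials of different total degree sit in disjoint coordinates of the coefficient vector, decomposing $p=\sum_{t=0}^K p_t$ into homogeneous components gives $\|p\|_\coef^2=\sum_t\|p_t\|_\coef^2$ and $\|q\|_\coef^2=\sum_t \|p_t\circ A\|_\coef^2$. So it suffices to prove the bound for each homogeneous $p_t$ of degree $t\le K$ and then add.

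For $p$ homogeneous of degree $t$, write $p(\y)=\sum_{|\alpha|=t}c_\alpha \y^\alpha$ and expand $q(\x)=\sum_\alpha c_\alpha \prod_i\inner{\ba_i,\x}^{\alpha_i}$. Letting $M_{\beta,\alpha}$ denote the coefficient of $\x^\beta$ in $(A\x)^\alpha$, the coefficient vector of $q$ is $Mc$, and a single Cauchy--Schwarz on each $c_\beta^q=\sum_\alpha c_\alpha M_{\beta,\alpha}$ summed over $\beta$ yields
\[
\|q\|_\coef^2 \;\le\; \|p\|_\coef^2\sum_{|\alpha|=t}\|(A\x)^\alpha\|_\coef^2.
\]
To control the right-hand side I would use two observations. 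First, $\|f\|_\coef^2\le \E_{\x\sim\cn(0,I_m)}[f(\x)^2]$ for any polynomial $f$, which is immediate from $\E[\x^\beta\x^{\beta'}]=\delta_{\beta,\beta'}\prod_j(2\beta_j-1)!!\ge \delta_{\beta,\beta'}$. Second, $\sum_{|\alpha|=t}\z^{2\alpha}\le\|\z\|^{2t}$, which holds termwise inside the multinomial expansion $(\sum z_i^2)^t=\sum_\alpha\binom{t}{\alpha}\z^{2\alpha}$. Combining, $\sum_\alpha \|(A\x)^\alpha\|_\coef^2 \le \E[\|A\x\|^{2t}]$. Diagonalizing $A^\top A$ as $U\Lambda U^\top$ one writes $\|A\x\|^2=\sum_i\lambda_i Y_i^2$ with $Y_i$ i.i.d.\ standard Gaussians, and Jensen applied to $x\mapsto x^t$ gives $\E[(\sum_i\lambda_i Y_i^2)^t]\le (\sum_i\lambda_i)^t(2t-1)!!=(\tr(A^\top A))^t(2t-1)!!\le (nR^2)^t(2t-1)!!$. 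Putting the pieces together, $\|p_t\circ A\|_\coef^2 \le (2t-1)!!\cdot R^{2t}n^t\|p_t\|_\coef^2$.

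Summing over $t\le K$, using $n^{t/2}\le (n+1)^{t/2}\le (n+1)^{K/2}$, and absorbing the factor $\sqrt{(2K-1)!!}$ into the $K$-dependent constants the paper treats as $O(1)$, yields the claimed bound. The main technical subtlety I anticipate is recovering exactly the factor $(n+1)^{K/2}$ without an explicit $K$-dependent multiplicative constant, and handling the edge regime where $R$ or $nR^2$ is small (where the $+1$ in $(n+1)^{K/2}$ must carry the low-degree terms). A sharper proof would either avoid the slack in $\sum_{|\alpha|=t}\z^{2\alpha}\le \|\z\|^{2t}$ (which throws away the multinomial weights $\binom{t}{\alpha}$), or replace the Gaussian $L^2$ comparison by a direct tensor-contraction argument that uses $\|A\|_{\rm op}\le \|A\|_F\le \sqrt{n}R$ together with a careful accounting of the symmetric-tensor norms of $(A\x)^\alpha$.
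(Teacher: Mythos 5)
Your route is genuinely different from the paper's, even though both begin with the same first move. The paper also writes $q=\sum_\alpha b_\alpha e_\alpha$ with $e_\alpha(\x)=\prod_i\inner{\ba_i,\x}^{\alpha_i}$ and applies Cauchy--Schwarz to get $\|q\|_\coef\le\|p\|_\coef\sqrt{\sum_\alpha\|e_\alpha\|_\coef^2}$. Where you diverge is in bounding $\sum_\alpha\|e_\alpha\|_\coef^2$: the paper identifies $\|e_\alpha\|_\coef^2$ with the Frobenius norm of the rank-one tensor $\ba_1^{\otimes\alpha_1}\otimes\cdots\otimes\ba_n^{\otimes\alpha_n}$, equal to $\prod_i\|\ba_i\|^{2\alpha_i}\le R^{2K}$, and then just counts multi-indices; you instead compare the coefficient norm to the Gaussian $L^2$ norm and compute the moment $\E\|A\x\|^{2t}$. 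Both end up absorbing $K$-dependent constants (the paper's ``equality'' is really only an inequality up to a $\sqrt{t!}$ symmetrization factor coming from collecting tuples into monomials, and your route picks up $\sqrt{(2K-1)!!}$), and both silently require $R^2(n+1)\ge 1$ for the last $\max_t R^{2t}n^t\le R^{2K}(n+1)^K$ step to go through, so your edge-case worry is legitimate but it is shared by the paper's own argument.

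There is, however, one genuine gap in what you wrote. The justification for $\|f\|_\coef^2\le\E_{\x\sim\cn(0,I_m)}[f(\x)^2]$, namely ``$\E[\x^\beta\x^{\beta'}]=\delta_{\beta,\beta'}\prod_j(2\beta_j-1)!!$,'' is false: monomials are not orthogonal under the Gaussian measure. For example $\beta=(2,0)$ and $\beta'=(0,2)$ give $\E[x_1^2x_2^2]=1\neq 0$. Worse, the stated inequality is simply wrong for general polynomials --- take $f(\x)=x_1^2+x_2^2-2$, which has $\|f\|_\coef^2=6$ but $\E[f^2]=4$. It does hold for \emph{homogeneous} $f$ (which is all you use, since $(A\x)^\alpha$ is homogeneous), but you need a different argument to see this, e.g.\ via the Hermite expansion: for $|\beta|=t$ the monomial $\x^\beta$ contains the normalized Hermite product $H_\beta=\prod_jh_{\beta_j}(x_j)$ with coefficient $\prod_j\sqrt{\beta_j!}\ge 1$, and distinct degree-$t$ monomials feed into \emph{distinct} top-degree Hermite terms $H_\beta$ (lower-degree Hermite contributions drop total degree by an even amount), so Parseval already on the top-degree Hermite block gives $\E[f^2]\ge\sum_{|\beta|=t}c_\beta^2\prod_j\beta_j!\ge\|f\|_\coef^2$. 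With that repair your argument is sound (up to the $K$-dependent constants and the $R\sqrt{n+1}\ge 1$ regime, which the paper also needs), and it is an attractive alternative to the tensor-Frobenius route.
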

\begin{proof}
Let $\ba_i$ be the $i$'th row of $A$. Denote $p(\x)= \sum_{\alpha\in \{0,\ldots,K\}^{n},\|\alpha\|_1\le K} b_\alpha \x^\alpha$ and $e_\alpha(\x) = \prod_{i=1}^n \inner{\ba_i,\x}^{\sigma_i}$. We have $q = \sum_{\alpha\in \{0,\ldots,K\}^{n},\|\alpha\|_1\le K} b_\alpha e_\alpha$. Hence,
\[
\|q\|_\coef \le \sum_{\alpha\in \{0,\ldots,K\}^{n},\|\alpha\|_1\le K} |b_\alpha | \cdot \|e_\alpha\| \stackrel{\text{C.S.}}{\le} \|p\|_\coef\cdot \sqrt{\sum_{\alpha\in \{0,\ldots,K\}^{n},\|\alpha\|_1\le K} \|e_\alpha\|^2}
\]
Finally
\[
 \|e_\alpha\|^2 = \left\|\ba^{\otimes\sigma_1}_1\otimes\ldots\otimes\ba_n^{\otimes\sigma_n}\right\|^2 = \prod_{i=1}^n \|\ba_1\|^{2\sigma_i}\le R^{2K}
\]
\end{proof}

\subsection{A Generalization Result}
It is well established that for ``nicely behaved" function classes in which functions are defined by a vector of parameters, the sample complexity is proportional to the number of parameters.
For instance, a function class of the form
$\cf = \{\x\mapsto F(\bw,\x) : \bw\in[-B,B]^p\}$ for a function $F$ that is $L$-Lipschitz in the first argument has realizable large margin sample complexity of $\tilde O\left(\frac{p}{\epsilon}\right)$. To be more precise, 
if there is a function in $\cf$ with $\gamma$-error $0$, then any algorithm that is guaranteed to return a function with empirical $\gamma$-error $0$, enjoys this aforementioned sample complexity guarantee.
We next slightly extend this fact, allowing $F$ to be random and allowing the algorithm to fail with some small probability.
\begin{lemma}\label{lem:generalization_for_p_params}
    Suppose that $\cf \subset (\reals^{n})^\cx$ is a random function class such that
    \begin{itemize}
        \item There is a random function $F:[-B,B]^p\times\cx\to \reals^n$ such that $\cf = \{\x\mapsto F(\bw,\x) : \bw\in[-B,B]^p\}$
        \item W.p. $1-\delta_1$, for any $\x\in\cx$, $\bw\mapsto F(\bw,\x)$ is $L$-Lipschitz w.r.t.\ the $\ell^\infty$ norm.
    \end{itemize}
     Let $\ca$ be an algorithm, and assume that for some  $\f^*:\cx\to\{\pm 1\}^n$, $\ca$ has the property that on any $m$-points sample $S$ labeled by $\f^*$, it returns  $\hat\f\in\cf$ with $\Err_{S,\gamma}(\hat\f)=0$ w.p. $1-\delta_2$ (where the probability is over the randomness of $F$ and the internal randomness of $\ca$). Then if $S$ is an i.i.d.\ sample labeled by $\f^*$ we have
     \begin{itemize}
        \item $\Err_{\cd}(\hat\f)\le\epsilon$ w.p. $(LB/\gamma)^{O(p)}(1-\epsilon)^m+\delta_1+\delta_2$
        \item $\E_S\Err_{\cd}(\hat\f) \le O\left(\frac{p\ln(LB/\gamma) + \ln(m)}{m}\right) + \delta_1+\delta_2$
     \end{itemize}
\end{lemma}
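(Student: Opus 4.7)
The plan is a standard covering / uniform-convergence argument: discretize the Lipschitz-parametrized random class $\cf$ into a finite class by an $\ell^\infty$ net in parameter space, transfer the margin conditions between a function in $\cf$ and its nearest net neighbor, then apply the classical realizable PAC union bound over the net.

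First, conditioning on the probability-$(1-\delta_1)$ event that $\bw\mapsto F(\bw,\x)$ is $L$-Lipschitz uniformly in $\x$, I would fix a grid $\eta$-cover $\cn\subseteq[-B,B]^p$ of $[-B,B]^p$ in $\ell^\infty$ with $\eta=\gamma/(4L)$; a product grid gives $|\cn|\le\lceil 2BL/\gamma\rceil^p = (LB/\gamma)^{O(p)}$. Writing $\hat\f = F(\hat\bw,\cdot)$ for the output of $\ca$, pick $\bw'\in\cn$ with $\|\hat\bw-\bw'\|_\infty\le\gamma/(4L)$, so $\|F(\hat\bw,\x)-F(\bw',\x)\|_\infty\le\gamma/4$ for every $\x\in\cx$. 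This perturbation bound gives two key transfers of margin: if $\Err_{S,\gamma}(\hat\f)=0$, then every training-point margin of $F(\bw',\cdot)$ is at least $3\gamma/4$, so $\Err_{S,\gamma/2}(F(\bw',\cdot))=0$; and if the \emph{population} $(\gamma/2)$-margin error of $F(\bw',\cdot)$ is at most $\epsilon$, then at any $\x$ contributing to $\Err_\cd(\hat\f)$ some coordinate of $F(\bw',\x)$ would have to have magnitude $<\gamma/4$ (else the $\gamma/4$ perturbation preserves signs), so $\Err_\cd(\hat\f)\le \epsilon$ on the same event.

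Now I would apply the classical realizable bound pointwise on $\cf_\cn=\{F(\bw,\cdot):\bw\in\cn\}$: for a single fixed $\f'\in\cf_\cn$ with true $(\gamma/2)$-margin error exceeding $\epsilon$, the probability that an i.i.d.\ $m$-sample yields $\Err_{S,\gamma/2}(\f')=0$ is at most $(1-\epsilon)^m$. A union bound over $\cn$ gives $|\cn|(1-\epsilon)^m=(LB/\gamma)^{O(p)}(1-\epsilon)^m$. Putting the three bad events together (Lipschitz fails, $\ca$ fails, uniform convergence fails), we get the first conclusion
\[
\Pr(\Err_\cd(\hat\f)>\epsilon)\le (LB/\gamma)^{O(p)}(1-\epsilon)^m + \delta_1+\delta_2.
\]
For the expectation claim, I would integrate this tail: picking $\epsilon_m = C(p\ln(LB/\gamma)+\ln m)/m$ with $C$ large enough drives $(LB/\gamma)^{O(p)}(1-\epsilon_m)^m\le 1/m$, whence $\E\Err_\cd(\hat\f)\le \epsilon_m + 1/m + \delta_1+\delta_2 = O((p\ln(LB/\gamma)+\ln m)/m)+\delta_1+\delta_2$.

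There is no real obstacle here; the argument is routine. The only care required is the margin bookkeeping, namely choosing the net scale $\gamma/(4L)$ and the intermediate threshold $\gamma/2$ so that the $\gamma/4$ perturbation converts empirical $\gamma$-margin guarantees on $\hat\f$ into empirical $(\gamma/2)$-margin guarantees on its net neighbor, and simultaneously converts population $(\gamma/2)$-margin guarantees on the neighbor into population $0$-error guarantees for $\hat\f$. The randomness of $F$ enters only through the conditioning used to secure the Lipschitz property, since $\delta_2$ is already measured jointly with the internal randomness.
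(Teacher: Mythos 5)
Your proposal is correct and follows essentially the same route as the paper's proof sketch: cover the parameter space, transfer empirical and population margins between a hypothesis and its net neighbor, union bound the classical realizable tail over the net, and optimize $\epsilon$ to get the expectation bound. The only cosmetic difference is that you use a net scale of $\gamma/(4L)$ while the paper works with a $\gamma/(2L)$-scale cover and phrases the margin transfer as the chain $\Err_\cd(\bg)\ge\epsilon \Rightarrow \Err_{\cd,\gamma/2}(\tilde\bg)\ge\epsilon \Rightarrow \Err_{S,\gamma/2}(\tilde\bg)>0 \Rightarrow \Err_{S,\gamma}(\bg)>0$; both bookkeepings are fine.
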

\begin{proof} (sketch)
For $\hat\f:\cx\to \reals^n$ we define
\[
\Err_{\cd,\gamma}(\hat{\f})=\Pr_{\x\sim\cd}\left(\exists i\in [n]\text{ s.t. }\hat f_{i}(\x)\cdot f_{i}(\x) < \gamma \right)
\]
It is not hard to see that w.p. $1-\delta_1$ there is
$\tilde\cf\subseteq\cf$ of size $N=(LB/\gamma)^{O(p)}$ such that for any $\bg\in \cf$ there is $\tilde \bg\in\tilde\cf$ such that 
\[
\forall\x\in\cx,\;\|\bg(\x)-\tilde \bg(\x)\|_\infty \le \frac{\gamma}{2}
\]
Let $A$ be the event that such $\tilde\cf$ exists, that $\ca$ return a function in $\cf$ with $\Err_{S,\gamma}(\hat \f)=0$, and that for any $\tilde \bg\in \tilde \cf$ with 
$\Err_{\cd,\gamma/2}(\tilde\bg)\ge \epsilon$ we have $\Err_{S,\gamma/2}(\tilde\bg) >0$. We have that the probability of $A$ is at least
$1-\delta_1-\delta_2-N(1-\epsilon)^m$. Given $A$ we have for any $\bg\in\cf$,
\[
\Err_\cd(\bg)\ge \epsilon \Rightarrow \Err_{\cd,\gamma/2}(\tilde\bg)\ge \epsilon \Rightarrow \Err_{S,\gamma/2}(\tilde\bg) > 0 \Rightarrow \Err_{S,\gamma}(\bg) > 0
\]
Thus, the probability that $\ca$ return a function with error $\ge \epsilon$ is at most $N(1-\epsilon)^m +\delta_1+\delta_2$ which proves the first part of the lemma. As for the second part, we note that we have
\[
\E_S\Err_{\cd}(\hat\f) \le \E_S[\Err_{\cd}(\hat\f)|A] + \Pr(A^\complement) \le \epsilon + N(1-\epsilon)^m +\delta_1+\delta_2
\]
Optimizing over $\epsilon$ we get $\E_S\Err_\cd(\hat\f) \le \frac{\ln(Nm)}{m} + \delta_1+\delta_2$ which proves the second part
\end{proof}

\subsection{Kernels}
The results we state next can be found in Chapter 2.\ of \citet{scholkopf2002learning}. Let $\cx$ be a set.
A {\em kernel} is a function
$k:\cx\times \cx\to\reals$ such that for every $x_1,\ldots,x_m\in \cx$ the
matrix $\{k(x_i,x_j)\}_{i,j}$ is positive semi-definite. 
A {\em kernel space} is a Hilbert space $\ch$ of functions from
$\cx$ to $\reals$ such that for every $x\in \cx$ the linear functional
$f\in\ch\mapsto f(x)$ is bounded.  The following theorem describes a
one-to-one correspondence between kernels and kernel spaces.

\begin{theorem}\label{thm:ker_spaces}
For every kernel $k$ there exists a unique kernel space $\ch_k$ such that for
every $x,x'\in \cx$, $k(x,x') = \inner{k(\cdot,x),k(\cdot,x')}_{\ch_k}$.
Likewise, for every kernel space $\ch$ there is a kernel $k$ for which
$\ch=\ch_k$.
\end{theorem}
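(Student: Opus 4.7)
The plan is to handle the two implications separately. For the forward direction I would construct $\ch_k$ explicitly from the linear span of the ``kernel sections'' $k(\cdot,x)$ and complete it; for the reverse direction I would extract a kernel from an RKHS via the Riesz representation theorem. Uniqueness will then follow from the reproducing property forcing the inner product on a dense subspace.

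For the forward direction, set $\ch_0 = \mathrm{span}\{k(\cdot,x) : x\in\cx\}$ and define a bilinear form by
\[
\Bigl\langle \sum_i \alpha_i k(\cdot,x_i),\; \sum_j \beta_j k(\cdot,y_j)\Bigr\rangle := \sum_{i,j}\alpha_i\beta_j\, k(x_i,y_j).
\]
A short calculation shows this is well-defined (independent of the representation), and positive semidefiniteness of the Gram matrices $\{k(x_i,x_j)\}$ gives $\langle f,f\rangle \ge 0$. The identity $\langle f,k(\cdot,x)\rangle = f(x)$ holds on $\ch_0$ essentially by definition, and combined with Cauchy--Schwarz yields $f(x)^2 \le k(x,x)\,\langle f,f\rangle$, so $\langle f,f\rangle = 0$ forces $f\equiv 0$. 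Hence we have a genuine inner product on $\ch_0$. I then take the Hilbert space completion and identify abstract equivalence classes of Cauchy sequences with honest functions on $\cx$ via the pointwise-evaluation bound above: any Cauchy sequence in $\ch_0$ is pointwise Cauchy, and its pointwise limit is independent of the chosen representative. The reproducing property extends from $\ch_0$ to this completion by continuity.

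For the reverse direction, given an RKHS $\ch$, boundedness of evaluation at $x$ produces by Riesz a unique $k_x \in \ch$ with $f(x) = \langle f, k_x\rangle_{\ch}$. Define $k(x,x') := k_{x'}(x) = \langle k_{x'},k_x\rangle_{\ch}$; this is symmetric, and positive semidefiniteness follows from $\sum_{i,j}\alpha_i\alpha_j k(x_i,x_j) = \|\sum_i \alpha_i k_{x_i}\|_{\ch}^2 \ge 0$. One then verifies that the space $\ch_k$ produced by the forward construction from this $k$ coincides with $\ch$: the sections $k(\cdot,x) = k_x$ already lie in $\ch$ with the right inner product, and their span is dense in $\ch$ (anything orthogonal to all $k_x$ must vanish at every point by the reproducing property).

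The main subtlety I expect is the identification step in the completion and the uniqueness claim, which together require juggling norm convergence with pointwise convergence. Uniqueness of $\ch_k$ follows because any RKHS having $k$ as its reproducing kernel must contain every $k(\cdot,x)$, must have inner product determined on $\ch_0$ by the reproducing property, and must be complete; the pointwise-evaluation bound $|f(x)| \le \sqrt{k(x,x)}\,\|f\|$ prevents adding any ``phantom'' functions beyond the completion without breaking the RKHS structure. This compatibility between the two modes of convergence is the technical heart of the proof.
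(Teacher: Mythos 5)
The paper does not actually prove this statement: it is the classical Moore--Aronszajn correspondence between kernels and reproducing kernel Hilbert spaces, quoted without proof from Chapter~2 of \citet{scholkopf2002learning}. Your proposal is the standard textbook proof of that result and is essentially correct as a plan: the pre-Hilbert space $\ch_0$ spanned by the sections $k(\cdot,x)$, the reproducing identity plus Cauchy--Schwarz (which holds already for positive semidefinite forms) to get both definiteness and the evaluation bound $|f(x)|\le\sqrt{k(x,x)}\,\|f\|$, completion realized as a space of functions, Riesz representation for the converse, and uniqueness via density of the sections. The one step you flag as ``the technical heart'' but do not actually carry out is the injectivity of the identification of the abstract completion with a function space: you must show that if a Cauchy sequence $(f_n)$ in $\ch_0$ has pointwise limit identically zero, then $\|f_n\|\to 0$ (otherwise two distinct equivalence classes could be identified with the same function, and the completed space would not inherit a well-defined inner product from its realization as functions). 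The standard argument: fix $\epsilon>0$ and $N$ with $\|f_n-f_m\|<\epsilon$ for $n,m\ge N$; then $\|f_n\|^2\le\|f_n\|\,\epsilon+|\langle f_n,f_m\rangle|$, and since $f_n=\sum_i\alpha_i k(\cdot,x_i)$ is a finite combination of sections, $\langle f_n,f_m\rangle=\sum_i\alpha_i f_m(x_i)\to 0$ as $m\to\infty$, giving $\|f_n\|\le\epsilon$. With that supplied, your argument is a complete and correct proof, matching the one in the cited reference.
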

\noindent
We denote the norm and inner product in $\ch_k$ by $\|\cdot\|_k$ and $\inner{\cdot,\cdot}_k$.
The following theorem describes
a tight connection between kernels and embeddings of $\cx$ into Hilbert spaces.
\begin{theorem}\label{thm:rkhs_embed}
A function $k:\cx \times \cx \to \reals$ is a kernel if and only if there
exists a mapping $\Psi: \cx \to\ch$ to some Hilbert space for which
$k(x,x')=\inner{\Psi(x),\Psi(x')}_{\ch}$. In this case,
$ \ch_k = \{f_{\Psi,\bv} \mid \mathbf{v}\in\ch \} $
where $f_{\Psi,\bv}(x) = \inner{\mathbf{v},\Psi(x)}_\ch$. Furthermore,
$\|f\|_{k} = \min\{\|\mathbf{v}\|_\ch : f_{\Psi,\bv}\}$ and
the minimizer is unique.
\end{theorem}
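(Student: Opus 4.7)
The plan is to establish the three assertions in turn, using Theorem \ref{thm:ker_spaces} (existence and uniqueness of $\ch_k$) as the main engine. The reverse direction of the iff is immediate: given $\Psi:\cx\to\ch$ with $k(x,x')=\inner{\Psi(x),\Psi(x')}_\ch$, every Gram matrix is PSD since $\sum_{i,j}c_ic_jk(x_i,x_j)=\|\sum_i c_i\Psi(x_i)\|_\ch^2\ge 0$. For the forward direction, take $\ch=\ch_k$ and $\Psi(x)=k(\cdot,x)$; the reproducing identity of Theorem \ref{thm:ker_spaces} yields the required factorization.

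For the identification $\ch_k=\{f_{\Psi,\bv}:\bv\in\ch\}$, given any embedding $\Psi$ realizing $k$, I would let $\ch_0=\overline{\mathrm{span}}\{\Psi(x):x\in\cx\}$, a closed subspace of $\ch$. Decomposing $\bv=\bv_0+\bv_1$ with $\bv_0\in\ch_0$, $\bv_1\in\ch_0^\perp$, one has $f_{\Psi,\bv}(x)=\inner{\bv_0,\Psi(x)}_\ch$, so the linear map $T:\ch\to\reals^\cx$, $T\bv=f_{\Psi,\bv}$, factors through $\ch_0$ and is injective there: vanishing of $\inner{\bv_0,\Psi(x)}$ for every $x$ forces $\bv_0\perp\ch_0$, hence $\bv_0=0$. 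Equip $\mathrm{Im}(T)$ with the inner product $\inner{T\bv,T\bu}:=\inner{\bv_0,\bu_0}_\ch$, so that $T|_{\ch_0}$ is a Hilbert space isometry onto $\mathrm{Im}(T)$. For each $x\in\cx$ one checks $(T\bv)(x)=\inner{\bv_0,\Psi(x)}_\ch=\inner{T\bv,T(\Psi(x))}$, so the evaluation functional is bounded with representer $T(\Psi(x))=k(\cdot,x)$. Hence $\mathrm{Im}(T)$ is a kernel space with kernel $k$, and the uniqueness clause of Theorem \ref{thm:ker_spaces} identifies it with $\ch_k$ both as a set and as an inner product space.

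The norm formula and uniqueness of the minimizer now follow from this isometry: $\|f_{\Psi,\bv}\|_k=\|\bv_0\|_\ch$, and the fiber $T^{-1}(f_{\Psi,\bv})=\bv_0+\ch_0^\perp$ is an affine subspace on which $\|\cdot\|_\ch^2$ is uniquely minimized at $\bv_0$ by the Pythagorean theorem, giving $\|f\|_k=\min\{\|\bv\|_\ch:f_{\Psi,\bv}=f\}$ with unique attainer.

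The main technical point will be verifying that the proposed inner product on $\mathrm{Im}(T)$ is well-defined (independent of the representative $\bv$ of a given function) and that it coincides with the intrinsic RKHS inner product on $\ch_k$; once $\ch_0$ is recognized as a closed subspace (automatic from taking the closed span), both verifications reduce to routine orthogonal-projection manipulations, and the appeal to the uniqueness half of Theorem \ref{thm:ker_spaces} closes the argument.
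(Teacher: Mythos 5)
The paper does not prove this statement; it is cited verbatim from Chapter~2 of Sch\"olkopf and Smola, so there is no in-paper argument to compare against. Your proof is correct and is, in essence, the standard textbook construction: the reverse implication is the immediate PSD check, the forward implication takes $\Psi(x)=k(\cdot,x)$ via Theorem~\ref{thm:ker_spaces}, and the identification of $\ch_k$ with $\{f_{\Psi,\bv}\}$ proceeds by pulling back the inner product along $T|_{\ch_0}$ (where $\ch_0$ is the closed span of $\{\Psi(x)\}$), verifying that $\mathrm{Im}(T)$ is a kernel space with reproducing kernel $k$ via $T(\Psi(x))=k(\cdot,x)$, and invoking the uniqueness half of Theorem~\ref{thm:ker_spaces}. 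The norm formula and uniqueness of the minimizer then fall out of the orthogonal decomposition $\bv=\bv_0+\bv_1$ and Pythagoras exactly as you state. One small point worth being explicit about (though it is implicit in what you wrote): to invoke the uniqueness clause you need $\mathrm{Im}(T)$ to be a \emph{kernel space}, i.e.\ evaluation functionals bounded, which follows from $|(T\bv)(x)|=|\inner{T\bv,T\Psi(x)}|\le\|T\bv\|\sqrt{k(x,x)}$; and you need completeness of $\mathrm{Im}(T)$, which holds because $T|_{\ch_0}$ is a surjective isometry from the complete space $\ch_0$. Both are routine, but spelling them out closes the argument cleanly.
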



\subsection{Random Features Schemes}
Let $\cx$ be a measurable space and let $k:\cx\times\cx\to \reals$ be a kernel.  A {\em random features scheme} (RFS) for $k$ is a pair
$(\psi,\mu)$ where $\mu$ is a probability measure on a measurable space
$\Omega$, and $\psi:\Omega\times\cx\to \reals$ is a measurable function,
such that
\begin{equation}\label{eq:ker_eq_inner}
\forall \x,\x'\in\cx,\;\;\;\;k(\x,\x') =
  \E_{\omega\sim \mu}\psi(\omega,\x)\psi(\omega,\x')\,.
\end{equation}
We often refer to $\psi$ (rather than $(\psi,\mu)$) as the RFS.
We define $\|\psi\|_\infty = \sup_{\x}\|\psi(\cdot,\x)\|_\infty$, and say that $\psi$ is $C$-bounded if $\|\psi\|_\infty\le C$. The random {\em $q$-embedding} generated from $\psi$ is the random mapping
\[
\Psi_\vomega(\x) :=
\left(\psi({\omega_1},\x),\ldots , \psi({\omega_q},\x) \right) \,,
\]
where $\omega_1,\ldots,\omega_q\sim \mu$ are i.i.d.\
The random {\em $q$-kernel} corresponding to $\Psi_\vomega$ is $k_\vomega(\x,\x') =
\frac{\inner{\Psi_\vomega(\x),\Psi_\vomega(\x')}}{q}$. Likewise, the random
{\em $q$-kernel space} corresponding to $\frac{1}{\sqrt{q}}\Psi_\vomega$ is
$\ch_{k_\vomega}$.
We next discuss approximation of functions in $\ch_k$ by functions in
$\ch_{k_\vomega}$. It would be useful to consider the embedding
\begin{equation}\label{eqn:psi-embedding}
\x\mapsto\Psi^\x \; \mbox{ where } \;
  \Psi^\x:=\psi(\cdot,\x)\in L^2(\Omega) \,.
\end{equation}
From~\eqref{eq:ker_eq_inner} it holds
that for any $\x,\x'\in\cx$,
$k(\x,\x') = \inner{\Psi^\x,\Psi^{\x'}}_{L^2(\Omega)}$.
In particular, from Theorem~\ref{thm:rkhs_embed}, for every $f\in\ch_k$ there
is a unique function $\check{f}\in L^2(\Omega)$ such that
\begin{equation}\label{eq:f_norm_eq}
\|\check{f}\|_{L^2(\Omega)} = \|f\|_{k}
\end{equation}
and for every $\x\in\cx$,
\begin{equation}\label{eq:f_x_as_inner}
f(\x) = \inner{\check{f},\Psi^\x}_{L^2(\Omega)} =  \E_{\omega\sim\mu}\check{f}(\omega)\psi(\omega,\x)\,.
\end{equation}
Let us denote
$f_\vomega(\x) = \frac{1}{q}\sum_{i=1}^q
  \inner{\check{f}(\omega_i),\psi(\omega_i,\x)}$.
From~\eqref{eq:f_x_as_inner} we have that
$\E_\vomega\left[f_\vomega(\x)\right] = f(\x)$.
Furthermore, for every $\x$,
the variance of $f_\vomega(\x)$ is at most
\begin{eqnarray*}
\frac{1}{q}\E_{\omega\sim\mu}
  \left|\check{f}(\omega)\psi(\omega,\x)\right|^2
&\le &
\frac{\|\psi\|_\infty^2}{q}\E_{\omega\sim\mu}
  \left|\check{f}(\omega)\right|^2 
\\
&=& \frac{\|\psi\|_\infty^2\|f\|^2_{k}}{q}\,.
\end{eqnarray*}
An immediate consequence is the following corollary.
\begin{corollary} [Function Approximation] \label{thm:func_apx}
For all $\x\in\cx$,
$\E_\vomega|f(\x) - f_\vomega(\x)|^2 \le \frac{\|\psi\|_\infty^2\|f\|^2_{k}}{q}$.
\end{corollary}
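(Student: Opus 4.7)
The plan is to observe that this corollary is essentially an immediate restatement of the variance calculation performed in the paragraph immediately preceding it. First I would note that \eqref{eq:f_x_as_inner} gives $\E_{\omega\sim\mu}[\check f(\omega)\psi(\omega,\x)] = f(\x)$, so by linearity of expectation and the fact that the $\omega_i$ are i.i.d.\ from $\mu$, we get $\E_\vomega f_\vomega(\x) = f(\x)$. In particular, $f_\vomega(\x)$ is an unbiased estimator of $f(\x)$, which means that $\E_\vomega |f(\x) - f_\vomega(\x)|^2 = \var_\vomega(f_\vomega(\x))$.

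Next I would apply the variance bound already derived in the paragraph above the statement: since $f_\vomega(\x)$ is an average of $q$ i.i.d.\ copies of the random variable $\check f(\omega)\psi(\omega,\x)$, its variance is at most $\frac{1}{q}\E_{\omega}|\check f(\omega)\psi(\omega,\x)|^2$. Using $|\psi(\omega,\x)|\le \|\psi\|_\infty$ pointwise, together with the isometry $\|\check f\|_{L^2(\Omega)} = \|f\|_k$ from \eqref{eq:f_norm_eq} (which in turn comes from Theorem \ref{thm:rkhs_embed} applied to the embedding $\x \mapsto \Psi^\x$ of \eqref{eqn:psi-embedding}), this upper bound collapses to $\frac{\|\psi\|_\infty^2\|f\|_k^2}{q}$, which is exactly the claim.

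There is no genuine obstacle here; the corollary is packaged as a named statement so that it can be cited later, most likely inside the proof of Lemma \ref{lem:rf_main_simp}, where one will need to translate an \emph{expected} $\ell^2$ error bound into a \emph{high-probability} pointwise approximation (presumably via a Chebyshev step, or a sharper Hoeffding/Bernstein argument if a faster concentration rate is needed to match the exponent in $\delta_{\sigma,K',K}$). The only small bookkeeping points to watch are the $1/q$ factor arising from taking the variance of the average rather than of a single summand, and the correct identification of $\check f$ as the element of $L^2(\Omega)$ corresponding to $f$ under the embedding, so that its $L^2$ norm equals $\|f\|_k$.
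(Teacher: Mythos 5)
Your proposal is correct and matches the paper's argument exactly: the corollary is stated as an immediate consequence of the preceding paragraph, which establishes unbiasedness of $f_\vomega(\x)$ via \eqref{eq:f_x_as_inner} and bounds the variance of the average of $q$ i.i.d.\ terms by $\frac{1}{q}\E_{\omega}|\check f(\omega)\psi(\omega,\x)|^2 \le \frac{\|\psi\|_\infty^2\|f\|_k^2}{q}$ using \eqref{eq:f_norm_eq}. No gaps.
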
 \noindent
Now, if $\cd$ is a distribution on $\cx$ we get that 
\[
\E_\vomega\|f - f_\vomega\|_{2,\cd} \stackrel{\text{Jensen}}{\le}  \sqrt{\E_\vomega\|f - f_\vomega\|^2_{2,\cd}}   = \sqrt{\E_\vomega\E_{\x\sim\cd}|f(\x) - f_\vomega(\x)|^2} = \sqrt{\E_\x\E_\vomega|f(\x) - f_\vomega(\x)|^2} \le \frac{\|\psi\|_\infty\|f\|_{k}}{\sqrt{q}}
\]
Thus, $O\left(\frac{\|f\|_k^2}{\epsilon^2}\right)$ random features suffices to guarantee that $\E_\vomega\|f - f_\vomega\|_{2,\cd}\le\epsilon$. In this paper such an $\ell^2$  guarantee will not suffice, and we will need an approximation of functions in $\ch_k$ by functions in $\ch_{k_\vomega}$ w.r.t.\ the stronger $\ell^\infty$ norm. We next show this can be obtained, unfortunately with a quadratic growth in the required number of features.
For $z\in\reals$ we define $\inner{z}_B = \begin{cases}z & |z|\le B\\0&\text{otherwise}\end{cases}$. We will consider the following a truncated version of $f_\vomega$
\[
f_{\vomega,B}(\x) = \frac{1}{q}\sum_{i=1}^q
  \inner{\check{f}(\omega_i)}_B\cdot\psi(\omega_i,\x)
\]
Now, if $\psi$ is $C$-bounded we have that $f_{\vomega,B}(\x)$ is and average of $q$ i.i.d.\ $CB$-bounded random variables. By Hoeffding's inequality, we have
\begin{equation}\label{eq:trunc_f_vomeg}
    \Pr\left(\left|f_{\vomega,B}(\x)-\E_{\vomega'} f_{\vomega',B}(x)\right|>\epsilon/2\right) \le 2e^{-\frac{q\epsilon^2}{8B^2C^2}}
\end{equation}
Likewise, we have
\begin{eqnarray*}
    \left|f(x) -  \E_{\vomega'} f_{\vomega',B}(x)\right| &=& \left|\E \left(f_\vomega(x) -  f_{\vomega,B}(x)\right)\right|
    \\
    &=& \left|\E\left(\check{f}(\omega) - \inner{\check{f}(\omega)}_B\right)\cdot\psi(\omega,\x)\right|
    \\
    &=& \left|\E 1_{|\check{f}(\omega)|>B} \check{f}(\omega)\psi(\omega,\x)\right|
    \\
    &\le& \sqrt{\Pr (|\check{f}(\omega)|>B) \E\left(\check{f}(\omega)\psi(\omega,\x)\right)^2}
    \\
    &\le& \|\psi\|_\infty\sqrt{\Pr (|\check{f}(\omega)|>B) \E\left(\check{f}(\omega)\right)^2}
    \\
    &=& \frac{\|\psi\|_\infty\|f\|^2_{k}}{B}
\end{eqnarray*}
We get that
\begin{lemma}\label{lem:rf_eps_far}
Let $f\in\ch_k$ with $\|f\|_{k}\le M$ and assume that, $\|\psi\|_\infty\le C$. For $B = \frac{2CM^2}{\epsilon}$ we have
\[
\Pr\left(\left|f_{\vomega,B}(\x)-f(\x)\right|>\epsilon\right) \le 2e^{-\frac{q\epsilon^4}{32M^4C^4}}
\]
Furthermore, the norm of weight vector vector defining $f_{\vomega,B}$, i.e.\ $\bw = \frac{1}{q}\left(\inner{\check{f}(\omega_1)}_B,\ldots,\inner{\check{f}(\omega_q)}_B\right)$,  satisfies
\begin{equation*}
   \|\bw\|\le  \frac{2CM^2}{\epsilon\sqrt{q}}
\end{equation*}
\end{lemma}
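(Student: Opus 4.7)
The plan is to use a triangle inequality that splits the error $|f_{\vomega,B}(\x) - f(\x)|$ into a deterministic truncation bias $|f(\x) - \E_{\vomega'} f_{\vomega',B}(\x)|$ plus a random fluctuation $|f_{\vomega,B}(\x) - \E_{\vomega'} f_{\vomega',B}(\x)|$, and then to choose $B$ so that each piece contributes at most $\epsilon/2$. The two one-sided bounds are essentially already assembled in the paragraphs immediately preceding the statement, so the proof is really about gluing them together and verifying the arithmetic of the choice $B = 2CM^2/\epsilon$.

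For the deterministic piece, I would invoke the computation just before the lemma: by writing $f(\x) - \E f_{\vomega',B}(\x) = \E\bigl(1_{|\check f(\omega)|>B}\check f(\omega)\psi(\omega,\x)\bigr)$ and applying Cauchy--Schwarz together with $\|\psi\|_\infty\le C$ and Markov's inequality on $\check f(\omega)^2$ (using $\|\check f\|_{L^2(\mu)} = \|f\|_k \le M$), one obtains the bias bound $\|\psi\|_\infty\|f\|_k^2/B \le CM^2/B$. Substituting $B = 2CM^2/\epsilon$ makes this exactly $\epsilon/2$, uniformly in $\x$ and $\vomega$.

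For the random piece, $f_{\vomega,B}(\x)$ is an average of $q$ i.i.d.\ random variables $\inner{\check f(\omega_i)}_B\psi(\omega_i,\x)$ bounded by $BC$ in absolute value, so Hoeffding's inequality (Lemma~\ref{lem:Hoeffding}) applied at threshold $\epsilon/2$ gives precisely equation~\eqref{eq:trunc_f_vomeg}, namely $2\exp(-q\epsilon^2/(8B^2C^2))$. Plugging in $B = 2CM^2/\epsilon$ turns the exponent into $-q\epsilon^4/(32M^4C^4)$, which is exactly the advertised bound. Combining the two pieces by the union bound (in fact the bias piece is deterministic, so no union bound cost is incurred) yields the stated inequality.

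For the norm estimate, the vector $\bw$ has coordinates $\inner{\check f(\omega_i)}_B/q$, each of magnitude at most $B/q$, so $\|\bw\| \le \frac{1}{q}\sqrt{qB^2} = B/\sqrt{q} = 2CM^2/(\epsilon\sqrt q)$. No step here looks like a real obstacle; the mild subtlety is the balancing choice $B \asymp CM^2/\epsilon$, which trades a linear dependence on $B$ in the bias against a quadratic dependence on $B$ in the concentration exponent, and thereby forces the characteristic $\epsilon^4$ (rather than $\epsilon^2$) rate in the final probability bound.
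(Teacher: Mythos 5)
Your proposal is correct and follows exactly the paper's own route: the two bounds you cite (the Hoeffding bound of Equation \eqref{eq:trunc_f_vomeg} at threshold $\epsilon/2$ and the truncation-bias bound $\|\psi\|_\infty\|f\|_k^2/B$) are precisely what the paper assembles immediately before the lemma, and your substitution of $B=2CM^2/\epsilon$ and the norm estimate $\|\bw\|\le B/\sqrt{q}$ match the intended argument. Nothing to add.
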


\section{Examples of Hierarchies and Proof Theorem \ref{thm:brain_dump_intro}}
Fix $\cx\subset [-1,1]^n$, a proximity mapping $\be:G\to G^w$, and a collection of sets $\cl= \{L_1,\ldots,L_r\}$ such that $L_1\subseteq L_2\subset\ldots\subseteq L_r = [n]$. 
So far, we have seen one formal example to a hierarchy: In the non-ensemble setting (i.e. $w=|G|=1$)
Example \ref{example:o(1)_supp} shows that if any label depends on $K$ simpler labels, and the labels in the first level are $(K,1)$-PTFs of the input, then $\cl$ is an $(r,K,1)$-hierarchy. In this section we expand our set of examples. We first show (Lemma \ref{lem:cur_imp_ref_cur}) that if $(\cl,\be)$ is an $(r,K,M)$-hierarchy then it is an $(r,K,2M,B,\xi)$-hierarchy for suitable $B$ and $\xi$. Then, in section \ref{sec:few_lables_dependence}, consider in more detail the case that each label depends on a few simpler labels, in a few locations, and show that the parameters obtained from  Lemma \ref{lem:cur_imp_ref_cur} can be improved in this case. Finally, in section \ref{sec:brain_dump} we prove Theorem \ref{thm:brain_dump_intro}, showing that if all the labels are ``random snippets" from a given circuit, and there is enough of them, then the target function has a low-complexity hierarchy.

\begin{lemma}\label{lem:cur_imp_ref_cur}
    Any $(r,K,M)$-hierarchy of $\f^*:\cx^G\to \{\pm 1\}^{n,G}$ is also an $(r,K,2M,B,\xi)$-hierarchy for $\xi = \frac{1}{2(wn+1)^{\frac{K+1}{2}}KM}$ and $B = 2(w\max(n,d)+1)^{K/2}M$
\end{lemma}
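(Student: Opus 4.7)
The plan is to upgrade each $(K,M)$-PTF witness into a refined witness by a simple rescaling. For a label $j \in L_i$, let $p_j$ be the polynomial from the $(K,M)$-PTF definition witnessing $\tilde f_j$, so $\|p_j\|_\coef \le M$ and (for inner levels $i \ge 2$) $p_j(\f^*_{L_{i-1}}(\x))\tilde f_j(\x) \ge 1$ on the domain, with the analogous statement for first-level labels applied directly to the input. Set $p'_j := 2p_j$. This satisfies $\|p'_j\|_\coef \le 2M$ and boosts the margin to $p'_j\,\tilde f_j \ge 2$. I claim $p'_j$ witnesses the refined $(K, 2M, B, \xi)$-PTF property for the stated parameters.

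First, the upper-bound condition $p'_j(\y)\,\tilde f_j(\x) \le B$ follows from a Cauchy--Schwarz estimate on the cube. For any degree-$\le K$ polynomial $q$ in $s$ variables and $\y \in [-1,1]^s$, one has $|q(\y)| \le \|q\|_\coef \sqrt{N_K(s)}$ where $N_K(s) = \binom{s+K}{K}$. The elementary factor-by-factor inequality $\binom{s+K}{K} = \prod_{i=1}^K \frac{s+i}{i} \le (s+1)^K$ then gives $|q(\y)| \le \|q\|_\coef (s+1)^{K/2}$. Applied to $p'_j$ with $s = wd$ at the first level (PTF of the input $\cx^w\subseteq[-1,1]^{wd}$) and $s \le w|L_{i-1}| \le wn$ at inner levels, both cases are covered by $s \le w\max(n,d)$, so $|p'_j(\y)| \le 2M(w\max(n,d)+1)^{K/2} = B$ as required.

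For the robustness part, I would show that $|p'_j(\tilde\y) - p'_j(\bh_j(\x))| \le 1$ whenever $\|\tilde\y - \bh_j(\x)\|_\infty \le \xi$ (where $\bh_j$ is either the identity on the input, at the first level, or $\f^*_{L_{i-1}}$ composed with $E_g$ at inner levels). This is enough since $p'_j(\bh_j(\x))\,\tilde f_j(\x) \ge 2$, giving the lower bound $p'_j(\tilde\y)\,\tilde f_j(\x) \ge 1$. By the mean value theorem on the segment from $\bh_j(\x)$ to $\tilde\y$, the difference is at most $\xi \|\nabla p'_j(\z)\|_1$ for some $\z \in [-1,1]^s$. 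Each partial derivative $\partial_i p'_j$ has degree $\le K-1$ and coefficient norm at most $K\|p'_j\|_\coef \le 2KM$, since upon differentiating, each coefficient picks up a factor of at most $K$. The same cube bound applied with degree $K-1$ gives $|\partial_i p'_j(\z)| \le 2KM(s+1)^{(K-1)/2}$, so $\|\nabla p'_j(\z)\|_1 \le 2KMs(s+1)^{(K-1)/2} \le 2KM(s+1)^{(K+1)/2}$. Plugging in $s \le wn$ for the inner levels, the stated $\xi = \frac{1}{2KM(wn+1)^{(K+1)/2}}$ makes $\xi\|\nabla p'_j(\z)\|_1 \le 1$, as needed; the first-level case is handled symmetrically (the $\ell^\infty$ neighborhood of the input is covered by the same kind of estimate, with $s=wd$ producing at worst the same order of bound).

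The main obstacle is purely bookkeeping: carefully tracking the ambient dimension $s$ across the different levels of the hierarchy, and verifying that the single pair of estimates (coefficient-norm-to-value on the cube, and its derivative analogue) simultaneously yields both $B$ and $\xi$ with the claimed constants. No deeper technique is needed beyond these two standard estimates.
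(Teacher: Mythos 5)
Your proof is correct and follows essentially the same route as the paper, merely inlined. The paper factors the argument into three pieces (Lemma~\ref{lem:ptf_imp_ref_ptf}: a $(K,M)$-PTF is a $(K,2M,B,\xi)$-PTF; Lemma~\ref{lem:Lip_and_boundness_of_pol}: a degree-$K$ polynomial on $[-1,1]^s$ is $(s+1)^{K/2}\|p\|_\coef$-bounded and $K(s+1)^{(K+1)/2}\|p\|_\coef$-Lipschitz w.r.t.\ $\|\cdot\|_\infty$; Lemma~\ref{lem:PTF_from_lip_and_bounded}: doubling a Lipschitz, bounded PTF witness yields a refined PTF with $\xi = 1/(2L)$), while you carry out the Cauchy--Schwarz value bound, the derivative bound, and the mean-value/H\"older combination directly. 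The estimates coincide exactly, and you correctly track the ambient dimension $s = w|L_{i-1}|\le wn$ for inner levels and $s=wd$ for the first level.

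One small point worth sharpening: you wave at the first-level robustness by saying ``$s=wd$ produces at worst the same order of bound,'' but if $d>n$ the stated $\xi = \frac{1}{2(wn+1)^{(K+1)/2}KM}$ is strictly larger than what your Lipschitz estimate at $s=wd$ would require. This is not really a gap in your proof so much as a loose end in the statement itself: Definition~\ref{def:ref_hir_simp} in the scalar case only asks for a $(K,M,B)$-PTF (i.e.\ $\xi=1$) at the first level, and the paper's downstream use of the hierarchy (item one of Lemma~\ref{lem:loss_improvments}) only invokes the $B$-bound, never the $\xi$-robustness, for $L_1$. So the cleaner thing to say is: the first level only needs the $B$-bound, which your estimate at $s=wd$ gives via the $\max(n,d)$ in $B$; the $\xi$-robustness is only needed for $i\ge2$, where $s\le wn$, matching the stated $\xi$.
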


Lemma \ref{lem:cur_imp_ref_cur} follows immediately from the definition of hierarchy and the following lemma

\begin{lemma}\label{lem:ptf_imp_ref_ptf}
     Any $(K,M)$-PTF $f:\cx\to \{\pm 1\}$ is a  $(K,2M,B,\xi)$-PTF w.r.t. for $\xi = \frac{1}{2(n+1)^{\frac{K+1}{2}}KM}$ and $B = 2(n+1)^{k/2}M$
\end{lemma}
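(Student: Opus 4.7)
The plan is to scale the given witness polynomial by $2$ and verify the two new inequalities via (a) a uniform bound on the scaled polynomial over $[-1,1]^n$, and (b) a coefficient-norm Lipschitz estimate.

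Concretely, let $p:\reals^n\to\reals$ be a degree-$\le K$ polynomial with $\|p\|_\coef\le M$ and $p(\x)f(\x)\ge 1$ on $\cx$. Set $q=2p$, so $\|q\|_\coef\le 2M$ and $q(\x)f(\x)\ge 2$ on $\cx$. I claim $q$ witnesses that $f$ is a $(K,2M,B,\xi)$-PTF. First, for the upper bound: expanding $q$ in the monomial basis and applying Cauchy--Schwarz to the inner product $\langle \vec c,\,(\tilde\x^\alpha)_\alpha\rangle$, one gets
\[
|q(\tilde\x)|\le \|q\|_\coef\,\sqrt{\sum_{|\alpha|\le K}\tilde\x^{2\alpha}}\le 2M\sqrt{N}
\]
for every $\tilde\x\in[-1,1]^n$, where $N=\binom{n+K}{K}$ is the number of monomials of degree at most $K$. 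Since $\sqrt{N}\le (n+1)^{K/2}$ (up to factors depending only on $K$), this gives $|q(\tilde\x)|\le B$.

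Second, for the robustness bound: fix $\x\in\cx$ and $\tilde\x\in\cb_\xi(\x)$. For a monomial $\x^\alpha$ of degree $k\le K$, write it as a product of $k$ factors from $\{x_1,\dots,x_n\}$ (with multiplicity) and telescope, swapping factors $x_i\to\tilde x_i$ one at a time; since every factor lies in $[-1,1]$ and each swap changes the product by at most $|x_i-\tilde x_i|\le \xi$, one obtains
\[
|\x^\alpha-\tilde\x^\alpha|\le k\xi\le K\xi.
\]
Summing against the coefficient vector and using Cauchy--Schwarz once more,
\[
|q(\x)-q(\tilde\x)|\le K\xi\sum_\alpha|c_\alpha|\le K\xi\cdot 2M\sqrt{N}\le 2KM(n+1)^{K/2}\,\xi.
\]
Plugging in $\xi=\tfrac{1}{2(n+1)^{(K+1)/2}KM}$ makes the right-hand side at most $(n+1)^{-1/2}\le 1$, so
\[
q(\tilde\x)f(\x)\;\ge\; q(\x)f(\x)-|q(\tilde\x)-q(\x)|\;\ge\; 2-1\;=\;1.
\]
Combined with the earlier upper bound, this finishes the verification.

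The only mildly delicate point is the monomial-counting estimate $\sqrt{N}\le (n+1)^{K/2}$, which may absorb a $K$-dependent constant; everything else is the routine telescoping identity and the standard $\ell^1$--$\ell^2$ conversion on the coefficient vector. No iteration, no probabilistic argument, and no appeal to structure beyond degree and coefficient norm is needed.
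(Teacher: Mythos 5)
Your proposal is correct and follows essentially the same route as the paper: scale the witness polynomial by $2$, bound it uniformly on $[-1,1]^n$ via the $\ell^1$--$\ell^2$ conversion on the coefficient vector, and show the perturbation $\x\to\tilde\x$ changes its value by at most $1$ so that $2-1=1$ survives (the paper packages this as a gradient-based Lipschitz bound in Lemmas \ref{lem:Lip_and_boundness_of_pol} and \ref{lem:PTF_from_lip_and_bounded}, whereas you telescope monomial by monomial, which is an equivalent computation). Your hedge about the monomial count is unnecessary: $\binom{n+K}{K}\le (n+1)^K$ holds exactly, so no $K$-dependent constant is absorbed.
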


Lemma \ref{lem:ptf_imp_ref_ptf} is implied by Lemmas \ref{lem:Lip_and_boundness_of_pol} and \ref{lem:PTF_from_lip_and_bounded}

\begin{lemma}\label{lem:Lip_and_boundness_of_pol}
    Let $p:\reals^n\to\reals$ be a degree $K$ polynomial. Then $p$ is $((n+1)^{\frac{K+1}{2}}K\|p\|_\coef)$-Lipschitz in $[-1,1]^{n}$ w.r.t.\ the $\|\cdot\|_\infty$ norm and satisfies $|p(\x)|\le (n+1)^{k/2}\|p\|_\coef$ for any $\x\in [-1,1]^{n}$.
\end{lemma}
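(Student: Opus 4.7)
The plan is to derive both inequalities from a single coefficient-norm-to-sup-norm bound, applied to $p$ itself and then to its partial derivatives.

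For the boundedness claim, I write $p(\x) = \sum_{\alpha} b_\alpha \x^\alpha$ where $\alpha$ ranges over multi-indices in $\mathbb{N}^n$ with $|\alpha| \le K$, and use $|\x^\alpha| \le 1$ on $[-1,1]^n$ together with Cauchy--Schwarz to obtain $|p(\x)| \le \sum_\alpha |b_\alpha| \le \sqrt{N}\, \|p\|_\coef$, where $N = \binom{n+K}{K}$ is the number of such monomials. Telescoping, $\binom{n+K}{K} = \prod_{i=1}^K \frac{n+i}{i} \le (n+1)^K$ (each factor is at most $n+1$ since $i \ge 1$), so $\sqrt{N} \le (n+1)^{K/2}$, which proves the second claim.

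For the Lipschitz bound, since $[-1,1]^n$ is convex, the mean value theorem reduces the problem to bounding $\sup_{\x \in [-1,1]^n} \|\nabla p(\x)\|_1$. The partial derivative $\partial_i p$ has degree at most $K-1$ with coefficients $(\alpha_i b_\alpha)_{\alpha_i \ge 1}$, so applying the sup-norm estimate just proved to $\partial_i p$ gives $|\partial_i p(\x)| \le (n+1)^{(K-1)/2} \|\partial_i p\|_\coef$. To control $\sum_i \|\partial_i p\|_\coef$, I interchange the order of summation:
\[
\sum_{i=1}^n \|\partial_i p\|_\coef^2 \;=\; \sum_\alpha b_\alpha^2 \sum_{i=1}^n \alpha_i^2 \;\le\; \sum_\alpha b_\alpha^2 \Bigl(\textstyle\sum_i \alpha_i\Bigr)^2 \;\le\; K^2 \|p\|_\coef^2,
\]
where the middle step uses $\alpha_i \ge 0$ and the last uses $|\alpha| \le K$. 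A final application of Cauchy--Schwarz then yields $\sum_i \|\partial_i p\|_\coef \le \sqrt{n}\, K\, \|p\|_\coef$.

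Combining the two estimates gives $\|\nabla p(\x)\|_1 \le (n+1)^{(K-1)/2} \cdot \sqrt{n}\, K\, \|p\|_\coef \le (n+1)^{(K+1)/2} K\, \|p\|_\coef$, which is the claimed Lipschitz constant. I do not foresee any real obstacle; the argument is essentially bookkeeping. The only mildly delicate step is the quadratic-to-linear estimate $\sum_i \alpha_i^2 \le (\sum_i \alpha_i)^2$ for nonnegative $\alpha_i$, which prevents paying a spurious extra factor of $K$ per coordinate and is what yields the half-power of $n+1$ in the final exponent.
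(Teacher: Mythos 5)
Your proof is correct, and it shaves a factor of roughly $\sqrt{n}$ off the intermediate Lipschitz estimate compared with the paper's. Both arguments follow the same overall plan: bound $\sup_{[-1,1]^n}\|\nabla p\|_1$ and use the mean value theorem, and both use the Cauchy--Schwarz step $\sum_\alpha |b_\alpha| \le \sqrt{\#\{\alpha\}}\,\|p\|_\coef \le (n+1)^{K/2}\|p\|_\coef$ for the boundedness claim. The difference is in aggregating the $n$ partial derivatives: the paper bounds each $|\partial_i p(\x)|$ separately by $K(n+1)^{(K-1)/2}\|p\|_\coef$ and then multiplies by $n$ for the $\ell^1$ norm, giving $nK(n+1)^{(K-1)/2}\|p\|_\coef$. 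You instead observe that $\sum_{i=1}^n \|\partial_i p\|_\coef^2 = \sum_\alpha b_\alpha^2 \sum_i \alpha_i^2 \le K^2\|p\|_\coef^2$ (since $\sum_i \alpha_i^2 \le (\sum_i\alpha_i)^2 \le K^2$ for nonnegative integers), so by Cauchy--Schwarz $\sum_i\|\partial_i p\|_\coef \le \sqrt{n}\,K\|p\|_\coef$, giving $\sqrt{n}\,K(n+1)^{(K-1)/2}\|p\|_\coef$. Both are dominated by the stated constant $K(n+1)^{(K+1)/2}\|p\|_\coef$, but your interchange-of-summation trick is the cleaner bookkeeping and actually yields $K(n+1)^{K/2}\|p\|_\coef$, matching the exponent of the sup-norm bound.
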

\begin{proof}
Denote $p(\x)= \sum_{\alpha\in \{0,\ldots,K\}^{n},\|\alpha\|_1\le K} a_\alpha \x^\alpha$. We have
\[
\frac{\partial p }{\partial x_i}\left(\x\right)=  \sum_{\alpha\in \{0,\ldots,K-1\}^{n},\|\alpha\|_1\le K-1} a_{\alpha+\be_i}\cdot(\alpha_i+1)\cdot \x^\alpha
\]
This implies that for any $\x\in [-1,1]^{n}$ we have
\begin{eqnarray*}
    \left|\frac{\partial p }{\partial x_i}\left(\x\right)\right| &\le & \sum_{\alpha\in \{0,\ldots,K-1\}^{n},\|\alpha\|_1\le K-1} \left|a_{\alpha+\be_i}\cdot(\alpha_i+1) \cdot\x^\alpha\right|
    \\
    &\le & K\sum_{\alpha\in \{0,\ldots,K-1\}^{n},\|\alpha\|_1\le K-1} \left|a_{\alpha+\be_i}\right|
    \\
    &\le & K\sqrt{(n+1)^{K-1}}\|p\|_\coef
\end{eqnarray*}
Hence, $\|\nabla p(\x)\|_1\le nK\sqrt{(n+1)^{K-1}}\|p\|_\coef \le K\sqrt{(n+1)^{K+1}}\|p\|_\coef$. Showing that $p$ is $((n+1)^{\frac{K+1}{2}}K\|p\|_\coef)$-Lipschitz in $[-1,1]^{n}$ w.r.t.\ the $\|\cdot\|_\infty$ norm. Likewise, for any $\x\in [-1,1]^n$ we have
\[
    p(\x) \le \sum_{\alpha\in \{0,\ldots,K\}^{n},\|\alpha\|_1\le K}| a_\alpha| \le 2(n+1)^{K/2}\|p\|_\coef
\]
\end{proof}

\begin{lemma}\label{lem:PTF_from_lip_and_bounded}
Assume that $f:\cx\to \{\pm 1\}$ 
is $\left(K,M\right)$-PTF w.r.t.\ as witnessed by a  polynomial $p:\reals^{n}\to\reals$ that is $L$-Lipschitz w.r.t. $\|\cdot\|_\infty$.
\begin{itemize}
    \item If $p$ is bounded by $B$ is $\cup_{\x\in \cx}\cb_{1/(2L)}(\x)$. Then, $f$ is $\left(K,2M,2B,\frac{1}{2L}\right)$-PTF witnessed by $2p$
    \item If $p$ is bounded by $B$ is $\cx$. Then, $f$ is $\left(K,2M,2B+1,\frac{1}{2L}\right)$-PTF witnessed by $2p$
\end{itemize}
\end{lemma}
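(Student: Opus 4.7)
The plan is a direct verification of the three defining conditions of a $(K,2M,2B,\tfrac{1}{2L})$-PTF (resp.\ $(K,2M,2B+1,\tfrac{1}{2L})$-PTF) for the polynomial $q=2p$. The coefficient norm condition is immediate from $\|2p\|_\coef = 2\|p\|_\coef \le 2M$, so the real content is the two-sided inequality on $\cb_{1/(2L)}(\x)$.

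For the lower bound, I would use the Lipschitz hypothesis on $p$: for any $\x\in\cx$ and any $\tilde\x\in\cb_{1/(2L)}(\x)$,
\[
|p(\tilde\x)-p(\x)| \le L\cdot\|\tilde\x-\x\|_\infty \le L\cdot\tfrac{1}{2L}=\tfrac{1}{2}.
\]
Multiplying by $f(\x)\in\{\pm1\}$ and using the hypothesis $p(\x)f(\x)\ge 1$ gives $p(\tilde\x)f(\x)\ge \tfrac{1}{2}$, hence $2p(\tilde\x)f(\x)\ge 1$, as required.

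For the upper bound, the two cases diverge. In the first case, $p$ is bounded by $B$ on $\bigcup_{\x\in\cx}\cb_{1/(2L)}(\x)$, so directly $|2p(\tilde\x)f(\x)|\le 2B$. In the second case, I only know $|p(\x)|\le B$ on $\cx$, so I would again invoke the Lipschitz estimate to write
\[
|p(\tilde\x)| \le |p(\x)| + |p(\tilde\x)-p(\x)| \le B + \tfrac{1}{2},
\]
yielding $|2p(\tilde\x)f(\x)|\le 2B+1$.

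There is no real obstacle here; the only mildly subtle point is keeping track of which $B$ bound applies where (neighborhood vs.\ $\cx$), which is precisely what distinguishes the two conclusions. Once the Lipschitz estimate is applied consistently, both items fall out as a one-line verification against Definition of $(K,M,B,\xi)$-PTF.
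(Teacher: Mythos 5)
Your proof is correct and follows essentially the same route as the paper's: the lower bound via the Lipschitz estimate $|p(\tilde\x)-p(\x)|\le L\cdot\|\tilde\x-\x\|_\infty\le \tfrac12$, and the upper bound directly from boundedness (with the paper deriving the second item from the first by the same observation that $|p|\le B$ on $\cx$ implies $|p|\le B+\tfrac12$ on the neighborhood). No gaps.
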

\begin{proof}
We first note the the second item follows form the first. Indeed, if $p$ is bounded by $B$ in $\cx$ then $p$ is bounded by $B+1/2$ is $\cup_{\x\in \cx}\cb_{1/(2L)}(\x)$. To prove the first item we need to show that for any  $\x\in \cx$ and $\tilde\x\in\cb_\xi(\x)$ we have
\[
2B\ge 2p(\tilde\x)f(\x) \ge 1
\]
The left inequality is clear. For the right inequality we assume that $f(\x)=1$ (the other case is similar). Since $\|\x-\tilde\x\|_\infty\le \frac{1}{2L}$ we have
\begin{eqnarray*}
p(\tilde\x) &\ge& p(\x) - |p(\tilde\x)-p(\x)|
\\
&\ge & p(\x) - L\cdot\|\x-\tilde\x\|_\infty
\\
&\ge&1- \frac{L}{2L}
\\
&=&\frac{1}{2}
\end{eqnarray*}

\end{proof}


\subsection{Each Label Depends on $O(1)$ Simpler Labels}\label{sec:few_lables_dependence}

Assume now that $\cx\subseteq\{\pm 1\}^d$, and that
any label $j\in L_i$ depends on at most $K$ labels from $L_{i-1}$ in at most $K$ locations (of $K$ input locations if $i=1)$. That is, for any $j\in L_i$, there is a function $\tilde f_j :\{\pm 1\}^{wn}\to \{\pm 1\}$ (or  $\tilde f_j :\{\pm 1\}^{dw}\to \{\pm 1\}$ if $i=1$) that depends at most $K$ coordinates, from $\{kn+l: 0\le k\le w-1,\;l\in L_{i-1}\}$ (from $[dw]$ if $i=1$), for which the following holds.
For any $g\in G$, $f^*_{j,g}(\vec \x) = \tilde f_j(E_{g}(\f^*(\vec\x)))$ (or $f^*_{j,g}(\vec \x) = f_j(E_{g}(\vec\x))$ if $i=1$).

As in example \ref{example:o(1)_supp}, since any Boolean function depending on $K$ variables is a $(K,1)$-PTF, we have that the functions $\tilde f_j$ are $(K,1)$-PTFs, implying that $(\cl,\be)$ in an $(r,K,1)$-hierarchy. 
Lemma \ref{lem:cur_imp_ref_cur} implies that $(\cl,\be)$ is $(r,K,2,B,\xi)$-hierarchy for $\xi = \frac{1}{2K(wn+1)^{(K+1)/2}}$ and $B=2(w\max(n,d)+1)^{K/2}$.
The following lemma shows that this can be substantially improved.
\begin{lemma}\label{lem:cur_prams_for_o(1)_supp}
Any Boolean function depending on $K$ coordinates is a $(K,2,3,\xi)$-PTF for $\xi =  \frac{1}{K2^{(K+2)/2}}$. As a result $(\cl,\be)$ is $(r,K,2,3,\xi)$-hierarchy.
\end{lemma}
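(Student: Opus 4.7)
The plan is to exploit Fourier analysis on the hypercube to get a much sharper Lipschitz bound than the one in Lemma \ref{lem:Lip_and_boundness_of_pol} (which scales like $(n+1)^{(K+1)/2}$), namely a bound that depends only on $K$ through the factor $K\,2^{K/2}$. Concretely, let $\tilde f:\{\pm 1\}^N\to\{\pm 1\}$ be a Boolean function depending only on a set $S\subseteq[N]$ of size $K$. Its Fourier (i.e.\ multilinear) expansion is $p(\x)=\sum_{A\subseteq S}\hat f(A)\,\x^A$, and by Parseval $\|p\|_\coef^2=\sum_{A\subseteq S}\hat f(A)^2=1$. So $p$ is a degree $\le K$ polynomial with $\|p\|_\coef=1$ and $p(\x)\tilde f(\x)=1$ on $\{\pm 1\}^N$.

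The first key step is to estimate $\|\nabla p\|_1$ on $[-1,1]^N$. Since $\partial_i p(\x)=\sum_{A\ni i}\hat f(A)\prod_{j\in A\setminus\{i\}}x_j$ and $|x_j|\le 1$, we get $|\partial_i p(\x)|\le \sum_{A\ni i}|\hat f(A)|$, and summing over $i$ yields
\[
\|\nabla p(\x)\|_1 \le \sum_A |A|\cdot |\hat f(A)| \le K\sum_{A\subseteq S}|\hat f(A)| \le K\sqrt{2^K}\sqrt{\sum_A \hat f(A)^2} = K\,2^{K/2},
\]
where the last inequality is Cauchy--Schwarz on the $\le 2^K$ nonzero Fourier coefficients. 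Hence $p$ is $L$-Lipschitz on $[-1,1]^N$ in the $\|\cdot\|_\infty$ norm with $L=K\,2^{K/2}$.

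Now set $q=2p$. Then $\|q\|_\coef=2$, $q$ is $2L=K\,2^{(K+2)/2}$-Lipschitz in $\|\cdot\|_\infty$, and $q(\x)\tilde f(\x)=2$ for all $\x\in\cx\subseteq\{\pm 1\}^N$. For any $\tilde\x\in\cb_\xi(\x)$ with $\xi=\frac{1}{K\,2^{(K+2)/2}}$,
\[
|q(\tilde\x)-q(\x)| \le 2L\cdot\|\tilde\x-\x\|_\infty \le K\,2^{(K+2)/2}\cdot\xi = 1,
\]
so $q(\tilde\x)\tilde f(\x)\in[2-1,2+1]=[1,3]$. This is exactly the two-sided inequality $3\ge q(\tilde\x)\tilde f(\x)\ge 1$ required, so $\tilde f$ is a $(K,2,3,\xi)$-PTF.

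For the ``as a result'' clause, apply the above to each $\tilde f_j$ appearing in the hierarchy: in the ensemble setting $\tilde f_j$ is a Boolean function on $\{\pm 1\}^{wn}$ (or $\{\pm 1\}^{dw}$ for $i=1$) depending on at most $K$ of its inputs, so by what we just proved it is a $(K,2,3,\xi)$-PTF, and Definition \ref{def:hir_simp_seq} then gives that $(\cl,\be)$ is an $(r,K,2,3,\xi)$-hierarchy. The only real subtlety---the ``main obstacle''---is noticing that the generic Lipschitz bound of Lemma \ref{lem:Lip_and_boundness_of_pol} is too weak: it would give a factor $(wn+1)^{(K+1)/2}$ rather than $2^{K/2}$, which is precisely why we need the Fourier-based Cauchy--Schwarz argument rather than a black-box invocation of Lemma \ref{lem:PTF_from_lip_and_bounded}.
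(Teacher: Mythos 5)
Your proof is correct and follows essentially the same route as the paper: the paper derives the sharp Lipschitz constant $K2^{K/2}$ for the multilinear extension via the identical Cauchy--Schwarz argument (Lemma \ref{lem:extension_is_lip}) and then feeds it into Lemma \ref{lem:PTF_from_lip_and_bounded} with $M=B=1$, which yields exactly your witness $2p$, bound $2B+1=3$, and $\xi=1/(2L)$. Your closing remark is slightly off, though: the paper \emph{does} invoke Lemma \ref{lem:PTF_from_lip_and_bounded} as a black box (it takes the Lipschitz constant as a parameter); the improvement comes solely from supplying the Fourier-based Lipschitz bound instead of the generic one from Lemma \ref{lem:Lip_and_boundness_of_pol}.
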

Lemma \ref{lem:cur_prams_for_o(1)_supp} follows from the following Lemma together with Lemma \ref{lem:PTF_from_lip_and_bounded}

\begin{lemma}\label{lem:extension_is_lip}
    Let $f:\{\pm 1\}^K\to \{\pm 1\}$ and let $F(\x)= \sum_{A\subseteq[K]}a_A\x^A$ be its standard multilinear extension. 
    Then, $F$ is $(K 2^{K/2})$-Lipschitz in $[-1,1]^K$ w.r.t.\ the $\|\cdot\|_\infty$ norm.
\end{lemma}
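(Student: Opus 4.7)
The plan is to bound $\|\nabla F(\x)\|_1$ uniformly on $[-1,1]^K$, which by integration along line segments controls the Lipschitz constant with respect to the $\|\cdot\|_\infty$ norm.

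The first step is to use Parseval's identity on the Boolean cube. Since the monomials $\{\x^A\}_{A\subseteq[K]}$ form an orthonormal basis of $L^2$ under the uniform measure on $\{\pm 1\}^K$, and $F$ agrees with $f$ on $\{\pm 1\}^K$, we have
\[
\sum_{A\subseteq[K]} a_A^2 \;=\; \E_{\x\sim\{\pm 1\}^K}[f(\x)^2] \;=\; 1,
\]
where the last equality uses that $f$ takes values in $\{\pm 1\}$. In particular, for every $i\in[K]$, $\sum_{A\ni i} a_A^2 \le 1$.

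Next, I would compute the partial derivative: since $F$ is multilinear, $\frac{\partial F}{\partial x_i}(\x) = \sum_{A\ni i} a_A\,\x^{A\setminus\{i\}}$. For $\x\in[-1,1]^K$ each monomial is bounded by $1$ in absolute value, so by Cauchy--Schwarz and the Parseval bound just established,
\[
\left|\tfrac{\partial F}{\partial x_i}(\x)\right| \;\le\; \sum_{A\ni i}|a_A| \;\le\; \sqrt{|\{A\subseteq[K]: i\in A\}|}\cdot\sqrt{\sum_{A\ni i}a_A^2} \;\le\; \sqrt{2^{K-1}}\cdot 1 \;=\; 2^{(K-1)/2}.
\]
Summing over $i$ yields $\|\nabla F(\x)\|_1 \le K\cdot 2^{(K-1)/2} \le K\cdot 2^{K/2}$, and then the mean-value-theorem argument (pair any two points $\x,\y\in[-1,1]^K$ by a line segment and use $|F(\y)-F(\x)|\le \max_\z\|\nabla F(\z)\|_1\cdot\|\y-\x\|_\infty$) finishes the proof.

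There is no real obstacle here: the only conceptual input is recognizing that $|f|\equiv 1$ gives $\|F\|_2 = 1$ via Parseval, after which Cauchy--Schwarz immediately produces the desired bound. The slight slack between $K\cdot 2^{(K-1)/2}$ and the stated $K\cdot 2^{K/2}$ simply allows for a cleaner statement.
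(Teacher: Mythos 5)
Your proof is correct and follows essentially the same route as the paper's: bound each partial derivative by $\sum_{A\ni i}|a_A|$, control the Fourier coefficients via Parseval ($\sum_A a_A^2=1$) plus Cauchy--Schwarz, and convert the $\ell^1$ gradient bound into $\|\cdot\|_\infty$-Lipschitzness. The only cosmetic difference is that you apply Cauchy--Schwarz coordinatewise (giving the marginally sharper $K2^{(K-1)/2}$) whereas the paper applies it once to $\sum_A|A|\,|a_A|$.
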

\begin{proof}
    For $\x\in [-1,1]^K$ we have
    \[
    \left|\frac{\partial F}{\partial x_i}\right| = \left|\sum_{i\in A\subseteq[K]}a_A\x^A\right| \le \sum_{i\in A\subseteq[K]}\left|a_A\right|\le \sum_{i\in A\subseteq[K]}\left|a_A\right|
    \]
    Hence, 
    \[
    \|\nabla F(\x)\|_1 \le \sum_{A\subseteq[K]}|A|\left|a_A\right| \stackrel{\text{Cauchy Schwartz}}{\le} K 2^{K/2}
    \]
\end{proof}

The following Lemma shows that $\xi$ and $B$ can be improved even further, at the expense of the degree and the coefficient norm. 

\begin{lemma}\label{lem:cur_aplified_prams_for_o(1)_supp}
For any $0<\xi<1<B$, any Boolean function depending on $K$ coordinates is a $(K',M,B,\xi)$-PTF for or $K'=O\left(\frac{K^2+K\log((B+1)/(B-1))}{1-\xi}\right)$ and $M=2^{O\left(\frac{K^2+K\log((B+1)/(B-1))}{1-\xi}\right)}$. As a result $(\cl,\be)$ is $(r,K',M,B,\xi)$-hierarchy 
\end{lemma}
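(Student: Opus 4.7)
The plan is to upgrade the Lipschitz-based argument underlying Lemma \ref{lem:cur_prams_for_o(1)_supp}: preprocess each coordinate through a ``rounding'' polynomial that sends $[1-\xi,1]$ into an $\epsilon_0$-neighborhood of $1$ and $[-1,-1+\xi]$ into an $\epsilon_0$-neighborhood of $-1$, then compose with the multilinear extension and rescale. Because the effective input to the multilinear extension is now $\epsilon_0$-close to a Boolean point, the Lipschitz penalty of the multilinear extension is controllable independently of how close $\xi$ is to $1$.

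Concretely, reduce to $\tilde f:\{\pm1\}^K\to\{\pm1\}$ by restricting to the $K$ active coordinates, and let $F:\reals^K\to\reals$ be its standard multilinear extension. Parseval gives $\sum_A\hat f(A)^2=1$, whence $\sum_A|\hat f(A)|\le 2^{K/2}$ by Cauchy--Schwarz, and Lemma \ref{lem:extension_is_lip} yields that $F$ is $K2^{K/2}$-Lipschitz on $[-1,1]^K$ with respect to $\|\cdot\|_\infty$. I then apply Lemma \ref{lem:poly_approx_of_sign} with its ``$\xi$'' parameter set to $1-\xi$ and accuracy parameter $\epsilon_0:=\tfrac{B-1}{4(B+1)K\,2^{K/2}}$ to obtain a univariate polynomial $\phi$ with $\phi([-1,1])\subseteq[-1,1]$, with $|\phi(t)-\sign(t)|\le\epsilon_0$ on $[-1,1]\setminus[-(1-\xi),1-\xi]$, of degree $D_\phi=O\!\left(\frac{K+\log((B+1)/(B-1))}{1-\xi}\right)$, and with every coefficient bounded in magnitude by $2^{O(D_\phi)}$.

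Define $p(x_1,\ldots,x_K):=c\cdot F(\phi(x_1),\ldots,\phi(x_K))$ for a constant $c\in(1,B)$ chosen in a moment (and then extend to the ambient variables by ignoring the inactive coordinates, which preserves degree and coefficient norm). For the PTF inequality, fix $y\in\{\pm1\}^K$ and $\tilde y\in\cb_\xi(y)$. Each $\tilde y_i$ lies in either $[1-\xi,1]$ or $[-1,-1+\xi]$, the ``good region'' of $\phi$, so $|\phi(\tilde y_i)-y_i|\le\epsilon_0$. The Lipschitz bound on $F$ gives $|F(\phi(\tilde y))-\tilde f(y)|\le K2^{K/2}\epsilon_0=:\delta\le\tfrac{B-1}{4(B+1)}$, hence $\tilde f(y)\cdot F(\phi(\tilde y))\in[1-\delta,1+\delta]$. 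The interval $\bigl[\tfrac{1}{1-\delta},\tfrac{B}{1+\delta}\bigr]$ is nonempty by the choice of $\epsilon_0$; picking any $c$ inside it gives $\tilde f(y)\cdot p(\tilde y)\in[1,B]$, as required.

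For the size bounds, $\deg(p)=K\cdot D_\phi=O\!\left(\frac{K^2+K\log((B+1)/(B-1))}{1-\xi}\right)=K'$. For the coefficient norm, expand $F(\phi(x_1),\ldots,\phi(x_K))=\sum_A\hat f(A)\prod_{i\in A}\phi(x_i)$; since the $\phi(x_i)$ act on disjoint variables, $\|\prod_{i\in A}\phi(x_i)\|_\coef=\|\phi\|_\coef^{|A|}$, and combining $\|\phi\|_\coef\le 2^{O(D_\phi)}$ with $\sum_A|\hat f(A)|\le 2^{K/2}$ yields $\|p\|_\coef\le c\cdot 2^{K/2}\cdot\|\phi\|_\coef^K=2^{O(K D_\phi)}$, matching the stated $M$. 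The hierarchy conclusion is then immediate from Definition \ref{def:hir_simp_seq}, since in the setting of Section \ref{sec:few_lables_dependence} every $\tilde f_j$ is a Boolean function depending on at most $K$ coordinates. The main obstacle is to coordinate $\epsilon_0$, $c$, and the composition-based coefficient-norm accounting in lockstep so that the exponent in $M$ comes out as the claimed $O((K^2+K\log((B+1)/(B-1)))/(1-\xi))$ rather than something larger.
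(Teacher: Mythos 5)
Your proposal is correct and follows essentially the same route as the paper's proof: apply Lemma \ref{lem:poly_approx_of_sign} (with its gap parameter set to $1-\xi$ and accuracy $\Theta\bigl(\frac{B-1}{(B+1)K2^{K/2}}\bigr)$) to each coordinate, compose with the multilinear extension, control the error via the $K2^{K/2}$-Lipschitz bound of Lemma \ref{lem:extension_is_lip}, and rescale by a constant to land in $[1,B]$. The only differences are cosmetic (a more conservative accuracy constant, and choosing the rescaling factor $c$ from an interval rather than as $\frac{1}{1-\epsilon}$, which if anything keeps $c$ bounded by an absolute constant).
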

\begin{proof}
Fix $f:\{\pm 1\}^K\to\{\pm 1\}$. We need to show that $f$ is a  $(K',M,B,\xi)$-PTF.
Let $\epsilon = \frac{B-1}{B+1}$. By Lemma \ref{lem:poly_approx_of_sign} there is a uni-variate polynomial $q$ of degree $O\left(\frac{K+\log(1/\epsilon)}{1-\xi}\right)$ such that $q([-1,1])\subseteq [-1,1]$, for any $y\in [-1,1]\setminus  [-1+\xi,1-\xi]$ we have $|q(y)-\sign(y)|\le \frac{\epsilon}{K2^{K/2}}$, and the coefficients of $q$ are all bounded by $2^{O\left(\frac{K+\log(1/\epsilon)}{1-\xi}\right)}$. Consider now the polynomial $\tilde p(\x) = F(q(\x))$ where $F$ is the multilinear extension on $f$. It is not hard to verify that $\deg(\tilde p)\le \deg(q)K = O\left(\frac{K^2+K\log(1/\epsilon)}{1-\xi}\right)$ and that $\|\tilde p\|_\coef\le 2^{O\left(\frac{K^2+K\log(1/\epsilon)}{1-\xi}\right)}$. Finally, fix $\x\in \{\pm 1\}^K$ and $\tilde\x\in \cb_\xi(\x)$. Note that $\x = \sign(\tilde\x)$.
Since $F$ is $K2^{k/2}$-Lipschitz w.r.t.\ the $\|\cdot\|_\infty$ norm in $[-1,1]^K$ (lemma \ref{lem:extension_is_lip}) we have
\[
|\tilde p(\tilde \x)-f(\x)| = |\tilde p(\tilde \x)-f(\sign(\tilde \x))|  = |F(q(\tilde \x))-F(\sign(\tilde \x))| \le \|q(\tilde \x)-\sign(\tilde \x)\|_\infty \le \epsilon
\]
Since $f(\x)\in \{\pm 1\}$ this implies that
\[
1+\epsilon\ge \tilde p(\tilde \x)f(\x) \ge 1-\epsilon
\]
Taking $ p(x) = \frac{1}{1-\epsilon}\tilde p(x)$ and noting that $B = \frac{1+\epsilon}{1-\epsilon}$ we get
\[
B\ge  p(\tilde \x)f(\x) \ge 1
\]
which implies that $f$ is a  $(K',M,B,\xi)$-PTF.
\end{proof}

\subsection{Proof of Theorem \ref{thm:brain_dump_intro}}\label{sec:brain_dump}

In this section we will prove (a slightly extended version of) Theorem \ref{thm:brain_dump_intro}. We first recall and slightly extend the setting.
Fix a domain $\cx\subseteq\{\pm 1\}^d$ and a sequence of  functions $G^i:\{\pm 1\}^d\to\{\pm 1\}^d$ for $1\le i\le r$. We assume that $G^0(\x) = \x$, and for any depth $i\in [r]$ and coordinate $j\in [d]$, we have
\begin{equation}\label{eq:brain_dump_circ_def}
    \forall \x\in\cx, \quad G^i_j(\x) = p^i_j(G^{i-1}(\x)),
\end{equation}
where $p^i_j:\{\pm 1\}^d\to\{\pm 1\}$ is a function whose multi-linear extension is a polynomial of degree at most $K$. Furthermore, we assume this extension is $L$-Lipschitz in $[-1,1]^d$ with respect to the $\ell_\infty$ norm (if $p^i_j$ depends on $K$ coordinates, as in the problem description in section \ref{sec:brain_dump_intor}, Lemma \ref{lem:extension_is_lip} implies that this holds with $L=K2^{K/2}$). 
Fix an integer $q$. We assume that for every depth $i\in [r]$, there are $q$ auxiliary labels $f^*_{i,j}$ for $1\le j\le q$, each of which is a signed Majority of an odd number of components of $G^i$. Moreover, we assume these functions are random. Specifically, prior to learning, the labeler independently samples $qr$ functions such that for any $i\in [r]$ and $j\in [q]$,
\begin{equation}\label{eq:brain_dump_target_def}
    f^*_{i,j}(\x) = \sign\left(\sum_{l=1}^d w_l^{i,j}G^i_l(\x)\right),
\end{equation}
where the weight vectors $\bw^{i,j}\in \reals^{d}$ are independent uniform vectors chosen from
\[
    \cw_{d,k} := \left\{\bw\in \{-1,0,1\}^d : \sum_{l=1}^d |w_l|= k \right\}
\]
for some odd integer $k$.
The following theorem, which slightly extends Theorem \ref{thm:brain_dump_intro}, shows that if $q\gg dL^2\log(|\cx|)$, then with high probability over the choice of $\f^*$, the target function $\f^*$ has an $\left(r,K,O\left(kd^{K}\right),2k+1\right)$-hierarchy.

\begin{theorem}\label{thm:brain_dump}
    W.p.\ $1-4drq|\cx|e^{-\Omega\left(\frac{q}{L^2k^2d}\right)}$ the function $\f^*$ has $\left(r,K,O\left(kd^{K}\right),2k+1\right)$-hierarchy
\end{theorem}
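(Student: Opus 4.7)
The plan is to establish the hierarchy level by level. For each label $f^*_{i,j}$ at level $i\ge 2$, I will produce a degree-$K$ polynomial $P_{i,j}$ witnessing that $f^*_{i,j}$ is a $(K,O(kd^K),2k+1)$-PTF of the level-$(i-1)$ labels $\{f^*_{i-1,j'}\}_{j'\in[q]}$; level $1$ is handled identically with $G^0(\x)=\x$ playing the role of $\hat G^0$. The key idea is that the many random signed majorities at each level allow the underlying circuit value $G^{i-1}_l(\x)$ to be recovered from $\f^*_{i-1}(\x)$ via a simple weighted linear combination of the labels, after which the given polynomial $p^i_l$ reconstructs $G^i_l(\x)$.

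First I would set up the linear decoder
\[
\hat G^{i}_l(\y) \;=\; \frac{1}{q\, c_k} \sum_{j=1}^q w^{i,j}_l\, y_j, \qquad c_k \;:=\; \frac{k}{d}\cdot\frac{\binom{k-1}{(k-1)/2}}{2^{k-1}},
\]
and verify the identity $\E_{\bw^{i,j}}[w^{i,j}_l f^*_{i,j}(\x)] = c_k\, G^i_l(\x)$. Conditioning on $w^{i,j}_l \in \{-1,0,1\}$ and exploiting the symmetry of $Z := \sum_{l'\ne l} w_{l'} G^i_{l'}$ around $0$ reduces the computation to $\Pr(Z = 0) = \binom{k-1}{(k-1)/2}/2^{k-1} = \Theta(1/\sqrt{k})$, which holds because $k$ is odd (Lemma \ref{lem:binom_assimptotics}); this gives $c_k = \Theta(\sqrt{k}/d)$.

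Next I would apply the extended Chernoff bound (Lemma \ref{lem:extended_chernoff}) to the i.i.d.\ variables $X_{j}:=w^{i,j}_l f^*_{i,j}(\x)\in\{-1,0,1\}$, which have mean $c_k G^i_l(\x)$ and $\Pr(X_j\ne 0)=k/d$. The relevant Chernoff exponent is $q\epsilon^2 c_k^2/\Pr(X_j\ne 0) = q\epsilon^2/d$, which under the choice $\epsilon\asymp 1/(Lk)$ becomes $\Omega(q/(L^2k^2 d))$, producing per-$(i,l,\x)$ failure probability $\exp(-\Omega(q/(L^2k^2 d)))$. Union-bounding over $i,l,\x$, together with additional Chernoff events controlling the counts $|\{j':w^{i-1,j'}_l\ne 0\}|$ (used for the coefficient-norm bound below), gives the overall failure probability $O(drq|\cx|)e^{-\Omega(q/(L^2k^2 d))}$, on the complement of which $\|\hat G^{i-1}(\f^*_{i-1}(\x))-G^{i-1}(\x)\|_\infty\le\epsilon$ for all $i,\x$.

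On this good event I take the witness
\[
P_{i,j}(\y) \;=\; 2\sum_{l=1}^d w^{i,j}_l\, p^i_l\!\bigl(\hat G^{i-1}(\y)\bigr),
\]
of degree $\le K$ in $\y$. At $\y=\f^*_{i-1}(\x)$, the $L$-Lipschitzness of $p^i_l$ yields $|P_{i,j}(\y)-2\bw^{i,j}\!\cdot G^i(\x)|\le 2kL\epsilon\le 1$; since $\bw^{i,j}\!\cdot G^i(\x)$ is a nonzero odd integer in $[-k,k]$ with sign $f^*_{i,j}(\x)$, we obtain $1\le P_{i,j}(\y)f^*_{i,j}(\x)\le 2k+1$. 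The coefficient bound $\|P_{i,j}\|_\coef=O(kd^K)$ comes from Lemma \ref{lem:pol_and_lin_composition} applied to $p^i_l\circ \hat G^{i-1}$, using $\|p^i_l\|_\coef\le 1$ (Parseval for a $\{\pm 1\}$-valued multilinear polynomial) together with the Chernoff control on the Euclidean row norms of the decoding matrix. The main obstacle is identifying the normalization $c_k$ and threading it consistently through the extended Chernoff to recover the exact exponent $q/(L^2k^2 d)$; the factor of $d$ crucially arises from $c_k^2/\Pr(X_j\ne 0)=\Theta(1/d)$, itself a consequence of the combinatorial identity for $\Pr(Z=0)$ that requires the majority width $k$ to be odd.
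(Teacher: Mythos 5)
Your proposal is correct and follows essentially the same route as the paper: the same linear decoder with normalization $c_k=\alpha_{d,k}$, the same extended Chernoff bound yielding the exponent $\Omega(q/(L^2k^2d))$, the same Lipschitz/robustness step (which the paper packages as Lemma \ref{lem:PTF_from_lip_and_bounded} with $\xi=1/(2kL)$), and the same coefficient bound via Lemma \ref{lem:pol_and_lin_composition}. The only difference is organizational --- the paper factors the reconstruction argument into the standalone Lemma \ref{lem:PTF_is_PTF_of_random_snapshot}, whereas you inline it --- and your extra Chernoff events for the nonzero counts are unnecessary since the deterministic row-norm bound already suffices.
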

In order to prove Theorem \ref{thm:brain_dump} it is enough to show that for any $i\in[r]$ and $j\in [q]$, $f^*_{i,j}$ is a $(K,O\left(kd^{K}\right),2k+1)$-PTF of
\[
\Psi_{i-1}(\x) = (f^*_{i-1,1}(\x),\ldots,f^*_{i-1,q}(\x)) 
\]
By equations \eqref{eq:brain_dump_target_def} and \eqref{eq:brain_dump_circ_def} we have
\[
f^*_{i,j}(\x) = \sign\left(\sum_{l=1}^d w_l^{i,j} p^i_l(G^{i-1}(\x))\right) =: \sign\left(q(G^{i-1}(\x))\right)
\]
Hence, $f^*_{i,j}$ is $(K,k)$-PTF of $G^{i-1}$, as witnessed by $q$ (note that $1\le|q(G^{i-1}(\x))|\le k$ since $q(G^{i-1}(\x))$ is a sum of $k$ numbers in $\{\pm 1\}$ and $k$ is odd. Likewise, $\|q\|_\coef\le \sum_{l=1}^d |w_l^{i,j}|\cdot\|p^i_l\|_\coef \stackrel{\|p^i_l\|_\coef\le 1}{\le} \sum_{l=1}^d |w_l^{i,j}|=k $). Since $q$ is $(kL)$-Lipschitz and bounded by $k$, Lemma \ref{lem:PTF_from_lip_and_bounded} implies that $f^*_{i,j}$ is $(K,k,2k+1,1/(2kL))$-PTF of $G^{i-1}$
Hence, Theorem \ref{thm:brain_dump} follows from the following lemma and a union bound on the $rq$ different $f^*_{i,j}$.
\begin{lemma}\label{lem:PTF_is_PTF_of_random_snapshot}
    Let $f:\cx\to \{\pm 1\}$ be a $(K,M,B,\xi)$-PTF and let $\bw^1,\ldots,\bw^q\in \cw_{d,k}$ be independent and uniform. Define $\psi_i(\x) = \sign(\inner{\bw^i,\x})$. Then, w.p.\ $1-4d|\cx|e^{-\Omega\left(\frac{\xi^2q}{d}\right)}$ $f$ is $\left(K,O\left(Md^{K}\right),B\right)$-PTF of $\Psi=(\psi_1,\ldots,\psi_q)$.
\end{lemma}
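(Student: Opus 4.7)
The plan is to take $\tilde p(\y) := p(A\y)$, where $p:\reals^d \to \reals$ is the polynomial witnessing that $f$ is a $(K,M,B,\xi)$-PTF and $A \in \reals^{d\times q}$ is a linear map, suitably scaled, so that $A\Psi(\x) \in \cb_\xi(\x)$ holds for every $\x \in \cx$ simultaneously (with the high-probability bound claimed). Given such $A$, the PTF property of $p$ immediately yields $B \ge \tilde p(\Psi(\x))f(\x) \ge 1$, and the degree of $\tilde p$ is at most $K$; its coefficient norm will be controlled by Lemma~\ref{lem:pol_and_lin_composition} in terms of $\|p\|_\coef$, $K$, and the row norms of $A$.

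The natural unbiased choice is $A_{li} = w^i_l/(q\mu_k)$ (up to a harmless scaling by $1-O(\xi)$ to keep $A\Psi(\x)$ inside $[-1,1]^d$), where $\mu_k$ is the common bias $\mu_k := \E[w^i_l\,\psi_i(\x)]/x_l$ for $\x \in \{\pm 1\}^d$. To compute $\mu_k$, I condition on the support $S$ of $\bw^i$ (uniform $k$-subset of $[d]$) and on its signs $\epsilon$. The contribution is zero unless $l \in S$, and writing $Y := \sum_{j\in S\setminus\{l\}}\epsilon_j x_j$ (an even integer, since $k$ is odd), the key elementary identity is
\[
\E_{\epsilon_l}\!\left[\epsilon_l\,\sign(\epsilon_l x_l + Y)\right] \;=\; x_l\cdot 1[Y = 0].
\]
Combined with $\Pr[l\in S] = k/d$ and $\Pr[Y=0] = 2^{-(k-1)}\binom{k-1}{(k-1)/2} = \Theta(1/\sqrt{k})$ (via Lemma~\ref{lem:binom_assimptotics}), this gives $\mu_k = \Theta(\sqrt{k}/d)$, independent of the particular $\x$. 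In particular $\E[(A\Psi(\x))_l] = x_l$, so $A\Psi(\x)$ is an unbiased estimator of $\x$.

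For concentration, fix $\x$ and $l$ and consider $X_i := w^i_l\,\psi_i(\x) \in \{-1,0,1\}$ with mean $x_l\mu_k$ and $\Pr(X_i\ne 0) = k/d$. By the sign-symmetry of the signs $\epsilon$ in the definition of $\psi_i$, $\Pr(X_i = \pm 1) = k/(2d) \pm O(\sqrt{k}/d)$, so the side condition $\xi \le \min(\Pr(X_i{=}1),\Pr(X_i{=}{-}1))/(2|\mu_k|) = \Theta(\sqrt{k})$ of Lemma~\ref{lem:extended_chernoff} is automatic for $\xi \le 1$. That lemma then gives exponent
\[
\Omega\!\left(\frac{q\xi^2|\mu_k|^2}{\Pr(X_i\ne 0)}\right) = \Omega\!\left(\frac{q\xi^2\cdot (k/d^2)}{k/d}\right) = \Omega(q\xi^2/d),
\]
and a union bound over $(\x,l) \in \cx\times[d]$ yields $\|A\Psi(\x) - \x\|_\infty \le \xi$ simultaneously for all $\x$, with the claimed failure probability $4d|\cx|\exp(-\Omega(\xi^2 q/d))$. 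Note that vanilla Hoeffding on $X_i$ would only produce the exponent $\Omega(q\xi^2 k/d^2)$, which is too weak when $k \ll d$, so the sparsity-aware Lemma~\ref{lem:extended_chernoff} is essential.

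Finally, for the coefficient bound, Lemma~\ref{lem:pol_and_lin_composition} gives $\|\tilde p\|_\coef \le \|p\|_\coef \cdot R^K (d+1)^{K/2}$ where $R := \max_l \|A_l\|_2$. Since $\|A_l\|_2^2 = N_l/(q\mu_k)^2$ with $N_l := |\{i:w^i_l\ne 0\}|$ of mean $qk/d$, a standard Chernoff bound (Lemma~\ref{lem:chernoff}) gives $N_l = O(qk/d)$ uniformly in $l$ with failure probability dominated by the main event. Hence $R^2 = O(d/q)$ and
\[
\|\tilde p\|_\coef \;\le\; M \cdot O\!\left((d/q)^{K/2}\right)\cdot d^{K/2} \;=\; O(Md^K/q^{K/2}) \;=\; O(Md^K),
\]
as required. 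The main obstacle is really the bias computation together with the careful use of the sparsity-aware Chernoff bound to produce the correct exponent $\Omega(\xi^2 q/d)$; once those are in place, the linear-composition construction mechanically delivers both the PTF property and the coefficient bound.
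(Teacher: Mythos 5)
Your proposal follows the same route as the paper's proof: build a linear reconstruction map $A$ (a rescaled version of $W=[\bw^1\cdots\bw^q]$), show concentration of $A\Psi(\x)$ around $\x$ via Lemma~\ref{lem:extended_chernoff}, and then take $\tilde p = p\circ A$ with the coefficient bound controlled by Lemma~\ref{lem:pol_and_lin_composition}. Your bias computation $\mu_k = \Pr[l\in S]\cdot\Pr[Y=0]\cdot x_l/x_l$ via the sign identity $\E_{\epsilon_l}[\epsilon_l\sign(\epsilon_l x_l + Y)]=x_l\,1[Y=0]$ is exactly the same quantity the paper calls $\alpha_{d,k}$, obtained there by explicitly tallying $\Pr(X_i=\pm x_j)$; both give $\Theta(\sqrt{k}/d)$. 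You also correctly identify that the sparsity-aware Chernoff bound (normalizing by $\Pr(X_i\ne 0)=k/d$ rather than $1$) is what produces the exponent $\Omega(\xi^2 q/d)$ rather than the weaker $\Omega(\xi^2 qk/d^2)$.

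One cosmetic difference: for the coefficient bound you invoke an extra Chernoff bound to show the row sparsities $N_l$ are $O(qk/d)$, giving the tighter bound $O(Md^K/q^{K/2})$. The paper instead uses the crude $N_l\le q$, which still suffices for the stated $O(Md^K)$ once $q\gtrsim d/\xi^2$ (which is implicit, since otherwise the stated failure probability $4d|\cx|e^{-\Omega(\xi^2 q/d)}$ is vacuous). Your extra step is unnecessary but not wrong. One minor slip: the side condition of Lemma~\ref{lem:extended_chernoff} evaluates to $\tfrac{1-c_k}{4c_k}$ with $c_k=\binom{k-1}{(k-1)/2}/2^{k-1}$, which is $1/2$ at $k=3$ and grows like $\Theta(\sqrt{k})$ only asymptotically; so ``automatic for $\xi\le 1$'' glosses over the same small-$k$/large-$\xi$ edge case the paper also handles somewhat loosely (its Claim restricts to $\epsilon\le 1/4$). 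This does not affect the substance of the argument.
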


\begin{proof}
Let $W = [\bw_1\cdots\bw_q]\in M_{d,q}$
We first show that w.h.p. $W$ approximately reconstruct $\x$ from $\Psi(\x)$
\begin{claim}\label{claim:proof_of_brain_dump}
    Let $\alpha_{d,k} = \frac{k}{d}\cdot\frac{\binom{k-1}{(k-1)/2}}{2^{k-1}}$. For any $\x\in\{\pm 1\}^d$ and $\frac{1}{4}\ge\epsilon>0$ we have $\Pr\left(\left\|\frac{1}{q\alpha_{d,k}}W\Psi(\x)-\x\right\|_\infty\ge\epsilon\right)\le 4de^{-\Omega\left(\frac{\epsilon^2q}{d}\right)}$
\end{claim}
Before proving the claim, we show that it implies the lemma. Indeed, it implies that w.p.\ $1-4d|\cx|e^{-\Omega\left(\frac{\xi^2q}{d}\right)}$ we have that $\left\|\frac{1}{q\alpha_{d,k}}W\Psi(\x)-\x\right\|_\infty\le\frac{\xi}{2}$ for any $\x\in \cx$. Given this event, we have that
\[
1-\xi\le  \frac{1-\xi/2}{q\alpha_{d,k}}\left(W\Psi(\x)\odot \x\right)_j\le1
\]
for any $\x\in\cx$ and $j\in [d]$. Thus, if $p:\cx\to\reals$ is a polynomial hat witness that $f$ is $(K,M,B,\xi)$-PTF, then we have
\[
B\ge p\left( \frac{1-\xi/2}{q\alpha_{d,k}}W\Psi(\x)\right)\cdot f(\x) \ge 1
\]
Hence, for $q(\y):=p\left( \frac{1-\xi/2}{q\alpha_{d,k}}W\y\right)$ we have that $f$ is $(K,\|q\|_\coef,B)$-PTF of $\Psi$. By Lemma \ref{lem:pol_and_lin_composition} and the fact that the norm of each row of $\frac{1-\xi/2}{q\alpha_{d,k}}W$ is at most $\frac{1}{\sqrt{q}\alpha_{d,k}}$ (since the entries of $W$ are in $\{-1,1,0\}$) we have
\[
\|q\|_\coef\le \|p\|_\coef\cdot \left(\frac{\sqrt{q+1}}{\sqrt{q}\alpha_{d,k}}\right)^{K}
\]
This implies the lemma as $\alpha_{d,k} = \Theta\left( \frac{\sqrt{k}}{d}\right)$ by Lemma \ref{lem:binom_assimptotics}.
\begin{proof}(of Claim \ref{claim:proof_of_brain_dump})
Fix a coordinate $j\in [d]$. It is enough to show that $\Pr\left(\left|\frac{1}{q\alpha_{d,k}}\left(W\Psi(\x)\right)_j-x_j\right|\ge\epsilon\right)\le 4e^{-\Omega\left(\frac{\epsilon^2q}{d}\right)}$.
We note that
\[
\frac{1}{q\alpha_{d,k}}\left(W\Psi(\x)\right)_j = \frac{1}{q}\sum_{i=1}^q \frac{w^i_j\sign(\inner{\bw^i,\x})}{\alpha_{d,k}}
\]
Denote $X_i = w^i_j\sign(\inner{\bw^i,\x})$. Note that $X_1,\ldots,X_q$ are i.i.d.\ We have
\begin{eqnarray*}
\Pr(X_i =  x_j)  &=& \frac{k}{2d}\left[\Pr(\sign(\inner{\bw^i,\x})=1|w_j=x_j) +   \Pr(\sign(\inner{\bw^i,\x})=-1|w_j=-x_j)\right]
\\
&=& \frac{k}{2d 2^{k-1}}\left[\binom{k-1}{\ge (k-1)/2}+\binom{k-1}{\ge (k-1)/2}\right]
\\
&=& \frac{k}{2d }\left[1 + \frac{\binom{k-1}{ (k-1)/2}}{2^{k-1}}\right]
\end{eqnarray*}
Similarly,
\begin{eqnarray*}
\Pr(X_i =  -x_j)  &=& \frac{k}{2d}\left[\Pr(\sign(\inner{\bw^i,\x})-1|w_j=x_j) +   \Pr(\sign(\inner{\bw^i,\x})=1|w_j=-x_j)\right]
\\
&=& \frac{k}{2d 2^{k-1}}\left[\binom{k-1}{> (k-1)/2}+\binom{k-1}{> (k-1)/2}\right]
\\
&=& \frac{k}{2d }\left[1 - \frac{\binom{k-1}{ (k-1)/2}}{2^{k-1}}\right]
\end{eqnarray*}
As a result
\[
\E X_i  = \left(\Pr(X_i= x_j)-\Pr(X_i=-x_j)\right)x_j  = \alpha_{d,k}\cdot x_j
\]
And,
\[
\Pr(X_i\ne 0)  = \Pr(X_i= x_j) + \Pr(X_i=-x_j)= \frac{k}{d}
\]
this implies that
\[
\frac{\min\left(\Pr(X_i = 1),\Pr(X_i = -1)\right)}{|\E X_i|} = \frac{k}{2d \alpha_{d,k} }\left[1 - \frac{\binom{k-1}{ (k-1)/2}}{2^{k-1}}\right] \ge \frac{k}{\alpha_{d,k}4d} \ge \frac{1}{2}
\]
and that
\[
\frac{|\E X_i|^2}{\Pr(\sign(\inner{\bw,\x})w_i\ne 0)} = \frac{k}{d}\left(\frac{\binom{k-1}{(k-1)/2}}{2^{k-1}}\right)^2 \stackrel{\text{Lemma \ref{lem:binom_assimptotics}}}{=} \Theta\left( \frac{1}{d}\right)
\]
By Lemma \ref{lem:extended_chernoff} we have
\[
\Pr\left(\left|\frac{1}{q\alpha_{d,k}}\left(W\Psi(\x)\right)_j-x_j\right|\ge\epsilon\right)\le 4e^{-\Omega\left(\frac{\epsilon^2q}{d}\right)}
\]     
\end{proof}
\end{proof}

\section{Kernels From Random Neurons and Proof of Lemma \ref{lem:rf_main_simp}}\label{sec:random_neurons}

Fix a bounded activation $\sigma:\reals\to\reals$. Given $0\le \beta\le 1$, called the bias magnitude we define a kernel on $\reals^n$ by
\begin{equation}
    k_{\sigma,\beta,n}(\x,\y) = \E[\sigma(\bw^\top\x+b)\sigma(\bw^\top\y+b)]\;,\;\;\;\; b\sim \cn(0,\beta^2),\;\bw\sim\cn\left(0,\frac{1-\beta^2}{n}I_n\right)
\end{equation}
Note that $\psi((\bw,b),\x) = \sigma(\bw^\top\x+b)$ is a RFS for $k_{\sigma,\beta,n}$. We next analyze the functions in the corresponding kernel space $\ch_{\sigma,\beta,n}$. To this end, we will use the Hermite expansion of $\sigma$ in order to find an explicit expression of $k_{\sigma,\beta,n}$, as well as an explicit embedding $\Psi_{\sigma,\beta,n}:\reals^n\to \bigoplus_{s=0}^\infty \left(\reals^{n+1}\right)^{\otimes s}$ whose kernel is $k_{\sigma,\beta,n}$. Let 
\begin{equation}\label{eq:hermite_expan_of_sigma}
    \sigma = \sum_{s=0}^\infty a_sh_s
\end{equation}
be the Hermite expansion of $\sigma$. For $r\ge 1$ denote
\begin{equation}
a_s(r) =     \sum_{j=0}^{\infty} a_{s+2j}\sqrt{\frac{(s+2j)!}{s!}} \frac{(r^2-1)^j}{j! 2^j}  
\end{equation}
Note that $a_s(1)=a_s$
\begin{lemma}\label{lem:ker_formula}
We have
    \[
    k_{\sigma,\beta,n}(\x,\y) = \sum_{s=0}^\infty a_s\left(\sqrt{\frac{1-\beta^2}{n}\|\x\|^2 + \beta^2}\right)a_s\left(\sqrt{\frac{1-\beta^2}{n}\|\y\|^2 + \beta^2}\right)\left(\frac{1-\beta^2}{n}\inner{\x,\y}+\beta^2\right)^s
    \]
Likewise, $k_{\sigma,\beta,n}$ is the kernel of the embedding $\Psi_{\sigma,\beta,n}:\reals^n\to \bigoplus_{s=0}^\infty \left(\reals^{n+1}\right)^{\otimes s}$ given by
\[
\Psi_{\sigma,\beta,n}(\x) = \left(a_s\left(\sqrt{\frac{1-\beta^2}{n}\|\x\|^2 + \beta^2}\right)\cdot\begin{bmatrix}\sqrt{\frac{1-\beta^2}{n}}\x\\\beta\end{bmatrix}^{\otimes s}\right)_{s=0}^\infty
\]
\end{lemma}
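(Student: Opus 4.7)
The plan is to reduce the expectation to one over jointly standard Gaussians, apply a Hermite expansion of $\sigma$ composed with a scaling, and then invoke \eqref{eq:hermite_prod_exp}. First, I would note that $U := \bw^\top\x + b$ and $V := \bw^\top\y + b$ are jointly centered Gaussian with $\var(U) = \frac{1-\beta^2}{n}\|\x\|^2 + \beta^2 =: r_\x^2$, $\var(V) =: r_\y^2$, and $\E[UV] = \frac{1-\beta^2}{n}\inner{\x,\y} + \beta^2$. Writing $U = r_\x \tilde U$ and $V = r_\y \tilde V$ makes $(\tilde U,\tilde V)$ jointly standard Gaussian with correlation $\rho = \E[UV]/(r_\x r_\y)$, so $k_{\sigma,\beta,n}(\x,\y) = \E[\sigma(r_\x \tilde U)\sigma(r_\y \tilde V)]$.

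The key identity to establish is $\sigma(rX) = \sum_{s=0}^\infty r^s\, a_s(r)\, h_s(X)$ for $X\sim\cn(0,1)$, with $a_s(r)$ as in the lemma. To prove it, I would start from the generating function \eqref{eq:hermite_gen_fun} and factor
\[
\sum_{s'=0}^\infty \frac{h_{s'}(rX)\,t^{s'}}{\sqrt{s'!}}= e^{rXt - t^2/2}= e^{(r^2-1)t^2/2}\cdot e^{rXt - (rt)^2/2}=\left(\sum_{j=0}^\infty \frac{(r^2-1)^j t^{2j}}{2^j\, j!}\right)\left(\sum_{s=0}^\infty \frac{r^s\,h_s(X)\,t^s}{\sqrt{s!}}\right).
\]
Matching the coefficient of $t^{s'}$ on both sides expresses $h_{s'}(rX)/\sqrt{s'!}$ as a finite sum over pairs $(s,j)$ with $s'=s+2j$. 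Substituting into the Hermite expansion $\sigma = \sum_{s'}a_{s'}h_{s'}$ and swapping the order of summation (outer sum over $s$, inner over $j=(s'-s)/2$) yields the identity with exactly the coefficients $a_s(r)$ stated in the lemma.

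Plugging this identity into the reduced expression and then using \eqref{eq:hermite_prod_exp},
\[
\E[\sigma(r_\x\tilde U)\sigma(r_\y\tilde V)]=\sum_{s,s'\ge 0}r_\x^s a_s(r_\x)\,r_\y^{s'}a_{s'}(r_\y)\,\E[h_s(\tilde U)h_{s'}(\tilde V)]=\sum_{s=0}^\infty a_s(r_\x)a_s(r_\y)(r_\x r_\y \rho)^s,
\]
and since $r_\x r_\y \rho = \frac{1-\beta^2}{n}\inner{\x,\y}+\beta^2$, this is the claimed kernel formula. For the embedding part, set $\bv_\x := (\sqrt{(1-\beta^2)/n}\,\x\,|\,\beta)\in\reals^{n+1}$, so that $\inner{\bv_\x,\bv_\y}=\frac{1-\beta^2}{n}\inner{\x,\y}+\beta^2$. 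Using $\inner{\bu^{\otimes s},\bv^{\otimes s}}=\inner{\bu,\bv}^s$, the sum of the $s$-th-component inner products of $\Psi_{\sigma,\beta,n}(\x)$ and $\Psi_{\sigma,\beta,n}(\y)$ equals the same series, so $\Psi_{\sigma,\beta,n}$ realizes $k_{\sigma,\beta,n}$.

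The only non-routine step is the Hermite re-expansion in the second paragraph, whose point is to pin down the precise combinatorial coefficients appearing in $a_s(r)$. The generating-function computation is mechanical but requires careful bookkeeping with the re-indexing $s' = s + 2j$ and with the factor $r^s$ that ends up being split off from $a_s(r)$; everything else is direct substitution.
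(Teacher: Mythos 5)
Your proposal is correct and follows essentially the same route as the paper: reduce to standard Gaussians via $r_\x,r_\y$, derive the scaling identity $\sigma(rX)=\sum_s r^s a_s(r)h_s(X)$ from the generating function factorization $e^{rXt-t^2/2}=e^{(r^2-1)t^2/2}e^{rXt-(rt)^2/2}$ (the paper isolates this as Lemma \ref{lem:hermite_scaling}), and conclude with \eqref{eq:hermite_prod_exp} and $\inner{\bu^{\otimes s},\bv^{\otimes s}}=\inner{\bu,\bv}^s$. No gaps.
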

To prove Lemma \ref{lem:ker_formula} We will use the following Lemma.
\begin{lemma}\label{lem:hermite_scaling}
We have $h_s(ax) = \sum_{j=0}^{\lfloor s/2 \rfloor} \sqrt{\frac{s!}{(s-2j)!}} \frac{a^{s-2j}(a^2-1)^j}{j! 2^j} h_{s-2j}(x)$
\end{lemma}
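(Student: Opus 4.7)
My plan is to prove the scaling identity via the Hermite generating function \eqref{eq:hermite_gen_fun}. The key observation is that the exponent $axt - t^2/2$ can be rearranged into a generating-function argument in the new variable $at$, plus a purely $t^2$ correction term, which will turn into the $(a^2-1)^j$ summand.

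Concretely, I would start from
\[
e^{(ax)t - t^2/2} = \sum_{s=0}^\infty \frac{h_s(ax)\,t^s}{\sqrt{s!}},
\]
obtained by substituting $ax$ for $x$ in \eqref{eq:hermite_gen_fun}. Then I rewrite the exponent as
\[
(ax)t - \tfrac{t^2}{2} \;=\; x(at) - \tfrac{(at)^2}{2} \;+\; \tfrac{(a^2-1)t^2}{2},
\]
so that
\[
e^{(ax)t - t^2/2} = e^{x(at) - (at)^2/2} \cdot e^{(a^2-1)t^2/2}
= \left(\sum_{m=0}^\infty \frac{h_m(x)\, a^m\, t^m}{\sqrt{m!}}\right)\left(\sum_{j=0}^\infty \frac{(a^2-1)^j \, t^{2j}}{j!\, 2^j}\right),
\]
where the first factor uses \eqref{eq:hermite_gen_fun} applied at $at$, and the second is the Taylor expansion of the exponential.

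Next I multiply the two series and collect the coefficient of $t^s$ by setting $s = m+2j$ (so $m = s-2j$ with $0\le j \le \lfloor s/2\rfloor$), obtaining
\[
[t^s]\; e^{(ax)t - t^2/2} \;=\; \sum_{j=0}^{\lfloor s/2\rfloor} \frac{h_{s-2j}(x)\,a^{s-2j}}{\sqrt{(s-2j)!}} \cdot \frac{(a^2-1)^j}{j!\,2^j}.
\]
Equating this with $\frac{h_s(ax)}{\sqrt{s!}}$ and multiplying both sides by $\sqrt{s!}$ yields the claimed identity. There is no serious obstacle: this is a routine generating-function manipulation, and the only care needed is the bookkeeping of the index change $s = m + 2j$ and the factor $\sqrt{s!/(s-2j)!}$ that pops out when moving $\sqrt{s!}$ across. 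The absolute convergence of both series (for any fixed $x,a,t$, since $\sigma$ does not enter here---these are just formal/analytic manipulations of the generating function) justifies the product-of-series rearrangement.
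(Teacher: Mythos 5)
Your proof is correct and is essentially identical to the paper's: both split the exponent as $x(at) - (at)^2/2 + (a^2-1)t^2/2$, apply the generating function \eqref{eq:hermite_gen_fun} at $at$, expand $e^{(a^2-1)t^2/2}$ as a Taylor series, and extract the coefficient of $t^s$ via $s=m+2j$. No issues.
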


\begin{proof}
By formula \eqref{eq:hermite_gen_fun} we have
\begin{eqnarray*}
\sum_{s=0}^\infty \frac{h_s(ax)t^s}{\sqrt{s!}} &=& e^{xat - \frac{t^2}{2}} \\
&=& e^{xat - \frac{(at)^2}{2} +  \frac{(at)^2}{2} - \frac{t^2}{2}}     
\\
&\stackrel{\text{Eq. }\eqref{eq:hermite_gen_fun}}{=}& e^{\frac{(at)^2}{2} - \frac{t^2}{2}}\left( \sum_{s=0}^\infty \frac{h_s(x)a^st^s}{\sqrt{s!}}\right)
\\
&=& e^{(a^2-1)\frac{t^2}{2}}\left( \sum_{s=0}^\infty \frac{h_s(x)a^st^s}{\sqrt{s!}}\right)
\\
&=& \left( \sum_{s=0}^\infty \frac{(a^2-1)^s}{s!2^s} t^{2s}\right)\left( \sum_{s=0}^\infty \frac{h_s(x)a^s}{\sqrt{s!}}t^s\right)
\\
&=& \sum_{s=0}^\infty \left(\sum_{j=0}^{\left\lfloor \frac{s}{2}\right\rfloor}
\frac{(a^2-1)^j}{j!2^j}\frac{h_{s-2j}(x)a^{s-2j}}{\sqrt{(s-2j)!}}
\right)t^{s}
\end{eqnarray*}
Thus,
\[
\frac{h_s(ax)}{\sqrt{s!}} = \sum_{j=0}^{\left\lfloor \frac{s}{2}\right\rfloor}
\frac{(a^2-1)^j}{j!2^j}\frac{a^{s-2j}}{\sqrt{(s-2j)!}}h_{s-2j}(x)
\]
\end{proof}

\begin{proof} (of Lemma \ref{lem:ker_formula})
We will prove the formula for $k_{\sigma,\beta,n}$. It is not hard to verify that it implies that $k_{\sigma,\beta,n}$ is the kernel of $\Psi_{\sigma,\beta,n}$ using the fact that $\inner{\x^{\otimes s},\y^{\otimes s}} = \inner{\x,\y}^s$.
By definition $k_{\sigma,\beta,n}(\x,\y) = \E[\sigma(\bw^\top\x+b)\sigma(\bw^\top\y+b)]$ where $ b\sim \cn(0,\beta^2)$ and $\bw\sim\cn\left(0,\frac{1-\beta^2}{n}I_n\right)$. Let $X = \bw^\top\x+b$ and $Y = \bw^\top\y+b$. We note that $(X,Y)$ is a centered Gaussian vector with correlation matrix $\begin{pmatrix}
    \frac{1-\beta^2}{n}\|\x\|^2+\beta^2 & \frac{1-\beta^2}{n}\inner{\x,\y}+\beta^2\\
    \frac{1-\beta^2}{n}\inner{\x,\y}+\beta^2 &  \frac{1-\beta^2}{n}\|\y\|^2+\beta^2
\end{pmatrix}$. Denote $r_\x = \sqrt{\frac{1-\beta^2}{n}\|\x\|^2 + \beta^2}$ and $r_\y = \sqrt{\frac{1-\beta^2}{n}\|\y\|^2 + \beta^2}$. Likewise let $\tilde X = \frac{1}{r_\x}X$ and $\tilde Y = \frac{1}{r_\y}Y$. Note that $(X,Y)$ is a centered Gaussian vector with correlation matrix $\begin{pmatrix}
    1 & \rho\\
    \rho &  1
\end{pmatrix}$ for $\rho = \frac{\frac{1-\beta^2}{n}\inner{\x,\y}+\beta^2}{r_\x r_\y}$
Now, by Lemma \ref{lem:hermite_scaling} we have
\begin{eqnarray*}
\sigma(rx) &=& \sum_{s=0}^\infty h_s(rx)
\\
&=& \sum_{s=0}^\infty \left(\sum_{j=0}^{\infty} a_{s+2j}\sqrt{\frac{(s+2j)!}{s!}} \frac{(r^2-1)^j}{j! 2^j}  \right)r^{s}h_s(x)
\\
&=& :\sum_{s=0}^\infty a_s(r)r^sh_s(x)    
\end{eqnarray*}
Hence,
\begin{eqnarray*}
    k_{\sigma,\beta,n}(\x,\y) &=& \E \sigma(r_\x \tilde X) \sigma(r_\y \tilde Y)
    \\
    &=&\sum_{i=0}^\infty \sum_{j=0}^\infty a_i(r_\x)r^i_\x a_j(r_\y)r^j_\x \E h_i(\tilde X)h_j(\tilde Y)
    \\
    &\stackrel{\text{Eq. }\eqref{eq:hermite_prod_exp}}{=}&\sum_{s=0}^\infty a_s(r_\x)r^s_\x a_s(r_\y)r^s_\y \rho^s
    \\
    &=&\sum_{s=0}^\infty a_s(r_\x) a_s(r_\y) \left(\frac{1-\beta^2}{n}\inner{\x,\y}+\beta^2\right)^s
\end{eqnarray*}
\end{proof}

\begin{lemma}\label{lem:a_s(r)}
    Let $r>0$ such that $|1-r^2|=:\epsilon<\frac{1}{2}$. We have
    \[
    |a_s(r)-a_s(1)| \le \|\sigma\|2^{(s+2)/2} \frac{\epsilon}{\sqrt{1-2\epsilon^2}}
    \]
\end{lemma}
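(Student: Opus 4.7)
My plan is to expand the difference $a_s(r) - a_s(1)$ directly using the defining series for $a_s(r)$. Since $a_s(1) = a_s$ is exactly the $j = 0$ term of that series, we have
\[
a_s(r) - a_s(1) \;=\; \sum_{j \ge 1} a_{s+2j}\,\sqrt{\tfrac{(s+2j)!}{s!}}\,\frac{(r^2-1)^j}{j!\,2^j},
\]
which cleanly separates the unknown Hermite coefficients $a_{s+2j}$ of $\sigma$ from an explicit combinatorial weight. This structure calls for Cauchy--Schwarz.

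Applying Cauchy--Schwarz gives
\[
|a_s(r) - a_s(1)|^2 \;\le\; \Bigl(\sum_{j \ge 1}a_{s+2j}^2\Bigr)\Bigl(\sum_{j \ge 1}\tfrac{(s+2j)!}{s!\,(j!)^2\,4^j}\,\epsilon^{2j}\Bigr),
\]
with $\epsilon = |r^2-1|$. The first factor is controlled by Parseval: since $\sigma = \sum_n a_n h_n$ is an orthonormal expansion, $\sum_n a_n^2 = \|\sigma\|^2$. For the combinatorial sum I would rewrite $(s+2j)!/(s!(j!)^2) = \binom{s+2j}{s}\binom{2j}{j}$ and use the elementary bounds $\binom{s+2j}{s}\le 2^{s+2j}$ and $\binom{2j}{j}\le 4^j$, so that each summand is at most $2^s(4\epsilon^2)^j$. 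Because $\epsilon < 1/2$ we have $4\epsilon^2 < 1$, so the geometric series converges and contributes $2^s\cdot \tfrac{4\epsilon^2}{1-4\epsilon^2}$. Combining with $\|\sigma\|^2$ and taking square roots produces a bound of the form $\|\sigma\|\,2^{(s+2)/2}\,\epsilon/\sqrt{1-4\epsilon^2}$.

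The main obstacle (and only technical point) is sharpening the denominator from $\sqrt{1-4\epsilon^2}$ to the stated $\sqrt{1-2\epsilon^2}$; this requires the finer estimate $\binom{2j}{j}/4^j \le 1/\sqrt{\pi j}$ together with the generating-function identity $\sum_{j\ge 0}\binom{2j}{j}u^j/4^j = 1/\sqrt{1-u}$, so that after a suitable rearrangement the effective geometric ratio becomes $2\epsilon^2$ rather than $4\epsilon^2$. Everything else---the use of Parseval, Cauchy--Schwarz, and the multinomial identity for $\binom{s+2j}{j,j,s}$---is routine, and the overall proof is a few lines once the right way to apply Cauchy--Schwarz has been identified.
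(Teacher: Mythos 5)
Your main computation is exactly the paper's: write $a_s(r)-a_s(1)$ as the $j\ge 1$ tail of the defining series, apply Cauchy--Schwarz against Parseval's identity $\sum_i a_i^2=\|\sigma\|^2$, and bound the combinatorial weight via $\binom{2j}{j}\le 4^j$ (the paper writes this as $(2j)!\le (j!2^j)^2$) and $\binom{s+2j}{s}\le 2^{s+2j}$, leaving a geometric series with ratio $4\epsilon^2$, convergent because $\epsilon<\tfrac12$. This part is correct and yields $\|\sigma\|\,2^{(s+2)/2}\,\epsilon/\sqrt{1-4\epsilon^2}$.

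The one point you flag as the ``main obstacle''---upgrading the denominator to $\sqrt{1-2\epsilon^2}$---deserves a closer look, but not the fix you propose. The sharpened route via $\sum_{j\ge 0}\binom{2j}{j}u^j=1/\sqrt{1-4u}$ would require $\frac{1}{\sqrt{1-4\epsilon^2}}-1\le \frac{4\epsilon^2}{1-2\epsilon^2}$, which fails for $\epsilon$ close to $\tfrac12$ (e.g.\ $\epsilon=0.49$ gives roughly $4.0$ on the left versus $1.8$ on the right), so that argument cannot close the gap on the full stated range. More to the point, the paper's own proof has the same issue: its final line evaluates to $\|\sigma\|2^{s/2}\,|2r^2-2|/\sqrt{1-(2r^2-2)^2}=\|\sigma\|2^{(s+2)/2}\,\epsilon/\sqrt{1-4\epsilon^2}$, not the stated $\epsilon/\sqrt{1-2\epsilon^2}$. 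So the discrepancy is in the lemma's statement rather than in your argument; the weaker $\sqrt{1-4\epsilon^2}$ version is all that is proved, and it suffices for every downstream use (the lemma is only invoked with $\epsilon=1-\beta^2\le 7/16$, since $\beta\ge 3/4$). In short: your proof is the paper's proof, you should simply stop at $\epsilon/\sqrt{1-4\epsilon^2}$ and note that the constant in the statement should be adjusted accordingly.
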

\begin{proof}
We have
\begin{eqnarray*}
    |a_s(r)-a_s(1)| &=& \left|\sum_{j=1}^{\infty} a_{s+2j}\sqrt{\frac{(s+2j)!}{s!}} \frac{(r^2-1)^j}{j! 2^j}\right| 
    \\
    &\stackrel{\text{Cauchy-Schwartz and }\|\sigma\|=\sqrt{\sum_{i=0}^\infty a_i^2}}{\le}& \|\sigma\| \sqrt{\sum_{j=1}^{\infty} \frac{(s+2j)!}{s!} \frac{(r^2-1)^{2j}}{(j!)^2 2^{2j}} }
    \\
    &\stackrel{(2j)!\le (j!2^j)^2}{\le}& \|\sigma\| \sqrt{\sum_{j=1}^{\infty} \frac{(s+2j)!}{s!(2j)!} (r^2-1)^{2j} }
    \\
    &=& \|\sigma\| \sqrt{\sum_{j=1}^{\infty} \binom{s+2j}{s} (r^2-1)^{2j} }
    \\
    &\le& \|\sigma\| \sqrt{\sum_{j=1}^{\infty} 2^{s+2j} (r^2-1)^{2j} }
    \\
    &=& \|\sigma\|2^{s/2} \sqrt{\sum_{j=1}^{\infty}  (2r^2-2)^{2j} }
    \\
    &=& \|\sigma\|2^{s/2}|2r^2-2| \frac{1}{\sqrt{1-(2r^2-2)^2}}
\end{eqnarray*}    
\end{proof}

\begin{lemma}\label{lem:apx_in_neural_ker}
    Assume that $1-\beta^2<\frac{1}{2}$ for $\beta>0$. Let $\cx\subseteq [-1,1]^n$.
    Let $p:\cx\to\reals$ be a degree $K$ polynomial. Let $K'\ge K$. There is $g\in \ch_{\sigma,\beta,n}(\cx)$ such that 
    \begin{enumerate}
        \item $g(\x) = \frac{a_{K'}\left(\sqrt{\frac{1-\beta^2}{n}\|\x\|^2+\beta^2}\right)}{a_{K'}}p(\x)$ 
        \item $\|g\|_{\sigma,\beta,n} \le \frac{1}{a_{K'}\beta^{K'-K}}\left(\frac{n}{1-\beta^2}\right)^{K/2}\|p\|_{\coef}$
        \item $\|g-p\|_\infty \le \|p\|_\infty \frac{\|\sigma\|}{a_{K'}}2^{(K'+2)/2}\frac{1-\beta^2}{\sqrt{1-2(1-\beta^2)^2}}$
    \end{enumerate}
\end{lemma}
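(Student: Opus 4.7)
The plan is to realize $g$ via the explicit embedding $\Psi_{\sigma,\beta,n}$ from Lemma \ref{lem:ker_formula}, putting all mass of the coefficient vector $\bv$ in the single tensor slot $s=K'$. Writing $\tilde\x := (\sqrt{(1-\beta^2)/n}\,\x,\,\beta)\in\reals^{n+1}$, that slot of the embedding equals $a_{K'}(r_\x)\cdot\tilde\x^{\otimes K'}$, where $r_\x=\sqrt{\tfrac{1-\beta^2}{n}\|\x\|^2+\beta^2}$. Thus, choosing $\bv=(0,\ldots,0,T,0,\ldots)$ for some $T\in(\reals^{n+1})^{\otimes K'}$ yields $g(\x) = a_{K'}(r_\x)\cdot\inner{T,\tilde\x^{\otimes K'}}$, and I want to pick $T$ so that $\inner{T,\tilde\x^{\otimes K'}} = p(\x)/a_{K'}$ after the substitution $\tilde x_i=\sqrt{(1-\beta^2)/n}\,x_i$, $\tilde x_{n+1}=\beta$. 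This will give item (1) by construction.

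Writing $p(\x) = \sum_{|\alpha|\le K} c_\alpha\,\x^\alpha$, each monomial $\x^\alpha$ lifts to a homogeneous degree-$K'$ monomial $\tilde\x^{\alpha'}$ in $\tilde\x$ with $\alpha' := (\alpha,\,K'-|\alpha|)$, by inserting $(K'-|\alpha|)$ copies of $\tilde x_{n+1}/\beta$ and rescaling $x_i = \tilde x_i/\sqrt{(1-\beta^2)/n}$. The resulting tensor is
$$T \;=\; \sum_{|\alpha|\le K} \frac{c_\alpha}{a_{K'}}\;\beta^{|\alpha|-K'}\;\Bigl(\tfrac{n}{1-\beta^2}\Bigr)^{|\alpha|/2}\;T_{\alpha'},$$
where $T_{\alpha'}$ is the symmetric tensor representing $\tilde\x^{\alpha'}$: value $1/\binom{K'}{\alpha'}$ on each of the $\binom{K'}{\alpha'}$ index tuples that permute $\alpha'$, and zero elsewhere. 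Since distinct $\alpha'$ have disjoint supports, the $T_{\alpha'}$ are mutually orthogonal, with $\|T_{\alpha'}\|_F^2 = 1/\binom{K'}{\alpha'}\le 1$.

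For (2), Theorem \ref{thm:rkhs_embed} gives $\|g\|_{\sigma,\beta,n}\le\|\bv\|=\|T\|_F$, and orthogonality yields
$$\|T\|_F^2\;\le\;\sum_{|\alpha|\le K} \frac{c_\alpha^2}{a_{K'}^2}\;\beta^{2(|\alpha|-K')}\;\Bigl(\tfrac{n}{1-\beta^2}\Bigr)^{|\alpha|}.$$
The ratio between consecutive $|\alpha|$ terms is $n\beta^2/(1-\beta^2)$, which is $\ge 1$ whenever $\beta^2(n+1)\ge 1$; the hypothesis $1-\beta^2<1/2$ gives $\beta^2>1/2\ge 1/(n+1)$ for $n\ge 1$, so each weight is maximized at $|\alpha|=K$. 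Factoring out that worst case and using $\sum_\alpha c_\alpha^2=\|p\|_\coef^2$ gives the claimed bound.

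For (3), since $\x\in[-1,1]^n$ implies $\|\x\|^2\le n$, one has $r_\x^2\in[\beta^2,1]$, so $|1-r_\x^2|\le 1-\beta^2<1/2$. Applying Lemma \ref{lem:a_s(r)} at $s=K'$, together with monotonicity of $x\mapsto x/\sqrt{1-2x^2}$ on $[0,1/\sqrt{2})$, yields $|a_{K'}(r_\x)-a_{K'}|\le\|\sigma\|\,2^{(K'+2)/2}(1-\beta^2)/\sqrt{1-2(1-\beta^2)^2}$ uniformly in $\x\in\cx$. Combined with the identity $g(\x)-p(\x) = \frac{a_{K'}(r_\x)-a_{K'}}{a_{K'}}\,p(\x)$ from (1), this yields (3). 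The main obstacle is the bookkeeping in (2), both the orthogonality-based norm computation and the verification that the regime $1-\beta^2<1/2$ ensures the top-degree monomials dominate the worst-case geometric weighting.
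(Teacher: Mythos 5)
Your proposal is correct and follows essentially the same route as the paper: realize $g$ through the degree-$K'$ slot of the explicit embedding $\Psi_{\sigma,\beta,n}$, lift each monomial $\x^\alpha$ to a degree-$K'$ monomial in $(\sqrt{(1-\beta^2)/n}\,\x,\beta)$ with the scaling factor $\beta^{|\alpha|-K'}(n/(1-\beta^2))^{|\alpha|/2}$, bound the tensor norm by the worst case $|\alpha|=K$ via the same monotonicity observation, and deduce item (3) from Lemma \ref{lem:a_s(r)}. The only (inessential) difference is that you symmetrize the tensor over the $\binom{K'}{\alpha'}$ index permutations, whereas the paper concentrates each coefficient on a single index tuple $\tilde\alpha$; both yield disjoint supports and the same final bound, yours being marginally tighter.
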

\begin{proof}
    Write $p(\x)= \sum_{\alpha\in \{0,\ldots,K\}^{n},\|\alpha\|_1\le K} b_\alpha \x^\alpha$. 
    For $\alpha\in \{0,\ldots,K\}^{n},\|\alpha\|_1\le K$ we let $\tilde\alpha \in [n+1]^{K'}$
    be a sequence such that for any $i\in [n]$ we have $\tilde\alpha_j = i$ for exactly $\alpha_i$ indices $j\in [K']$ and  $\tilde\alpha_j = n+1$ for the remaining $K'-\|\alpha\|_1$ indices.
    Let $A\in (\reals^{n+1})^{\otimes K'}\subseteq \bigoplus_{s=0}^\infty(\reals^{n+1})^{\otimes s}$ be the tensor
    \[
    A_{\gamma} = \begin{cases}\frac{1}{a_{K'}\beta^{K'-\|\alpha\|_1}}\left(\frac{n}{1-\beta^2}\right)^{\|\alpha\|_1/2}b_\alpha & \gamma=\tilde\alpha\text{ for some }\alpha \\  0&\text{otherwise}\end{cases}
    \]
    and let
    \[
    g(\x) = \inner{A,\Psi_{\sigma,\beta,n}(\x)}
    \]
    It is not hard to verify that $g(\x) = \frac{a_{K'}\left(\sqrt{\frac{1-\beta^2}{n}\|\x\|^2+\beta^2}\right)}{a_{K'}}p(\x)$.
    By Theorem \ref{thm:rkhs_embed} $g\in \ch_{\sigma,\beta,n}$ and satisfies $\|g\|_{\sigma,\beta,n} \le \|A\|$. Finally, since $\frac{1}{\beta^{K'-\|\alpha\|_1}}\left(\frac{n}{1-\beta^2}\right)^{\|\alpha\|_1/2}\le \frac{1}{\beta^{K'-K}}\left(\frac{n}{1-\beta^2}\right)^{K/2}$ we have$\|A\|\le \frac{1}{a_{K'}\beta^{K'-K}}\left(\frac{n}{1-\beta^2}\right)^{K/2}\|p\|_{\coef}$. We therefore proved the first and the second items.    
    To prove the last item we note that for any $\x\in \cx$ we have
    \begin{eqnarray*}
    |g(\x) - p(\x)| &=& | p(\x)|\cdot\left|\frac{a_{K'}\left(\sqrt{\frac{1-\beta^2}{n}\|\x\|^2+\beta^2}\right)}{a_{K'}}-1\right| 
    \\
    &=& \frac{| p(\x)|}{a_{K'}}\left|a_{K'}\left(\sqrt{\frac{1-\beta^2}{n}\|\x\|^2+\beta^2}\right)-a_{K'}\right|    
    \end{eqnarray*}
    Define $r = \sqrt{\frac{\|\x\|^2}{n}(1-\beta^2) + \beta^2}$ and note that since $0\le \|\x\|^2\le n$ we have
    \[
    \beta^2\le  r^2\le 1 \Rightarrow \epsilon:= |1- r^2 |\le 1-\beta^2 < \frac{1}{2}
    \]
    Hence, by Lemma \ref{lem:a_s(r)} we have
    \[
    |g(\x) - p(\x)|\le \frac{| p(\x)|}{a_{K'}}\|\sigma\|2^{(K'+2)/2}\frac{1-\beta^2}{\sqrt{1-2(1-\beta^2)^2}}
    \]
    which proves the last item
\end{proof}

Combining with Lemma \ref{lem:apx_in_neural_ker} with Lemma \ref{lem:rf_eps_far} we get
\begin{lemma}\label{lem:rf_main}
    Assume that $1-\beta^2<\frac{1}{2}$ for $\beta>0$. Let $\cx\subset [-1,1]^n$. Fix a degree $K$ polynomial $p:\cx\to [-1,1]$ and $K'\ge K$. Let $(W,\bb)\in\reals^{q\times n}\times \reals^{q}$ be $\beta$-Xavier pair. Then there is a vector $\bw=\bw(W,\bb)\in\reals^{q}$ such that 
   \[
   \forall\x\in \cx,\;\;\Pr\left(|\inner{\bw,\sigma(W\x+\bb)}-p(\x)|\ge \epsilon +  \frac{\|\sigma\|}{a_{K'}}2^{(K'+2)/2}\frac{1-\beta^2}{\sqrt{1-2(1-\beta^2)^2}}\right) \le \delta
   \]
   for
   \[
   \delta = 2\exp\left(-{q}\cdot\frac{a^4_{K'}\beta^{4K'-4K}(1-\beta^2)^{2K}\epsilon^4}{32 n^{2K}\|p\|_{\coef}^4\|\sigma\|_\infty^4}\right)
   \]
   Moreover
   \[
   \|\bw\|\le \frac{2\|\sigma\|_\infty}{\epsilon\sqrt{{q}}}\cdot\frac{1}{a^2_{K'}\beta^{2K'-2K}}\left(\frac{n}{1-\beta^2}\right)^{K}\|p\|^2_{\coef}
   \]
\end{lemma}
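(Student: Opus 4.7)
The plan is to obtain Lemma \ref{lem:rf_main} as a direct combination of Lemma \ref{lem:apx_in_neural_ker} and Lemma \ref{lem:rf_eps_far}, glued together by a triangle inequality. The two lemmas are essentially designed to fit together: the first produces, for any degree $K$ polynomial $p$, a function $g$ in the kernel space $\ch_{\sigma,\beta,n}$ that uniformly approximates $p$ and has a controlled RKHS norm, while the second says that any function in an RKHS with bounded norm can, with high probability, be approximated pointwise by a linear combination of random features with a controllable weight vector.

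Concretely, I would first apply Lemma \ref{lem:apx_in_neural_ker} to $p$ with the given $K,K'$ (which is allowed since $1-\beta^2<1/2$). Using $\|p\|_\infty\le 1$, this yields $g\in\ch_{\sigma,\beta,n}$ with
\[
\|g-p\|_\infty\le \frac{\|\sigma\|}{a_{K'}}2^{(K'+2)/2}\frac{1-\beta^2}{\sqrt{1-2(1-\beta^2)^2}}=:\eta,\qquad \|g\|_{\sigma,\beta,n}\le M,
\]
where $M:=\frac{1}{a_{K'}\beta^{K'-K}}\bigl(\tfrac{n}{1-\beta^2}\bigr)^{K/2}\|p\|_\coef$. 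Next I observe that the kernel $k_{\sigma,\beta,n}$ has the RFS $\psi((\bw,b),\x)=\sigma(\bw^\top\x+b)$ with $\|\psi\|_\infty\le \|\sigma\|_\infty=:C$, and that the rows of a $\beta$-Xavier pair $(W,\bb)$ are exactly i.i.d.\ samples from the associated measure on $\reals^n\times\reals$. Hence the random $q$-embedding from the preliminaries is precisely $\vec\x\mapsto\sigma(W\x+\bb)$.

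I would then apply Lemma \ref{lem:rf_eps_far} to $g$ with this $M$ and $C$, producing a weight vector $\bw=\bw(W,\bb)\in\reals^q$ such that for every fixed $\x\in\cx$,
\[
\Pr\bigl(|\langle\bw,\sigma(W\x+\bb)\rangle-g(\x)|>\epsilon\bigr)\le 2\exp\!\bigl(-q\epsilon^4/(32M^4C^4)\bigr),
\]
with $\|\bw\|\le 2CM^2/(\epsilon\sqrt{q})$. Substituting the explicit form of $M$ immediately gives the stated bound on $\delta$ (note $M^4C^4=n^{2K}\|\sigma\|_\infty^4\|p\|_\coef^4/(a_{K'}^4\beta^{4K'-4K}(1-\beta^2)^{2K})$) and on $\|\bw\|$. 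Finally, the triangle inequality $|\langle\bw,\sigma(W\x+\bb)\rangle-p(\x)|\le |\langle\bw,\sigma(W\x+\bb)\rangle-g(\x)|+|g(\x)-p(\x)|\le\epsilon+\eta$ on the intersection of events yields the claimed deviation bound $\epsilon+\eta$ with the stated failure probability.

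There is essentially no obstacle here; the proof is bookkeeping. The only small point requiring care is matching the distribution: one must verify that the rows of $(W,\bb)$ drawn from the $\beta$-Xavier distribution in Definition \ref{def:Xavier_init} exactly coincide with the product measure $\cn(0,\tfrac{1-\beta^2}{n}I_n)\otimes\cn(0,\beta^2)$ underlying the RFS for $k_{\sigma,\beta,n}$, which is immediate from the definitions. All other constants track through mechanically.
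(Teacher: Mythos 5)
Your proof is correct and is exactly what the paper intends: the paper introduces Lemma \ref{lem:rf_main} with the single phrase "Combining Lemma \ref{lem:apx_in_neural_ker} with Lemma \ref{lem:rf_eps_far} we get," and your write-up simply makes that combination explicit (approximate $p$ by $g\in\ch_{\sigma,\beta,n}$ via Lemma \ref{lem:apx_in_neural_ker}, approximate $g$ by random features via Lemma \ref{lem:rf_eps_far}, then triangle inequality), with the constants $M$ and $C=\|\sigma\|_\infty$ tracked correctly and the $\beta$-Xavier rows correctly identified as i.i.d.\ draws from the RFS measure. No gap, no divergence from the paper.
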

We next specialize Lemma \ref{lem:rf_main} for the needs of our paper and explain how it implies Lemma \ref{lem:rf_main_simp}.
Recall that for $\epsilon>0$ we defined $\frac{3}{4}
\le\beta_{\sigma,K',K}(\epsilon)<1$ as the minimal number such that if $\beta_{\sigma,K',K}(\epsilon)\le \beta<1$ then
\[
\frac{\|\sigma\|}{a_{K'}}2^{(K'+2)/2}\frac{1-\beta^2}{\sqrt{1-2(1-\beta^2)^2}}\le \frac{\epsilon}{2}
\]
We also defined 
\[
\delta_{\sigma,K',K}(\epsilon,\beta,q,M,n) = \begin{cases}
1 & \frac{4\|\sigma\|_\infty}{\epsilon\sqrt{{q}}}\cdot\frac{1}{a^2_{K'}\beta^{2K'-2K}}\left(\frac{n}{1-\beta^2}\right)^{K}M^2 > 1
\\
2\exp\left(-{q}\cdot\frac{a^4_{K'}\beta^{4K'-4K}(1-\beta^2)^{2K}\epsilon^4}{512 n^{2K}M^4\|\sigma\|_\infty^4}\right) & \text{otherwise}
\end{cases}
\]
We can now prove Lemma \ref{lem:rf_main_simp} restated which we restate next.
\begin{lemma}(Lemma \ref{lem:rf_main_simp} restated)
    Fix $\cx\subset [-1,1]^n$, a degree $K$ polynomial $p:\cx\to [-1,1]$, $K'\ge K$ and $\epsilon>0$. Let $(W,\bb)\in\reals^{q\times n}\times \reals^{q}$ be $\beta$-Xavier pair for $1>\beta\ge \beta_{\sigma,K',K}(\epsilon)$. Then there is a vector $\bw=\bw(W,\bb)\in\ball^{q}$ such that 
   \[
   \forall\x\in \cx,\;\Pr\left(|\inner{\bw,\sigma(W\x+\bb)}-p(\x)|\ge \epsilon \right) \le  \delta_{\sigma,K',K}(\epsilon,\beta,q,\|p\|_\coef,n)
   \]
\end{lemma}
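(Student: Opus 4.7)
The plan is to derive Lemma \ref{lem:rf_main_simp} as a direct specialization of Lemma \ref{lem:rf_main}. The target error tolerance $\epsilon$ in the restatement should be split into two halves: one half will absorb the systematic bias term $\frac{\|\sigma\|}{a_{K'}}2^{(K'+2)/2}\frac{1-\beta^2}{\sqrt{1-2(1-\beta^2)^2}}$ that appears in Lemma \ref{lem:rf_main}, and the other half will be the stochastic approximation error handled by concentration. The assumption $\beta\ge \beta_{\sigma,K',K}(\epsilon)$ was engineered precisely so that the bias term is bounded by $\epsilon/2$.

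First I would invoke Lemma \ref{lem:rf_main} with approximation parameter $\epsilon/2$ (and the given $p$, $\beta$, $K$, $K'$, $q$, $n$). This yields a vector $\bw=\bw(W,\bb)\in\reals^q$ such that, for each $\x\in\cx$, with probability at least $1 - 2\exp\bigl(-q\cdot a^4_{K'}\beta^{4K'-4K}(1-\beta^2)^{2K}(\epsilon/2)^4/(32 n^{2K}\|p\|_\coef^4\|\sigma\|_\infty^4)\bigr)$ we have $|\inner{\bw,\sigma(W\x+\bb)}-p(\x)|\le \epsilon/2 + \frac{\|\sigma\|}{a_{K'}}2^{(K'+2)/2}\frac{1-\beta^2}{\sqrt{1-2(1-\beta^2)^2}}\le \epsilon/2+\epsilon/2=\epsilon$. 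Expanding $(\epsilon/2)^4=\epsilon^4/16$ turns the exponent's denominator constant $32$ into $512$, which exactly matches the nontrivial branch of $\delta_{\sigma,K',K}(\epsilon,\beta,q,\|p\|_\coef,n)$.

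Second, I would verify the norm constraint $\bw\in\ball^q$. Lemma \ref{lem:rf_main} with parameter $\epsilon/2$ guarantees $\|\bw\|\le \frac{4\|\sigma\|_\infty}{\epsilon\sqrt{q}}\cdot \frac{1}{a^2_{K'}\beta^{2K'-2K}}\bigl(\frac{n}{1-\beta^2}\bigr)^K\|p\|_\coef^2$. This upper bound is precisely the quantity that appears in the ``trivial'' branch of $\delta_{\sigma,K',K}$. In the case where this quantity exceeds $1$, the definition sets $\delta_{\sigma,K',K}=1$, so the probability bound holds vacuously and we may take any $\bw\in\ball^q$ (say $\bw=0$). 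Otherwise, the bound gives $\|\bw\|\le 1$ directly, so $\bw$ from Lemma \ref{lem:rf_main} already lies in $\ball^q$.

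There is no real obstacle here; the entire argument is bookkeeping to match the two cases of the piecewise-defined $\delta_{\sigma,K',K}$ and to verify that the constant $32$ inherited from Hoeffding in Lemma \ref{lem:rf_main} becomes $512$ after the substitution $\epsilon\mapsto\epsilon/2$. The only subtlety is confirming that $1-\beta^2<1/2$ (which Lemma \ref{lem:rf_main} requires) holds under our hypothesis, and this follows from $\beta\ge\beta_{\sigma,K',K}(\epsilon)\ge 3/4$.
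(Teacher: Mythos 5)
Your proposal is correct and matches the paper's own proof: both apply Lemma \ref{lem:rf_main} with tolerance $\epsilon/2$ so that the bias term (controlled by $\beta\ge\beta_{\sigma,K',K}(\epsilon)$) absorbs the other $\epsilon/2$, turning $32$ into $512$, and both handle the norm constraint by case-splitting on whether the bound $\frac{4\|\sigma\|_\infty}{\epsilon\sqrt{q}}\cdot\frac{1}{a^2_{K'}\beta^{2K'-2K}}\left(\frac{n}{1-\beta^2}\right)^{K}\|p\|^2_{\coef}$ exceeds $1$, in which case $\delta_{\sigma,K',K}=1$ makes the claim vacuous. Your explicit check that $1-\beta^2<1/2$ via $\beta\ge 3/4$ is a detail the paper leaves implicit.
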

\begin{proof}
Fix $\x\in \cx$.
By Lemma \ref{lem:rf_main} there is a vector $\bv\in\reals^q$ such that 
{\small
\begin{equation}\label{eq:lem:rf_main_simp_1}
\Pr\left(|\inner{\bv,\sigma(W\x+\bb)}-p(\x)|\ge \epsilon \right)\le
\Pr\left(|\inner{\bv,\sigma(W\x+\bb)}-p(\x)|\ge \frac{\epsilon}{2} +  \frac{\|\sigma\|}{a_{K'}}2^{(K'+2)/2}\frac{1-\beta^2}{\sqrt{1-2(1-\beta^2)^2}}\right) 
\le \delta    
\end{equation}}
for
\begin{equation*}
    \delta = 2\exp\left(-{q}\cdot\frac{a^4_{K'}\beta^{4K'-4K}(1-\beta^2)^{2K}\epsilon^4}{512 n^{2K}\|p\|_{\coef}^4\|\sigma\|_\infty^4}\right)
\end{equation*}
Moreover
   \[
   \|\bv\|\le \frac{4\|\sigma\|_\infty}{\epsilon\sqrt{{q}}}\cdot\frac{1}{a^2_{K'}\beta^{2K'-2K}}\left(\frac{n}{1-\beta^2}\right)^{K}\|p\|^2_{\coef}
   \]
Define $\bw$ to be the projection of $\bv$ on $\ball^d$. We now split into cases. If $\frac{4\|\sigma\|_\infty}{\epsilon\sqrt{{q}}}\cdot\frac{1}{a^2_{K'}\beta^{2K'-2K}}\left(\frac{n}{1-\beta^2}\right)^{K}\|p\|^2_{\coef}\le 1$ then $\bv=\bw$ and $\delta=\delta_{\sigma,K',K}(\epsilon,\beta,q,\|p\|_\coef,n)$, so the Lemma follows from Equation \eqref{eq:lem:rf_main_simp_1}. Otherwise, we have $\delta_{\sigma,K',K}(\epsilon,\beta,q,\|p\|_\coef,n)= 1$ and the Lemma is trivially true. 
\end{proof}

\section*{Acknowledgments}
The research described in this paper was funded by the European Research Council (ERC) under the European Union’s Horizon 2022 research and innovation program (grant agreement No. 101041711), and the Simons Foundation (as part of the Collaboration on the Mathematical and Scientific Foundations of Deep Learning). The author thanks  Elchanan Mossel and Mariano Schain for useful comments.


\bibliography{bib}

\end{document}